\documentclass{article}

\PassOptionsToPackage{round}{natbib}

\usepackage[final]{neurips_2025}




\usepackage[utf8]{inputenc} 
\usepackage[T1]{fontenc}    
\usepackage{hyperref}       
\usepackage{url}            
\usepackage{booktabs}       
\usepackage{amsfonts}       
\usepackage{nicefrac}       
\usepackage{microtype}      
\usepackage{xcolor}         
\usepackage{subcaption}

\usepackage{mathtools}
\usepackage{amsmath,amsfonts,amssymb,dsfont,bm,amsthm}
\usepackage{csquotes}
\usepackage{float}
\usepackage{subcaption}
\usepackage{graphicx}
\usepackage{txfonts}
\usepackage{dsfont}
\usepackage{hyperref}
\usepackage{thm-restate}
\usepackage{soul}
\usepackage{fix-cm}
\usepackage{comment}
\usepackage{soul}

\def\*#1{\mathcal{#1}}


\newtheorem{theorem}{Theorem}
\newtheorem{lemma}{Lemma}
\newtheorem{corollary}{Corollary}
\newtheorem{proposition}{Proposition}

\newtheorem{definition}{Definition}
\newtheorem{example}{Example}

\newtheorem{notation}{Notation}

\usepackage{slashed}
\newcommand{\ind}{{\perp\!\!\!\perp}}
\newcommand{\leftcomingarrow}{{\leftarrow \! \! \! \! \bullet}}
\newcommand{\rightcomingarrow}{\bullet \! \! \! \! \rightarrow}
\newcommand{\notind}{\slashed{\ind}}
\newcommand{\V}{\mathcal{V}}
\newcommand{\E}{\mathcal{E}}
\newcommand{\X}{\mathcal{X}}
\newcommand{\Y}{\mathcal{Y}}
\newcommand{\Z}{\mathcal{Z}}

\newcommand{\Pa}{\text{Pa}}
\newcommand{\Anc}{\text{Anc}}
\newcommand{\Desc}{\text{Desc}}
\newcommand{\G}{{\mathcal{G}}}
\newcommand{\Gm}{{\G^m}}
\newcommand{\Gc}{{\G^C}}
\newcommand{\C}{\mathcal{C}}
\newcommand{\U}{\mathcal{U}}
\newcommand{\Do}{\text{do}}

\newcommand{\Gcu}{{{\G^{m}_{\text{u}}}}}

\newcommand{\Gt}{{\G^m_{\text{can}}}}
\newcommand{\Root}{\mathrm{Root}}
\newcommand{\A}{\mathcal{A}}
\newcommand{\B}{\mathcal{B}}
\newcommand{\R}{\mathcal{R}}
\newcommand{\M}{\mathcal{M}}
\newcommand{\F}{\mathcal{F}}
\newcommand{\W}{\mathcal{W}}
\newcommand{\NF}{\mathcal{NF}}

\usepackage{tikz}
\usetikzlibrary{babel, arrows, decorations, decorations.pathmorphing, decorations.pathreplacing, decorations.shapes, calc}
\usetikzlibrary{arrows.meta, positioning}
\usetikzlibrary{fit}
\usetikzlibrary{shapes.geometric}

\newcommand{\leftselfloop}{
    \!\!\!
    \begin{tikzpicture}[baseline=(X.base)]
        \node (X) at (0,0) {$\phantom{X}$};
        \path (X) edge [->, loop left, looseness=4, in=155, out=205] node {} (X);
    \end{tikzpicture} 
    \!\!\!\!\!\!\!
}

\tikzset{tochoose/.style={->,dotted, thick, >=stealth}}

\usepackage[vlined]{algorithm2e}
\RestyleAlgo{ruled}
\SetKwComment{Comment}{/* }{ */}

\title{Relaxing partition admissibility in Cluster-DAGs: \newline a causal calculus with arbitrary variable clustering}

%

\author{%
Clément Yvernes\\
  Univ Grenoble Alpes, CNRS,\\ Grenoble INP, LIG\\
\And
Emilie Devijver\\
  Univ Grenoble Alpes, CNRS,\\ Grenoble INP, LIG\\
\And
Adèle H. Ribeiro\\
  University of Münster,\\ Institute of Medical Informatics\\
\And
Marianne Clausel\\
  Université de Lorraine, CNRS,\\ CRAN, 54000 Nancy\\
\And
Eric Gaussier\\
  Univ Grenoble Alpes, CNRS,\\ Grenoble INP, LIG\\
}

\begin{document}

\maketitle

\begin{abstract}
Cluster DAGs (C-DAGs) provide an abstraction of causal graphs in which nodes represent clusters of variables, and edges encode both cluster-level causal relationships and dependencies arisen from unobserved confounding. C-DAGs define an equivalence class of acyclic causal graphs that agree on cluster-level relationships, enabling causal reasoning at a higher level of abstraction. However, when the chosen clustering induces cycles in the resulting C-DAG, the partition is deemed inadmissible under conventional C-DAG semantics. In this work, we extend the C-DAG framework to support arbitrary variable clusterings by relaxing the partition admissibility constraint, thereby allowing cyclic C-DAG representations. We extend the notions of d-separation and causal calculus to this setting, significantly broadening the scope of causal reasoning across clusters and enabling the application of C-DAGs in previously intractable scenarios. Our calculus is both sound and atomically complete with respect to the do-calculus: all valid interventional queries at the cluster level can be derived using our rules, each corresponding to a primitive do-calculus step. 
\end{abstract}

\label{sec:intro}

Knowing the effect of a treatment $X$ on an outcome $Y$, encoded by an intervention $do(X)$ in the interventional distribution $P(Y|do(X))$, is crucial in many applications. However, performing interventions is often impractical due to ethical concerns, potential harm or prohibitive costs. In such cases, one can instead aim to identify do-free formulas that estimate the effects of interventions using only   observational (non-experimental) data and  a causal graph \citep{pearl_causality_2009}. Solving the identifiability problem typically involves establishing graphical criteria under which the total effect is identifiable, and providing a do-free formula for estimating it from observational data. However, specifying a causal diagram requires prior knowledge of the causal relationships between all observed variables, a requirement that is often unmet in real-world applications. This challenge is particularly acute in complex, high-dimensional settings, limiting the practical applicability of causal inference methods.

One way to circumvent this difficulty is to rely on abstract representations which group several variables, a mapping usually referred to as causal representation learning \citep{scholkopf_CRL}, which are connected through causal relationships and dependencies arisen from unobserved confounding.
Several studies have been devoted to causal discovery of and causal inference in specific abstract representations, both for static and dynamic (time series) variables, as \citet{assaad2022,ferreira2024,anand_causal_2023,wahl2023}. Recent studies have also tackled the related problem of defining mappings from clusters to variables while preserving specific causal properties \citep{chalupka15visual, chalupka16multi, rubensteinWBMJG17, halpern19abstraction}, and have provided a thorough theoretical analysis of the relationship between micro- and macro-level causal models with a view on causal discovery assumptions \citep{wahl2024}. 

The starting point of our study is the framework recently proposed in \citet{anand_causal_2023}, which relaxes the strict requirement of a fully specified causal diagram and provides a foundation for valid inference over clusters of variables. However, it focuses on abstract graphs, called Cluster-DAGs, which do not contain cycles between clusters, a restriction known as \textit{partition admissibility}. 
We extend this framework in this paper by removing this restriction and consider arbitrary clusterings of variables, potentially resulting in abstract graphs with self-loops and cycles between clusters.
Several real‑world scenarios illustrate this concept as in macroeconomics where sector‑level relationships are known (as consumption $\rightarrow$ investment) but not the firm‑to‑firm or household‑to‑household causal links, or in neuroscience where functional MRI region interactions can be established but not necessarily the causal links between neurons.

To tackle identification in causal abstractions, recent work has introduced separation criteria that generalize d-separation to specific abstraction types \citep{jaber_causal_2024, perkovic_complete_2018, Ferreira_Assaad_2025}. While effective, this has led to a proliferation of abstraction-specific rules. Yet, all these methods operate over the same underlying object: the class of graphs compatible with the abstraction. An alternative line of work could seek to construct a transformed graph on which standard d-separation can be directly applied. This is feasible, for example, when the union of all compatible graphs is itself compatible — but such cases are rare, especially when the abstraction introduces cycles.
Our approach adopts a hybrid strategy. Rather than relying solely on path-based d-separation, we define a structure-based criterion that captures all necessary information for assessing separation. These structures avoid pitfalls such as reattaching colliders into conditioned paths and are simple to construct — typically by tracing backward along directed edges from root nodes. Crucially, our criterion remains tightly aligned with standard d-separation: every d-connecting path {can define} such a structure, and every connecting structure contains a d-connecting path. To support identification under abstraction, we introduce a two-step method grounded in a tractable search space. First, we derive an extended graph from the abstraction that, while not necessarily compatible, conservatively includes all potentially connecting structures. This enables efficient exploration using standard graph-traversal techniques. Second, a lightweight compatibility test is applied to filter out invalid structures, \textit{i.e.}, those which do not correspond to any graph in the compatible class.

Specifically, we make the following contributions:
\begin{enumerate}
    \item  We extend the framework of \citet{anand_causal_2023} by removing the assumption of partition admissibility, thereby broadening its applicability to a wider range of causal abstractions.
    \item We reformulate the d-separation criterion in an ADMG using a structure-based separation criterion that remains faithful to classical d-separation.
    \item We introduce a calculus which is sound and atomically complete.
    \item We further show that any cluster can be reduced to a cluster of limited size, leading to efficient calculus rules. 
\end{enumerate}

The remainder of the paper is structured as follows:  Section \ref{sec:notions} introduces the main notions while Section \ref{sec:calculus} presents our main result regarding sound and atomically complete calculus; Section \ref{sec:3nodes} presents an efficient way to look at causal abstractions based on clusters; lastly, Section~\ref{sec:discussion} discusses some extensions of our work while Section~\ref{sec:conclusion} concludes the paper. All proofs are provided in the Technical Appendices.

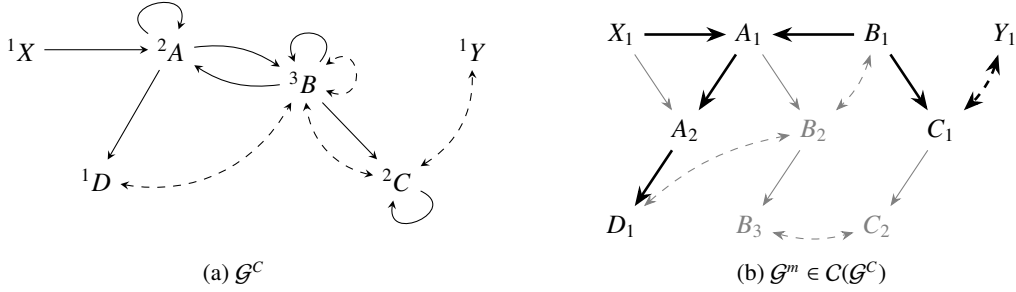
\begin{figure}[t]
    \centering
    \begin{subfigure}{0.45\textwidth}
        \centering
        \begin{tikzpicture}[ ->, >=stealth, scale=2]
        
            \node (X) at (-.5, 0.866) {${}^1X$};
            \node (A) at (0.5, 0.866) {${}^2A$};
            \node (B) at (1.3749, 0.6497) {${}^3B$};
            \node (C) at (2, 0) {${}^2C$};
            \node (D) at (0, 0) {${}^1D$};
            \node (Y) at (2.5, 0.866) {${}^1Y$};

            \draw (X) edge (A);
            \path (C) edge [<->, dashed, bend right = 30] node {} (Y);
            \path (D) edge [<->, dashed, bend right = 35] node {} (B);
            \path (C) edge [<->, dashed, bend left = 35] node {} (B);
            \path (A) edge[bend left=20] (B);
            \path (B) edge[bend left=20] (A);
            \draw (B) -- (C);
            \draw (A) -- (D);
            \path (C) edge [->, loop right, looseness=4, in=260, out=335] node {} (C);
            \path (A) edge [->, loop right, looseness=4, in=70, out=145] node {} (A);
            \path (B) edge [->, loop right, looseness=4.5, in=112, out=50] node {} (B);
            \path (B) edge [<->,dashed, loop right, looseness=4.5, in=37, out=-20] node {} (B);
        \end{tikzpicture}
        \caption{$\Gc$}
        \label{fig:example_cluster_DAG:a}
    \end{subfigure}
    \hfill
    \begin{subfigure}{0.45\textwidth}
    \centering
    \begin{tikzpicture}[->, >=Stealth, scale = .85]
    
    \node[text=black, font=\bfseries] (X) at (0,0) {$X_1$};
    \node[text=black, font=\bfseries] (A) at (2,0) {$A_1$};
    \node[text=black, font=\bfseries] (B) at (4,0) {$B_1$};
    \node[text=black, font=\bfseries] (Y) at (6,0) {$Y_1$};
    \node[text=black, font=\bfseries] (C) at (1,-1.5) {$A_2$};
    \node[text=gray] (D) at (3,-1.5) {$B_2$};
    \node[text=black, font=\bfseries] (E) at (5,-1.5) {$C_1$};
    \node[text=black, font=\bfseries] (F) at (0,-3) {$D_1$};
    \node[text=gray] (G) at (2,-3) {$B_3$};
    \node[text=gray] (H) at (4,-3) {$C_2$};
    
    \draw[->, line width=1pt] (X) -- (A);
    \draw[->, line width=1pt] (B) -- (A);
    \draw[->, line width=1pt] (A) -- (C);
    \draw[->, draw=gray] (X) -- (C);
    \draw[->, draw=gray] (A) -- (D);
    \draw[->, line width=1pt] (B) -- (E);
    \draw[->, line width=1pt] (C) -- (F);
    \draw[->, draw=gray] (D) -- (G);
    \draw[->, draw=gray] (E) -- (H);
    
    \draw[<->, dashed, draw=gray] (B) to[bend left=15] (D);
    \draw[<->, dashed, line width=1pt] (Y) to[bend left=15] (E);
    \draw[<->, dashed, draw=gray] (F) to[bend left=15] (D);
    \draw[<->, dashed, draw=gray] (G) to[bend right=15] (H);
    
    \end{tikzpicture}
    \caption{$\Gm \in \C(\Gc)$}
    \label{fig:example_cluster_DAG:b}
    \end{subfigure}
    \caption{Left: a C-DAG $\Gc = (\V^C, \E^C)$, Right: a graph $\Gm=(\V^m, \E^m)$ that is compatible with $\Gc$. For example, ${}^2A \in \V^C$ corresponds to $\{A_1,A_2\} \subseteq \V^m$. In $\Gm$, a \textit{structure of interest} (Definition \ref{def:structure_of_interest}) is highlighted in \textbf{bold black}, with all other nodes and edges shown in gray.}
    \label{fig:example_cluster_DAG}
\end{figure}

\section{Preliminaries}
\label{sec:notions}

We follow the notations of \citet{pearl_causality_2009}.
A single variable is denoted by an uppercase letter $X$ and its realized value by a small letter $x$. 
A calligraphic uppercase letter $\X$ denotes a set.  

\paragraph{Graphs.} We denote by $\Anc(\X, \G)$ and $\Desc(\X, \G)$ the sets of ancestors and descendants of $\X$ in the graph ${\G}$, respectively. By convention, each node is regarded as its own ancestor and descendant. We denote by $\Root(\G)$ the set of roots of $\G$, i.e., the vertices that have no child in $\G$.
%
A vertex $V$ is said to be \emph{active} on a path relative to a subset of variables $\Z$ if 1) $V$ is a collider and $V$ or any of its descendants are in $\Z$ or 2) $V$ is a non-collider and is not in $\Z$. A path $\pi$  is said to be \emph{active} given (or conditioned on) $\Z$ if every vertex on $\pi$ is active relative to $\Z$. Otherwise, $\pi$   is said to be \emph{inactive} given $\Z$. 
%
Given a graph $\mathcal{G}$, the sets $\X$ and $\Y$ are said to be d-separated by $\Z$ if every path between $\X$ and $\Y$ is inactive given $\Z$. We denote this by $\X \ind_{\G} \Y \mid \Z$. Otherwise, $\X$ and $\Y$ are d-connected given $\Z$, which we denote by $\X \notind_{\G} \Y \mid \Z$.
%
The mutilated graph 
 ${\mathcal{G}}_{\overline{\X}\underline{\Z}}$ is the result of
removing from a graph ${\mathcal{G}}$ edges 
with an arrowhead into $\X$ (e.g., $A \rightarrow \X$, $A \leftrightarrow \X$),  
and edges with a tail from $\Z$ (e.g., $A \leftarrow \Z$). Let $\pi$ be a path in a graph $\G$ and let A and B be two nodes of $\pi$. We denote by $\pi_{[A,B]}$, the subpath of $\pi$ between $A$ and $B$. For two graphs $\G_1 = (\V_1, \E_1)$ and $\G_2 = (\V_2, \E_2)$, the union is $\G_1 \cup \G_2 \coloneqq (\V_1 \cup \V_2, \E_1 \cup \E_2)$. We denote by $\mathcal{X} \cap \G$ the set of nodes in $\G$ that belong to $\mathcal{X}$, i.e., the intersection between $\mathcal{X}$ and the vertex set of $\G$. We denote by $\G \setminus \mathcal{X}$ the subgraph of $\G$ obtained by removing all vertices in $\mathcal{X}$ together with any edges incident to $\mathcal{X}$.

\paragraph{Structural Causal Models.} Formally, a {Structural Causal Model} (SCM) $\mathcal{M}$ is a 4-tuple $\langle \U, \V, \mathcal{F}, P(\U)\rangle$, where $\U$ is a set of exogenous (latent) mutually independent variables and $\V$ is a set of endogenous (measured) variables\footnote{The induced distribution $P$ is Markovian with respect to the graph associated to the SCM (Proposition 6.31 in \cite{Peters-book}).}. $\mathcal{F}$ is a collection of functions $\{f_i\}_{i=1}^{|\V|}$ such that each endogenous variable $V_i\in\V$ is a function $f_i\in\mathcal{F}$ of $\U_i\cup \Pa(V_i)$, 
where $\U_i\subseteq\U$ and $\Pa(V_i)\subseteq\V\setminus V_i$. The uncertainty is encoded through a probability distribution over the exogenous variables, $P(\U)$.
Each SCM $\mathcal{M}$ induces a directed acyclic graph (DAG) with bidirected edges -- or an acyclic directed mixed graph (ADMG) -- $G(\V, \*E = (\E_{D},\E_{B}))$, known as a \emph{causal diagram}, that encodes the structural relations among $\V\cup\U$, where every $V_i \in \V$ is a vertex. 
We potentially distinguish edges $\mathcal{E}$ into directed edges $\E_D$, which connect each variable $V_i \in \mathcal{V}$ to its parents $V_j \in \Pa(V_i)$ as $(V_j \rightarrow V_i)$, and bidirected edges, which appear as dashed edges $(V_j \dashleftrightarrow V_i)$ between variables $V_i, V_j \in \mathcal{V}$ that share a common exogenous parent, i.e., such that $\mathcal{U}_i \cap \mathcal{U}_j \neq \emptyset$.
%
Performing an intervention $\X\!\!=\!\! x$ is represented through the do-operator, \textit{do}($\X\!=\!x$), which represents the operation of fixing a set $\X$ to a constant $x$, and 
induces a submodel $\mathcal{M}_\X$, which is $\mathcal{M}$ with $f_X$ replaced to $x$ for every $X \in \X$. The post-interventional distribution induced by $\mathcal{M}_\X$ is denoted by $P(\V \setminus \X |do(\X))$. 

\paragraph{Cluster-DAGs}

In this study, we further develop the Cluster-DAG framework introduced in \cite{anand_causal_2023}. The individual variables, called micro-variables and denoted by $\V^m$, are grouped into clusters forming a partition $\V^C$, where each cluster contains one or more micro-variables.
\begin{definition}[Cluster-DAG]
\label{def:Cluster_DAG}
Let \(\G^m = (\V^m, \E^m)\) be an ADMG and let \(\V^C\) be a partition of \(\V^m\). We construct the mixed graph \(\Gc = (\V^C, \E^C)\), possibly with self-loops and cycles, by defining \(\E^C\) as follows. For all clusters \(V, W \in \V^C\):
\begin{itemize}
    \item $V \rightarrow W$ is in $\E^C$ \emph{if and only if} there exists $V_v$ in $V$ and $W_w$ in $W$ such that $V_v \rightarrow W_w$ in $\G^m$.
    \item $V \dashleftrightarrow W$ is in $\E^C$ \emph{if and only if} there exists $V_v$ in $V$ and $W_w$ in $W$ such that $V_v \dashleftrightarrow W_w$ in $\G^m$.
\end{itemize}
We say that $\G^m$ and $\G^C$ are compatible. 
\end{definition}

A key novelty of our approach is the allowance of cycles at the cluster level, in contrast to \cite{anand_causal_2023}, which restricts the cluster graph to be acyclic. We nonetheless retain the term “Cluster-DAG” (C-DAG) to emphasize that the underlying graph on micro-variables remain acyclic.
We denote a node in $\Gc$ by $V^C$, and its corresponding set of micro-variables in a compatible graph $\Gm$ by $V^m = \{V_1, \cdots, V_{\#V} \}$ where the indices follow a topological ordering associated with $\Gm$ (chosen arbitrarily if the ordering is not unique). We will use the same notations for any intersection or union of clusters. The cardinality of each cluster is displayed in the upper left corner of its corresponding node, as represented in Figure~\ref{fig:example_cluster_DAG:a}.

A C-DAG $\Gc$ is, by definition, derived from an ADMG over the micro-variable set \(\V^m\); however, in practical applications the true causal diagram on \(\V^m\) is typically unknown.
Then, we are interested in all the ADMGs compatible with $\Gc$.

\begin{definition}[Class of Compatible Graphs]
\label{def:equivalence_micro_admgs}
Let $\V^C$ be a partition of $\V^m$, and $\Gc$ be a mixed graph on $\mathcal{V}^C$.
We denote 
$\C(\Gc) \coloneqq \{ \Gm \mid \Gm \text{ is compatible with } \Gc\}$ the equivalence class\footnote{If we denote \( \phi(\Gm;\,\mathcal{V}^C)\)  the cluster-DAG obtained from $\Gm$ via Definition~\ref{def:Cluster_DAG}, the equivalence relation is defined by \(
  \G^m_1 \sim_{\V^C} \G^m_2
  ~\Leftrightarrow~
  \phi(\G^m_1;\,\V^C)
  = 
  \phi(\G^m_2;\,\V^C)
\).} of  graphs compatible with $\G^C$.
\end{definition}

A C-DAG is a valid causal abstraction of any underlying causal diagram on micro-variables if and only if
it contains no directed cycle composed entirely of singleton clusters, which ensure the existence of at least one ADMG compatible with the mixed graph (see Proposition~\ref{prop:valid_gc} in Appendix).

\section{A Causal Calculus for Cyclic C-DAGs}
\label{sec:calculus}
We now introduce an atomically complete calculus for reasoning about cluster queries in C-DAGs. For each rule of Pearl's calculus, Theorem \ref{th:calculus} gives a graphical criterion that is sound and complete (Theorem \ref{th:atomic_completeness}): if the separation criterion for a given rule applies, then the rule is valid in all compatible graphs; if not, then there is at least one compatible graph in which the rule fails. 
Our work is constructive: if such a graph exists, then our criterion enables its construction. To establish Theorems \ref{th:calculus} and \ref{th:atomic_completeness}, we first introduce the concept of \textit{structure of interest}, then describe the associated graphs needed to define efficiently our calculus, and finally define the corresponding calculus in the third subsection.

\subsection{Structure of interest} 

Demonstrating that a rule of Pearl’s calculus fails on a compatible graph (at least one) requires exhibiting a graph on micro-variables in which the corresponding d-separation fails. Concretely, this involves the three following steps on the micro-variables, for a given C-DAG: (i) find a path connecting variables; (ii) for each collider on that path, provide a directed path to a conditioning variable; and (iii) ensure all these paths coexist in a single compatible graph. 
While (i)–(ii) could be decided via a graphical test on the C-DAG, 
step (iii) is nontrivial since paths may conflict and form cycles. To avoid this, we directly look at 
\textit{structures of interest}, which correspond to paths to which we add for each collider a directed path to a conditioning variable. Thus, we only need to test that this structure of interest connects two sets of variables. 

Definitions~\ref{def:structure_of_interest} and~\ref{def:connecting_structure_of_interest} refine and formalize this intuition.

\begin{definition}
    \label{def:structure_of_interest}
    A \emph{structure of interest} $\sigma$ is an ADMG, with a single connected component, in which each node $V$ satisfies the following property:
    \begin{itemize}
        \item $V$ has at most one outgoing arrow, or,
        \item $V$ has two outgoing arrows but no incoming arrow.
    \end{itemize}
\end{definition}

In an ADMG, executing a breadth-first search from the root set against the arrow orientation produces a subgraph that, by construction, satisfies the first condition of Definition \ref{def:structure_of_interest}. Moreover, by enforcing the second condition at each exploration step, one obtains an efficient procedure for constructing the desired structures of interest within the ADMG.

Figure~\ref{fig:arrows_in_structure_of_interest} in Appendix shows how arrows look like around a vertex in a structure of interest. We draw in Figure~\ref{fig:example_cluster_DAG:b} in bold an example of a structure of interest in a graph on micro-variables. 

\begin{definition}[Connecting structure of interest]
\label{def:connecting_structure_of_interest}
    Let $\G=(\V,\E)$ be a mixed graph. Let $\X, \Y, \Z$ be pairwise disjoint subsets of $\V$. We say that a structure of interest $\sigma  \subseteq \G$ \emph{connects} $\X$ and $\Y$ under $\Z$ and we write $\X \notind_{\sigma} \Y \mid \Z$ if the following conditions hold:
    \begin{itemize}
        \item $\X \cap \sigma \neq \emptyset$ and $\Y \cap \sigma \neq \emptyset$ \hfill \textit{($\sigma$ connects $\X$ and $\Y$)}
        \item $\Root(\sigma) \subseteq \Z \cup \X \cup \Y$ and  \hfill \textit{(all vertices of $\sigma$ are ancestors of $\Z \cup \X \cup \Y$)}
        \item $(\sigma \setminus \Root(\sigma)) \cap \Z = \emptyset$ \hfill \textit{(neither chains nor forks of $\sigma$ are in $\Z$)}
    \end{itemize}
\end{definition}
\begin{example}
    Let us consider the graph $\Gm$ and the structure of interest $\sigma^m$ depicted in Figure~\ref{fig:example_cluster_DAG:b}. The roots of $\sigma^m$ are $\Root(\sigma^m) = \{D_1, C_1, Y_1\}$. According to Definition  \ref{def:connecting_structure_of_interest},   $\sigma^m$ connects $X_1$ and $Y_1$ under $C^m \cup D^m = \{C_1, C_2, D_1 \}$. Indeed, $\sigma^m$ contains the path $\pi^m = \langle X_1,A_1,B_1,C_1,Y_1 \rangle$ which d-connects $X_1$ and $Y_1$ under $C^m \cup D^m$ in $\Gm$.
\end{example}

We remark that a path is always a structure of interest, but it may not be a connecting structure of interest. Theorem \ref{th:new_d_sep} shows that there exists a d-connecting path if and only if there exists a connecting structure of interest. 

\begin{restatable}[D-connection with structures of interests]{theorem}{thnewdsep}
\label{th:new_d_sep}
    Let $\G$ be an ADMG. Let $\X, \Y, \Z$ be pairwise disjoint subsets of nodes of $\G$. The following properties are equivalent:
    \begin{enumerate}
        \item $\X \notind_{\G} \Y \mid \Z$. \label{th:new_d_sep:1}
        \item $\G$ contains a structure of interest $\sigma$ such that $\X \notind_{\sigma} \Y \mid \Z$. \label{th:new_d_sep:2}
    \end{enumerate}
\end{restatable}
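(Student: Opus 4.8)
The plan is to prove the two directions separately. The direction \ref{th:new_d_sep:2} $\Rightarrow$ \ref{th:new_d_sep:1} (connecting structure implies d-connection) should be the more routine one: I would extract an explicit d-connecting path from inside any connecting structure of interest $\sigma$. The reverse direction \ref{th:new_d_sep:1} $\Rightarrow$ \ref{th:new_d_sep:2} is where the real work lies, since I must build a structure of interest out of a d-connecting path by attaching, for each collider, a directed path to a conditioning variable, and then argue the result still satisfies the structural degree constraints of Definition \ref{def:structure_of_interest} together with the three conditions of Definition \ref{def:connecting_structure_of_interest}.

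For \ref{th:new_d_sep:2} $\Rightarrow$ \ref{th:new_d_sep:1}, I would start from a connecting structure of interest $\sigma$ and locate within it a path $\pi$ from some $X \in \X \cap \sigma$ to some $Y \in \Y \cap \sigma$. The key structural fact to exploit is the degree constraint: every non-root node of $\sigma$ has exactly one outgoing edge (or is a source with two outgoing edges and no incoming edge), so colliders on $\pi$ must be roots of $\sigma$, while non-colliders (chains and forks) are non-roots. I would then verify d-connection directly: by the third condition $(\sigma \setminus \Root(\sigma)) \cap \Z = \emptyset$, no chain or fork on $\pi$ lies in $\Z$, so every non-collider is active; and by the second condition $\Root(\sigma) \subseteq \Z \cup \X \cup \Y$, each collider is a root and hence (tracing forward in $\sigma$ toward a root, which is in $\Z \cup \X \cup \Y$) has a descendant in $\Z \cup \X \cup \Y$, making it active as well. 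Some care is needed because a collider being in $\X$ or $\Y$ rather than $\Z$ requires choosing endpoints of $\pi$ appropriately, but this is handled by selecting a minimal subpath between the relevant endpoints.

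For the harder direction \ref{th:new_d_sep:1} $\Rightarrow$ \ref{th:new_d_sep:2}, I would begin with a d-connecting path $\pi$ between $\X$ and $\Y$ given $\Z$. For each collider $C$ on $\pi$, d-connection guarantees a directed path from $C$ to some node of $\Z$ (or $C$ itself is an ancestor of $\Z$); I would select a shortest such directed path $\rho_C$ and take $\sigma$ to be the union $\pi \cup \bigcup_C \rho_C$. The main obstacle, exactly the difficulty flagged in the excerpt's step (iii), is that these directed paths may reintroduce structure that violates Definition \ref{def:structure_of_interest}: a node could acquire two outgoing edges while also having an incoming edge, or a directed path $\rho_C$ could create a new collider or pass through a conditioning variable prematurely. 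I would resolve this by a careful pruning argument, choosing the $\rho_C$ minimally and, where two attached paths or a path and $\pi$ share a node, merging or truncating at the first intersection so that each node keeps at most one outgoing arrow unless it is a genuine source. The bookkeeping here — ensuring simultaneously that the single-connected-component requirement holds, that $\Root(\sigma) \subseteq \Z \cup \X \cup \Y$, and that no internal (non-root) node lands in $\Z$ — is the crux of the proof and will likely require an induction on the number of colliders or a well-chosen minimality measure over the candidate structures. Once a minimal such $\sigma$ is shown to satisfy the degree constraints, the three connecting conditions follow by construction.
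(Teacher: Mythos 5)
Your overall architecture (extract a path for $2 \Rightarrow 1$; attach collider-to-$\Z$ paths and prune for $1 \Rightarrow 2$) matches the paper's, but both directions as you describe them have gaps, and the one in $2 \Rightarrow 1$ is an outright error. Your key structural claim --- ``colliders on $\pi$ must be roots of $\sigma$'' --- is false: Definition~\ref{def:structure_of_interest} permits a node with incoming arrows to have one outgoing arrow, so a collider on the extracted path may have an outgoing edge off the path (e.g.\ $\sigma = \{A \to C,\, B \to C,\, C \to D\}$ is a structure of interest whose unique root is $D$, and $C$ is a non-root collider on the path $\langle A, C, B\rangle$). Because of this, your activation argument establishes only that each collider has a descendant in $\Z \cup \X \cup \Y$, whereas d-connection demands a descendant in $\Z$; a collider whose root-descendants lie in $\X \cup \Y$ is not active. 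Your proposed repair --- restricting to a minimal subpath of $\pi$ --- cannot close this gap, since the problematic collider need not itself belong to $\X \cup \Y$: the obstruction is an off-path directed path inside $\sigma$, which no subpath of $\pi$ removes. The paper fixes this by rerouting rather than truncating: if a collider $C$ on $\pi$ is an ancestor in $\sigma$ of (say) $\X$, take the directed path $\pi^1$ from $C$ to $\X$ in $\sigma$, let $T$ be the last vertex of $\pi^1$ lying on $\pi_{[C,\Y]}$, and replace $\pi$ by $\pi^1_{[T,\X]} \cup \pi_{[T,\Y]}$; each such splice strictly reduces the number of colliders that are not ancestors of $\Z$, and iterating yields a genuinely d-connecting path.

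For $1 \Rightarrow 2$ your plan coincides with the paper's (its Lemma~\ref{lemma:remove_root} and Corollary~\ref{cor:add_paths}), but the step you yourself flag as the crux is exactly what is left unproved. Concretely: when an attached directed path is truncated at its first intersection $W$ with the current structure, $W$ may end up with one incoming and two outgoing arrows, which Definition~\ref{def:structure_of_interest} forbids; one must then delete one of $W$'s outgoing arrows, and deleting the wrong one can disconnect $\X$ from $\Y$. Showing that a safe deletion always exists is the heart of the paper's Lemma~\ref{lemma:remove_root}: a three-case analysis (depending on how a path from $X$ to the problematic root interacts with the fork at $W$) exhibits an $\X$--$\Y$ connection avoiding one of the two outgoing arrows, after which that arrow is dropped and only the connected component of $X$ is retained; the lemma's conclusions then make the iteration over problematic roots well-founded, since each application removes one such root without creating new roots outside $\Z$. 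Your appeal to ``a careful pruning argument'' and ``an induction on the number of colliders or a well-chosen minimality measure'' names the right strategy but supplies neither the safe-deletion argument nor the invariant that makes the induction terminate, so as written this direction is a plan rather than a proof.
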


Theorem~\ref{th:new_d_sep} reduces the problem of determining whether a rule from Pearl’s calculus fails for a given C-DAG to the search for a compatible graph which contains a \textit{structure of interest} that violates the corresponding d-separation.

\subsection{Associated graphs}

\begin{figure}[t]
    \begin{subfigure}{0.3\textwidth}
        \centering
        \begin{tikzpicture}[->,>=stealth, scale = 1.5]
            \node (A) at (0,0) {${}^3A$};
            \node (B) at (1,0) {${}^2B$};
            \node (blank) at (0, -.866)  {\phantom{${}^1X$}};
            \path (A) edge [bend left = 35] (B);
            \path (B) edge [bend left = 35] (A);
            \path (A) edge [->, loop left, looseness=5, in=155, out=200, min distance=0.5cm] node {} (A);
            \path (B) edge [->, loop right, looseness=5, in=20, out=65, min distance=0.5cm, draw=none] node {} (B); 
        \end{tikzpicture}
        \caption{}
        \label{fig:example_gcu:1}
    \end{subfigure}
    \hfill
    \begin{subfigure}{0.3\textwidth}
        \centering
        \begin{tikzpicture}[->,>=stealth, scale = 1.5]
            \node (A) at (0,0) {${}^1A$};
            \node (B) at (1,0) {${}^2B$};
            \node (C) at (2,0) {${}^1C$};
            \node (blank2) at (0, -.866)  {\phantom{${}^1X$}};
            \path (A) edge [bend left = 35] (B);
            \path (B) edge [bend left = 35] (A);
            \path (B) edge [bend left = 35] (C);
            \path (C) edge [bend left = 35] (B);
        \end{tikzpicture}
        \caption{}
        \label{fig:example_gcu:2}
    \end{subfigure}
    \hfill
    \begin{subfigure}{0.3\textwidth}
        \centering
        \begin{tikzpicture}[->, >=stealth, scale=1.5]
            \node (A) at (0, 0) {${}^1A$};
            \node (X) at (.5, -.866) {${}^1X$};
            \node (Y) at (.5, 0.866) {${}^1Y$};
            \node (B) at (1, 0) {${}^3B$};
            \node (Z) at (2, 0) {${}^1Z$};

            \draw (Y) edge (A);
            \draw (X) edge (A);
            \draw (A) edge (B);
            \draw (B) edge (Z);
            \path (Z) edge[bend right= 35] (A);
            \draw[<->, dashed] (B) to[bend right=35] (Z);
        \end{tikzpicture}
        \caption{}
        \label{fig:example_gcu:3}
    \end{subfigure}

    \begin{subfigure}{0.3\textwidth}
        \centering
        \begin{tikzpicture}[->,>=stealth, scale = 1.5]
            \node (A1) at (0,1) {$A_1$};
            \node (A2) at (0,0) {$A_2$};
            \node (A3) at (0,-1) {$A_3$};
            \node (B1) at (1,.5) {$B_1$};
            \node (B2) at (1,-.5) {$B_2$};
            \path (A1) edge  (A2);
            \path (A2) edge  (A3);
            \path (A1) edge [bend right = 35]  (A3);
            \path (A1) edge  (B2);
            \path (B1) edge  (A3);
            
            \tikzset{tochoose/.style={->,dotted, thick}}
            
            \path (A1) edge [bend left=20, tochoose] (B1);
            \path (B1) edge [bend left=20, tochoose] (A1);
            \path (B1) edge [bend left=20, tochoose] (A2);
            \path (A2) edge [bend left=20, tochoose] (B1);
            \path (A2) edge [bend left=20, tochoose] (B2);
            \path (B2) edge [bend left=20, tochoose] (A2);
            \path (B2) edge [bend left=20, tochoose] (A3);
            \path (A3) edge [bend left=20, tochoose] (B2);
        \end{tikzpicture}
        \caption{}
        \label{fig:example_gcu:4}
    \end{subfigure}
    \hfill
    \begin{subfigure}{0.3\textwidth}
        \centering
        \begin{tikzpicture}[->, >=stealth, scale = 1.5]
            \node (A1) at (0,0) {$A_1$};
            \node (B1) at (1,.5) {$B_1$};
            \node (B2) at (1,-.5) {$B_2$};
            \node (C1) at (2,0) {$C_1$};
            \node (blank1) at (0,1)  {}; 
            \node (blank2) at (0,-1)  {\phantom{$X_1$}};
            
            \path (A1) edge  (B2);
            \path (B1) edge  (A1);
            \path (B1) edge  (C1);
            \path (C1) edge  (B2);

        \end{tikzpicture}
        \caption{}
        \label{fig:example_gcu:5}
    \end{subfigure}
    \hfill
    \begin{subfigure}{0.3\textwidth}
        \centering
        \begin{tikzpicture}[->, >=stealth, xscale = 1.5, yscale=1.5]
            \node (A1) at (0, 0) {$A_1$};
            \node (X1) at (0, -1) {$X_1$};
            \node (Y1) at (0, 1) {$Y_1$};
            \node (B1) at (1,1) {$B_1$};
            \node (B2) at (1,0) {$B_2$};
            \node (B3) at (1,-1) {$B_3$};
            \node (Z1) at (2,0) {$Z_1$};

            \draw (Y1) edge (A1);
            \draw (X1) edge (A1);
            \draw (A1) edge (B3);
            \draw (B1) edge (Z1);
            \path (Z1) edge[bend right= 35] (A1);
            \draw[<->, dashed] (B1) to[bend left=35] (Z1);
            \draw[<->, dashed] (B2) to[bend right=30] (Z1);
            \draw[<->, dashed] (B3) to[bend right=35] (Z1);

            \draw[tochoose] (A1) edge (B2);
            \draw[tochoose] (B2) edge (Z1);
        \end{tikzpicture}
        \caption{}
        \label{fig:example_gcu:6}
    \end{subfigure}
    \caption{On the first row (Figures \ref{fig:example_gcu:1}, \ref{fig:example_gcu:2} and \ref{fig:example_gcu:3}), three examples of C-DAG are given. On the second row (respectively, Figures \ref{fig:example_gcu:4}, \ref{fig:example_gcu:5} and \ref{fig:example_gcu:6}), we represent the corresponding unfolded and canonical compatible graphs. The plain and dashed arrows corresponds to $\Gt$, whereas the dotted arrows represent the "eligible" arrows. Lemma \ref{lemma:gm_subgraph_gcu} and Figure~\ref{fig:example_gcu:5} show that there is no graph compatible with the C-DAG depicted in Figure~\ref{fig:example_gcu:2} such that $A_1$ and $C_1$ are connected by a directed path. Similarly, Proposition \ref{lemma:gm_cup_g3_compatible} and Figure~\ref{fig:example_gcu:6} show that there is no graph $\Gm$ compatible with the C-DAG depicted in Figure~\ref{fig:example_gcu:3} such that $A_1 \in \Anc(Z_1, \Gm)$. }\label{fig:example_gcu}
\end{figure}
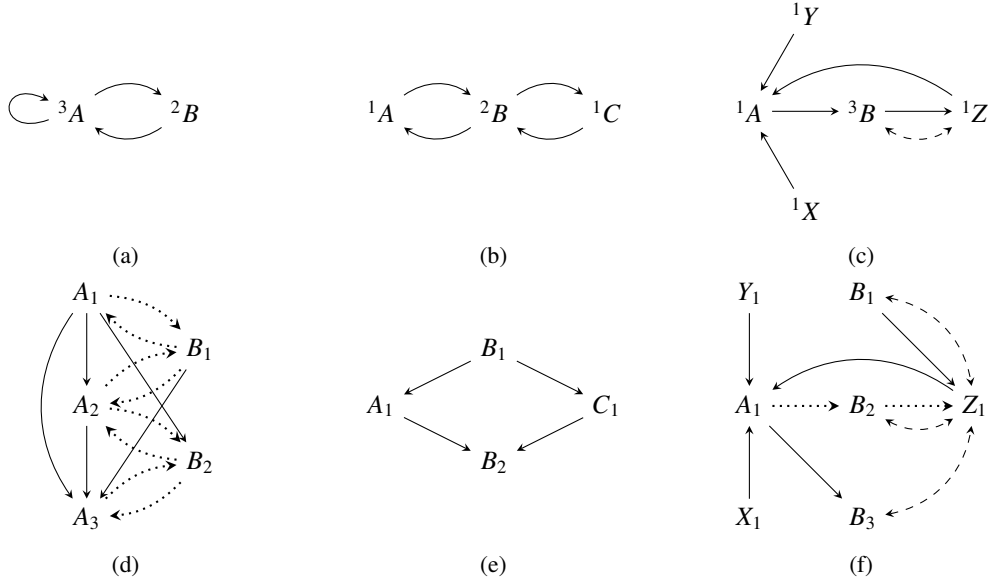

Enumerating all compatible graphs is generally infeasible due to their potentially large number. To address this challenge, we define two mixed graphs. The canonical compatible graph (Definition~\ref{def:gm_min}) allows efficient verification of whether given structures of interest exist in some compatible graph. The unfolded graph (Definition~\ref{def:unfolded_graph}) aggregates all structures of interest present in at least one compatible graph. To look for a structure of interest, we first check in the unfolded graph. However, since it may include spurious structures that do not correspond to any actual compatible graph, we afterward filter out these spurious structures from the canonical compatible graph.

\begin{definition}[Canonical Compatible Graph]
\label{def:gm_min}
    Let $\Gc$ be a C-DAG. Its corresponding \emph{canonical compatible graph} is the ADMG $\Gt = \left( \mathcal{V}^m_{\text{can}}, \mathcal{E}^m_{\text{can}}  \right)$,  where
the set of nodes is $\mathcal{V}^m_{\text{can}} := \V^m $, and the set of edges is constructed by the following procedure:
        \begin{enumerate}
            \item For all dashed-bidirected-arrows $V^C \dashleftrightarrow W^C$ in $\Gc$, add the dashed-bidirected-arrows $V_v \dashleftrightarrow W_w$ for all $v,w \in \{1,\cdots, \#V^C\} \times \{1,\cdots, \#W^C\}$ such that $V_v \neq W_w$.\label{def:gm_min:1}
            
            \item For all self-loop $\leftselfloop V^C$, add the arrow $V_i \rightarrow V_j$ for all $i,j \in \{1,\cdots, \#V^C\}^2$ such that $i <j$. \label{def:gm_min:2}
            
            \item For all arrows $V^C \rightarrow W^C$, with $V^C \neq W^C$, add the arrow $V_1 \rightarrow W_{\#W^C}$. \label{def:gm_min:3}
        \end{enumerate}
\end{definition}
Examples of canonical compatible graphs are given in Figure~\ref{fig:example_gcu}.
As stated in Proposition~\ref{prop:gm_min}, the canonical compatible graph is itself compatible and canonical in the sense that it can be added to any compatible graph without violating compatibility.

\begin{restatable}{proposition}{propgmmin}
\label{prop:gm_min}
     Let $\Gc$ be a C-DAG and $\Gt$ be its corresponding canonical compatible graph. Then, the following properties hold:
     \begin{enumerate}
         \item $\Gt \in \C(\Gc)$.
         
         \item For all $\Gm \in \C(\Gc)$,  $\Gt \cup \Gm \in \C(\Gc)$.
     \end{enumerate}
\end{restatable}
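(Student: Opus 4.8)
The plan is to treat each claim as the conjunction of two facts: that the cluster projection $\phi(\,\cdot\,;\V^C)$ of the relevant graph equals $\Gc$, and that the relevant graph is a genuine ADMG, i.e.\ its directed part is acyclic. The projection part is routine. For part~1 I would inspect the three steps of Definition~\ref{def:gm_min}: step~1 reproduces exactly the bidirected edges of $\Gc$; step~2 inserts within-cluster directed edges precisely for clusters carrying a self-loop (singletons carry none, since a micro self-loop is forbidden in an ADMG); step~3 inserts a directed micro-edge across $V^C,W^C$ exactly when $V^C\to W^C\in\Gc$. Since no step witnesses a cluster-edge absent from $\Gc$ and every edge of $\Gc$ is witnessed, $\phi(\Gt;\V^C)=\Gc$. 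For the union I would use that an edge appears in the cluster graph iff some witnessing micro-edge appears, so $\phi$ distributes over unions: $\phi(\Gt\cup\Gm;\V^C)=\phi(\Gt;\V^C)\cup\phi(\Gm;\V^C)=\Gc\cup\Gc=\Gc$, using $\phi(\Gm;\V^C)=\Gc$ from $\Gm\in\C(\Gc)$ and part~1 for $\Gt$.

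The real content is acyclicity, and the key device is to index each cluster by a topological order of the micro-graph one unions against. For part~2 I fix a global topological order $\prec$ of the (acyclic) directed part of $\Gm$ and index every cluster so that $V_1\prec\cdots\prec V_{\#V}$ agrees with $\prec$. I then claim every directed edge of $\Gt\cup\Gm$ increases $\prec$, which forbids directed cycles. This is immediate for the edges of $\Gm$ (by choice of $\prec$) and for the within-cluster edges $V_i\to V_j$, $i<j$, of $\Gt$ (by the indexing). The decisive case is the cross-cluster edge $V_1\to W_{\#W}$ added in step~3 for a cluster-edge $V^C\to W^C$: compatibility of $\Gm$ guarantees a witnessing micro-edge $V_a\to W_b$ in $\Gm$, hence $V_a\prec W_b$, and combining with $V_1\preceq V_a$ ($V_1$ is $\prec$-minimal in $V^C$) and $W_b\preceq W_{\#W}$ ($W_{\#W}$ is $\prec$-maximal in $W^C$) gives $V_1\preceq V_a\prec W_b\preceq W_{\#W}$, so $V_1\prec W_{\#W}$. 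This ``squeeze'' is exactly where the first-to-last design of the canonical cross-edges pays off, and I expect it to be the main obstacle to state cleanly.

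For part~1 I would run the same argument against any fixed $\Gm_0\in\C(\Gc)$, which exists because $\Gc$, being built from an ADMG, contains no directed cycle of singletons (Proposition~\ref{prop:valid_gc}): indexing $\Gt$'s clusters by a topological order of $\Gm_0$ and reusing the squeeze shows every edge of $\Gt$ respects that order, so $\Gt$ is acyclic. Alternatively, $\Gt$'s acyclicity can be argued self-containedly: a directed cycle in $\Gt$ must enter each cluster it passes through at its last index (cross-edges target $W_{\#W}$) and leave at its first index (cross-edges originate at $V_1$), while within-cluster edges only increase the index; traversing a cluster from last to first is therefore possible only for singletons, forcing an all-singleton cluster cycle and contradicting validity.

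Assembling, $\Gt$ (resp.\ $\Gt\cup\Gm$) is an ADMG whose projection is $\Gc$, hence lies in $\C(\Gc)$, proving both items. The one piece of bookkeeping I would be careful about is that the indexing defining $\Gt$ must be the one associated with the graph being unioned against, so that $\Gt$'s within-cluster edges never oppose a $\Gm$-edge; a single fixed indexing cannot serve all $\Gm$ simultaneously, as a two-element self-looping cluster may be oriented either way across compatible graphs.
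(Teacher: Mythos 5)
Your proposal is correct and follows essentially the same route as the paper: your ``squeeze'' via cluster indices aligned with a topological order of $\Gm$ is exactly what the paper packages as Propositions~\ref{prop:move_arrow_up} and~\ref{prop:move_arrow_down} under the indexing convention of Notation~\ref{notation:base}, and your self-contained acyclicity argument for $\Gt$ (a directed cycle must enter each cluster at its last index and leave at its first, forcing an all-singleton cluster cycle that contradicts Proposition~\ref{prop:valid_gc}) is precisely the paper's proof of part~1. The only cosmetic difference is that you can alternatively obtain part~1 as a corollary of the part-2 argument applied to an arbitrary compatible graph, whereas the paper proves part~1 first and independently; your closing remark about the indexing having to depend on $\Gm$ matches the paper's standing convention.
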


Consequently, if a structure of interest exists in some compatible graph, it must also coexist with the canonical compatible graph, meaning that its addition to the canonical compatible graph does not create cycles\footnote{Let $\sigma^m \subseteq \Gm$. Since $\Gm \cup \Gt$ is acyclic, $\sigma^m\cup \Gt$ is also acyclic.}.

We now introduce the unfolded graph. 

\begin{definition}[Unfolded graph]
\label{def:unfolded_graph}
    Let $\Gc$ be a C-DAG and $\Gt =  (\mathcal{V}^m_{\text{can}}, \mathcal{E}^m_{\text{can}})$ be its corresponding canonical compatible graph. Its corresponding \emph{unfolded graph} is $\Gcu = \left( \mathcal{V}_{\text{u}}, \mathcal{E}_{\text{u}}  \right)$, the mixed graph defined by the following procedure:
    \begin{itemize}
        \item $\mathcal{V}_{\text{u}} \coloneqq  \mathcal{V}^m$.
        
        \item Let us consider the following set:
        \[
            \E_{\text{eligible}} \coloneqq \left\{
            V_v \rightarrow W_w \,\middle|\,
            \begin{cases}
                V^C \rightarrow W^C \subseteq \Gc, \text{ and,} \\
                \Gt \cup \left\{V_v \rightarrow W_w\right\}   \text{ is acyclic.} 
            \end{cases}
            \right\}
        \]

        \noindent Then $\E_u \coloneqq \E^m_{\text{can}} \cup \E_{\text{eligible}}$.
    \end{itemize}
\end{definition}
Examples of unfolded graphs are given in Figure~\ref{fig:example_gcu}. As shown in Proposition~\ref{lemma:gm_subgraph_gcu}, the unfolded graph is a supergraph of any compatible graph. Therefore, if there exists a compatible graph containing a structure of interest $\sigma^m$, it follows that $\sigma^m$ must also appear in the unfolded graph. This implies that it is no longer necessary to enumerate all compatible graphs in the search for a structure of interest; instead, it suffices to search within the unfolded graph alone.  Figure~\ref{fig:example_gcu} illustrates how the unfolded graph can be used to demonstrate the non-existence of certain structures within the set of compatible graphs.

\begin{restatable}{proposition}{propgcu}
\label{lemma:gm_subgraph_gcu}
    Let $\Gc$ be a C-DAG and $\Gcu$ be its corresponding unfolded graph. Then, any compatible graph $\Gm$ is a subgraph of $\Gcu$ up to a permutation of indices in each cluster.
\end{restatable}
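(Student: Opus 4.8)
The plan is to exhibit a within-cluster permutation $\rho$ of the micro-variables of $\Gm$ for which every edge of $\rho(\Gm)$ already appears in $\Gcu$, whose edge set is $\E_u = \E^m_{\text{can}} \cup \E_{\text{eligible}}$. Since $\rho(\Gm)$ and $\Gcu$ share the vertex set $\V^m$, it is enough to treat the three possible edge types separately: bidirected, intra-cluster directed, and inter-cluster directed. I would fix $\rho$ using the second statement of Proposition~\ref{prop:gm_min}: it guarantees $\Gt \cup \Gm \in \C(\Gc)$, and concretely there is a within-cluster relabeling $\rho$ for which $\Gt \cup \rho(\Gm)$ is an acyclic ADMG. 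This single $\rho$ is the one I will check against in all three cases; note also that $\rho(\Gm) \in \C(\Gc)$, since permuting indices inside clusters leaves the cluster-level relations unchanged, so every cluster-level edge invoked below is genuinely present in $\Gc$.

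The bidirected and intra-cluster cases are immediate from the construction of $\Gt$. If $V_v \dashleftrightarrow W_w \in \rho(\Gm)$, then compatibility gives $V^C \dashleftrightarrow W^C \in \Gc$ (with $V^C = W^C$ allowed, i.e.\ a bidirected self-loop), and Step~\ref{def:gm_min:1} of Definition~\ref{def:gm_min} places every bidirected edge $V_v \dashleftrightarrow W_w$ with $V_v \neq W_w$ into $\Gt \subseteq \Gcu$; in particular $V_v \dashleftrightarrow W_w \in \Gcu$. If $V_v \to V_{v'} \in \rho(\Gm)$ is intra-cluster, compatibility forces a self-loop on $V^C$, so by Step~\ref{def:gm_min:2} the full low-to-high tournament $\{V_i \to V_j : i<j\}$ sits inside $\Gt$. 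Were $v' < v$, the edge $V_v \to V_{v'}$ of $\rho(\Gm)$ together with $V_{v'} \to V_v$ of $\Gt$ would form a $2$-cycle in $\Gt \cup \rho(\Gm)$, contradicting acyclicity; hence $v<v'$ and $V_v \to V_{v'} \in \Gt \subseteq \Gcu$.

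The inter-cluster directed edges are the crux. Let $V_v \to W_w \in \rho(\Gm)$ with $V^C \neq W^C$; compatibility gives $V^C \to W^C \in \Gc$, so $V_v \to W_w$ is a candidate for $\E_{\text{eligible}}$ and the only condition left to verify is the acyclicity clause $\Gt \cup \{V_v \to W_w\}$. This is immediate from the choice of $\rho$: since $\{V_v \to W_w\} \subseteq \rho(\Gm)$ we have $\Gt \cup \{V_v \to W_w\} \subseteq \Gt \cup \rho(\Gm)$, and any subgraph of the acyclic graph $\Gt \cup \rho(\Gm)$ is acyclic. Therefore $V_v \to W_w \in \E_{\text{eligible}} \subseteq \Gcu$. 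Having covered all three edge types, $\rho(\Gm) \subseteq \Gcu$, which is the claim.

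The main obstacle, and the reason the statement is only up to a within-cluster permutation, is that a single $\rho$ must do two jobs at once: orient every intra-cluster edge of $\Gm$ low-to-high so that it matches the canonical tournament of $\Gt$, and simultaneously keep every inter-cluster edge eligible, i.e.\ acyclic against $\Gt$. What makes one $\rho$ suffice is that eligibility is characterised purely by acyclicity relative to $\Gt$, so it is enough to merge $\Gm$ and $\Gt$ into one acyclic graph. Concretely, the alignment works because the canonical inter-cluster edge $V_1 \to W_{\#W^C}$ is the most conservative choice, running from the topological source of $V^C$ to the topological sink of $W^C$: for any witnessing edge $V_a \to W_b$ of $\Gm$ one has $V_1 \preceq V_a \prec W_b \preceq W_{\#W^C}$ in a topological order of $\Gm$, so taking $\rho$ to be such an order makes all of $\Gt$'s directed edges respect it and the union acyclic. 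A careful write-up would record this ordering argument in place of the appeal to Proposition~\ref{prop:gm_min}, or simply invoke the latter.
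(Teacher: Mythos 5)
Your proof is correct and follows essentially the same route as the paper's: re-index each cluster along a topological order of $\Gm$ (the paper's Notation on indices, which is exactly what makes $\Gt \cup \Gm$ acyclic via Lemma~\ref{lemma:gm_cup_g3_compatible}), then check the three edge types against Steps~\ref{def:gm_min:1}--\ref{def:gm_min:3} of Definition~\ref{def:gm_min} and the eligibility clause, with the inter-cluster case settled by the subgraph-of-acyclic argument. Your only deviations are cosmetic — deriving the low-to-high orientation of intra-cluster edges from a $2$-cycle contradiction rather than directly from the topological order, and absorbing the paper's ``canonical vs.\ eligible'' case split into one uniform eligibility argument, which is valid since the canonical edges are themselves eligible.
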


The unfolded graph defines the search space for the structures of interest, while the canonical compatible graph, as stated in Proposition \ref{prop:gcugcan}, ensures that these structures can indeed be realized within a compatible graph.

\begin{restatable}{proposition}{propgcuetgcan}
\label{prop:gcugcan}
Let $\Gc$ be a C-DAG and $\Gcu$ be its corresponding unfolded graph. Let $\sigma^m$ be a structure of interest in $\Gcu$. If $\Gt \cup \sigma^m$ is acyclic, then $\Gt \cup \sigma^m$ is compatible with $\Gc$.
\end{restatable}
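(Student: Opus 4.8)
The plan is to unpack the definition of compatibility (Definition~\ref{def:Cluster_DAG}) and verify its two ingredients for $\Gt \cup \sigma^m$: that it is a legitimate ADMG on the micro-variable set $\V^m$, and that its cluster projection equals $\Gc$ exactly. The first ingredient is essentially free. Both $\Gt$ and $\sigma^m$ have vertices in $\V^m$ (indeed $\sigma^m \subseteq \Gcu$ and $\V_{\text{u}} = \V^m$), so $\Gt \cup \sigma^m$ is a mixed graph on $\V^m$, and the only property needed to make it an ADMG — acyclicity — is exactly what we are handed as a hypothesis. The entire content of the proposition therefore lies in checking the projection, which I would split into the two inclusions ``every edge of $\Gc$ is realized'' and ``no spurious cluster-edge is created''.

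For the first inclusion I would invoke Proposition~\ref{prop:gm_min}, which gives $\Gt \in \C(\Gc)$: the cluster projection of $\Gt$ is already exactly $\Gc$, so every directed edge $V^C \rightarrow W^C$ and every bidirected edge $V^C \dashleftrightarrow W^C$ of $\Gc$ has a witnessing micro-edge inside $\Gt \subseteq \Gt \cup \sigma^m$. Since adding the edges of $\sigma^m$ can only enlarge the projection, all of $\Gc$ remains realized by $\Gt \cup \sigma^m$.

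The crux is the reverse inclusion: I must show $\sigma^m$ introduces no micro-edge whose projection falls outside $\Gc$. Here I would use $\sigma^m \subseteq \Gcu$ together with the construction of the unfolded graph, $\E_{\text{u}} = \E^m_{\text{can}} \cup \E_{\text{eligible}}$. Each eligible directed edge $V_v \rightarrow W_w$ carries, by definition, the requirement $V^C \rightarrow W^C \subseteq \Gc$, so it projects into $\Gc$ by construction. Each edge of $\E^m_{\text{can}}$ projects into $\Gc$ because $\Gt$ is compatible (Proposition~\ref{prop:gm_min}); concretely, the three steps of Definition~\ref{def:gm_min} add only within-cluster directed edges for clusters carrying a self-loop (projecting to that self-loop), cross-cluster directed edges $V_1 \rightarrow W_{\#W^C}$ for edges $V^C \rightarrow W^C$ of $\Gc$, and bidirected edges for bidirected edges of $\Gc$. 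In particular the bidirected edges of $\Gcu$ coincide with those of $\Gt$, so $\sigma^m$ cannot create a new bidirected cluster-edge either. Consequently every edge of $\Gt \cup \sigma^m$ — whether inherited from $\Gt$ or supplied by $\sigma^m$ — projects to an edge already present in $\Gc$, which is the desired inclusion.

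Combining the two inclusions shows the cluster projection of $\Gt \cup \sigma^m$ is exactly $\Gc$, and together with the hypothesized acyclicity this yields $\Gt \cup \sigma^m \in \C(\Gc)$. I do not expect a serious obstacle: Proposition~\ref{prop:gm_min} and the construction of $\Gcu$ do the heavy lifting, and the unfolded graph was designed precisely so that all of its edges project into $\Gc$. The one place that warrants care is the bookkeeping of within-cluster and self-loop edges in Definition~\ref{def:gm_min}, ensuring their projections (self-loops of $\Gc$) are treated uniformly with the cross-cluster case so that no projected edge is overlooked.
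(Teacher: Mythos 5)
Your proposal is correct and follows essentially the same route as the paper's proof: both invoke Proposition~\ref{prop:gm_min} to guarantee that $\Gt \subseteq \Gt \cup \sigma^m$ already realizes every edge of $\Gc$, use the construction of $\Gcu$ (eligible edges require $V^C \rightarrow W^C$ in $\Gc$, and the remaining edges come from $\Gt$) to rule out spurious cluster-edges, and then use the hypothesized acyclicity to conclude that $\Gt \cup \sigma^m$ is an ADMG compatible with $\Gc$. Your version merely spells out the two projection inclusions and the bidirected-edge bookkeeping more explicitly than the paper does.
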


By keeping these two notions distinct, we are able to apply mutilations directly on the unfolded graph without restricting the overall class of compatible graphs. 

\subsection{Calculus}
It is important to note that, in general, mutilating all graphs compatible with a C-DAG yields a strictly smaller set of graphs than the set of graphs compatible with the mutilated C-DAG, as illustrated in Example~\ref{ex:commute_pas}.\footnote{A similar phenomenon was observed by \cite{zhang_causal_2008} in the context of ancestral graphs.} This means that one cannot do the  do-calculus on the class of graphs defined by the mutilated C-DAG.

\begin{example}
\label{ex:commute_pas}
Let us consider the C-DAG $\Gc \coloneqq {}^2A \rightleftarrows {}^1B$. We have the following identities:
\begin{itemize}
    \item $\left\{\Gm_{\underline{B_1}} \mid \Gm \in \C(\Gc) \right\} = \left\{ 
    \begin{tikzpicture}[scale=0.4, baseline={(current bounding box.center)}]
        \node (A1) at (0,1) {$A_1$};
        \node (A2) at (0,-1) {$A_2$};
        \node (B1) at (2,0) {$B_1$};
        \draw[->] (A1) -- (B1);
    \end{tikzpicture}, \quad
    \begin{tikzpicture}[scale=0.5, baseline={(current bounding box.center)}]
        \node (A1) at (0,1) {$A_1$};
        \node (A2) at (0,-1) {$A_2$};
        \node (B1) at (2,0) {$B_1$};
        \draw[->] (A2) -- (B1);
    \end{tikzpicture}
    \right\}$
    
    \item $\C\left(\Gc_{\underline{B^C}}\right) = \left\{
    \begin{tikzpicture}[scale=0.4, baseline={(current bounding box.center)}]
        \node (A1) at (0,1) {$A_1$};
        \node (A2) at (0,-1) {$A_2$};
        \node (B1) at (2,0) {$B_1$};
        \draw[->] (A1) -- (B1);
    \end{tikzpicture}, \quad
    \begin{tikzpicture}[scale=0.5, baseline={(current bounding box.center)}]
        \node (A1) at (0,1) {$A_1$};
        \node (A2) at (0,-1) {$A_2$};
        \node (B1) at (2,0) {$B_1$};
        \draw[->] (A2) -- (B1);
    \end{tikzpicture}, \quad
    \begin{tikzpicture}[scale=0.5, baseline={(current bounding box.center)}]
        \node (A1) at (0,1) {$A_1$};
        \node (A2) at (0,-1) {$A_2$};
        \node (B1) at (2,0) {$B_1$};
        \draw[->] (A1) -- (B1);
        \draw[->] (A2) -- (B1);
    \end{tikzpicture}
    \right\}$
\end{itemize}

And then, $\left\{\Gm_{\underline{B_1}} \mid \Gm \in \C(\Gc) \right\} \subsetneqq \C\left(\Gc_{\underline{B^C}}\right) $.
\end{example}

However, thanks to the unfolded graph and the canonical compatible graph, the rules of do-calculus for cluster queries can be encoded in a sound and complete manner. This principle is formally established in Theorems~\ref{th:calculus} and~\ref{th:atomic_completeness}. An illustrative application of Theorem~\ref{th:calculus} is provided in Example \ref{example:calculus} (a complementary example is provided in Figure~\ref{fig:th:cluster_sep_cluster_mutilation} in  Appendix).

\begin{restatable}[Calculus]{theorem}{thcalculus}
\label{th:calculus}
Let \(\Gc\) be a C-DAG and let $\Gcu$ be its corresponding unfolded graph. Let \(\X^C,\Y^C,\Z^C,\mathcal{W}^C\) be pairwise distinct subsets of nodes.  Then, for any density
$P$ induced by a SCM compatible with $\Gc$\footnote{In Pearl's framework, we would consider DAGs and any positive, compatible density.}, the following rules apply:
\begin{enumerate}
  \item[R1.] \(
      P\bigl(\mathbf{y^m} \mid \Do(\mathbf{w^m}),\mathbf{x^m}, \mathbf{z^m}) = P\bigl(\mathbf{y^m} \mid \Do(\mathbf{w^m}),\mathbf{z^m})
    \)
     if $\Gcu_{\overline{\W^m}}$ does not contain a structure of interest $\sigma^m$ such that $ \X^m \notind_{\sigma^m} \Y^m \mid \W^m,\Z^m$ and $\Gt \cup \sigma^m$ is acyclic.

  \item[R2.] \(
      P\bigl(\mathbf{y^m} \mid \Do(\mathbf{w^m}), \Do(\mathbf{x^m}), \mathbf{z^m}) = P\bigl(\mathbf{y^m} \mid \Do(\mathbf{w^m}),\mathbf{x^m}, \mathbf{z^m})
    \)
    if $\Gcu_{\overline{\W^m},\underline{\X^m}}$ does not contain a structure of interest $\sigma^m$ such that $\X^m \notind_{\sigma^m} \Y^m \mid \W^m,\Z^m$ and  $\Gt \cup \sigma^m$ is acyclic.

     \item[R3.] \(
      P\bigl(\mathbf{y^m} \mid \Do(\mathbf{w^m}), \Do(\mathbf{x^m}), \mathbf{z^m}) = P\bigl(\mathbf{y^m} \mid \Do(\mathbf{w^m}),\mathbf{z^m})
    \) if \( {\Gcu}_{\overline{\W^m}} \) does not contain a structure of interest $\sigma^m$ such that $ \X^m \notind_{\sigma^m} \Y^m \mid \W^m,\Z^m$, $\Gt \cup \sigma^m $ is acyclic and $\Root(\sigma^m) \subseteq (\W^m \cup \Z^m) \cup \Y^m$.
    
\end{enumerate}
 \end{restatable}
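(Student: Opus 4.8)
The plan is to reduce each cluster-level rule to Pearl's corresponding do-calculus rule applied in every compatible micro-graph, and to establish the required micro-level $d$-separations by contraposition through structures of interest. Fix an SCM $\M$ compatible with $\Gc$ and let $\Gm \in \C(\Gc)$ be its causal diagram. Each identity R1--R3 is exactly the conclusion of Pearl's matching rule on $\Gm$, so it suffices to show that the stated cluster-level criterion forces the associated $d$-separation (in the appropriate mutilation of $\Gm$) to hold for \emph{every} $\Gm \in \C(\Gc)$. I argue the contrapositive: if the micro-level $d$-separation fails in some $\Gm$, I produce a connecting structure of interest inside the mutilated unfolded graph that coexists with $\Gt$, contradicting the criterion. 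Two facts drive the transfer. Proposition~\ref{lemma:gm_subgraph_gcu} gives that $\Gm$ is a subgraph of $\Gcu$ up to a within-cluster permutation of indices; since the cluster-sets $\X^m,\Y^m,\Z^m,\W^m$ are unions over whole clusters they are invariant under such permutations, so I may assume outright $\Gm \subseteq \Gcu$. Proposition~\ref{prop:gm_min} (item~2) gives that $\Gt \cup \Gm$ is acyclic, so any $\sigma^m$ I extract, being a subgraph of $\Gm$, automatically satisfies the requirement that $\Gt \cup \sigma^m$ be acyclic.

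For R1 and R2 the mutilations coincide with Pearl's, so the argument is direct. Suppose for R1 that $\X^m \notind_{\Gm_{\overline{\W^m}}} \Y^m \mid \W^m,\Z^m$ for some compatible $\Gm$. By Theorem~\ref{th:new_d_sep} there is a connecting structure of interest $\sigma^m \subseteq \Gm_{\overline{\W^m}}$ with $\X^m \notind_{\sigma^m}\Y^m \mid \W^m,\Z^m$. Since mutilation commutes with the subgraph relation, $\Gm_{\overline{\W^m}} \subseteq \Gcu_{\overline{\W^m}}$, hence $\sigma^m \subseteq \Gcu_{\overline{\W^m}}$; and $\sigma^m \subseteq \Gm$ yields $\Gt \cup \sigma^m \subseteq \Gt \cup \Gm$ acyclic. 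This contradicts the R1 criterion. R2 is identical after replacing $\Gm_{\overline{\W^m}}$ by $\Gm_{\overline{\W^m},\underline{\X^m}}$ and noting that cutting tails from $\X^m$ again preserves $\Gm_{\overline{\W^m},\underline{\X^m}} \subseteq \Gcu_{\overline{\W^m},\underline{\X^m}}$.

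R3 is the delicate case, because Pearl's rule mutilates $\Gm_{\overline{\W^m},\overline{\X^m(\Z^m)}}$, additionally cutting arrowheads into the $\X^m$-nodes that are \emph{not} ancestors of $\Z^m$ in $\Gm_{\overline{\W^m}}$, whereas our criterion mutilates only $\Gm_{\overline{\W^m}}$ and imposes the extra condition $\Root(\sigma^m) \subseteq \W^m \cup \Z^m \cup \Y^m$. Because $\X^m,\Y^m,\Z^m,\W^m$ are disjoint, this root condition is equivalent to forbidding any $\X^m$-node from being a root (sink) of $\sigma^m$. Starting the contrapositive from a connecting structure of interest $\sigma^m$ in $\Gm_{\overline{\W^m},\overline{\X^m(\Z^m)}}$ (Theorem~\ref{th:new_d_sep}), I repair each $\X^m$-root $x$: as a path endpoint that is a sink, $x$ carries an arrowhead on its incident edge, and since that edge survived the $\overline{\X^m(\Z^m)}$ cut, $x \in \Anc(\Z^m,\Gm_{\overline{\W^m}})$. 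Appending a shortest directed path $x \to \cdots \to z$ with $z \in \Z^m$ (which lies in $\Gm_{\overline{\W^m}} \subseteq \Gcu_{\overline{\W^m}}$) makes $x$ a non-root; its minimality keeps all intermediate descent nodes out of $\Z^m$, and in $\Gm_{\overline{\W^m}}$ such a descent cannot pass through $\W^m$ (whose nodes have no incoming edges), so the ``no non-root in the conditioning set'' clause of Definition~\ref{def:connecting_structure_of_interest} is preserved. The resulting $\sigma'^m \subseteq \Gcu_{\overline{\W^m}}$ is then a connecting structure of interest satisfying the root condition, with $\Gt \cup \sigma'^m$ acyclic, contradicting the R3 criterion.

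The main obstacle is precisely this R3 repair step. One must verify that appending the directed descents to $\Z^m$ (i) keeps $\sigma'^m$ a single connected ADMG meeting one of the two degree options of Definition~\ref{def:structure_of_interest} at every affected vertex, (ii) simultaneously handles all $\X^m$-roots, including internal $\X^m$-nodes or descent targets that already carry two incoming arrows, and (iii) resolves conflicts when an appended descent reuses vertices or edges already present in $\sigma^m$ (which could otherwise create a vertex with two outgoing arrows and an incoming arrow); as in the proof of Theorem~\ref{th:new_d_sep}, this is settled by choosing the descents so as to enter the existing structure only at admissible vertices. This degree bookkeeping is the only genuinely technical part; R1, R2, and the transfer arguments for all three rules follow mechanically from Theorem~\ref{th:new_d_sep} together with Propositions~\ref{lemma:gm_subgraph_gcu} and~\ref{prop:gm_min}.
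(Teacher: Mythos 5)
Your proposal is correct and follows essentially the same route as the paper's own proof: contraposition through Pearl's rules applied in each compatible graph, Theorem~\ref{th:new_d_sep} to extract a connecting structure of interest, the subgraph relation $\Gm \subseteq \Gcu$ (Proposition~\ref{lemma:gm_subgraph_gcu}) together with acyclicity of $\Gt \cup \Gm$ (Proposition~\ref{prop:gm_min}) for the transfer, and, for R3, the key observation that any $\X^m$-root of the structure carries an incoming arrowhead surviving the $\overline{\X^m(\Z^m)}$ cut and hence lies in $\Anc(\Z^m,\Gm_{\overline{\W^m}})$, so it can be eliminated by appending a directed path into $\Z^m$. The bookkeeping you defer is exactly what the paper isolates in Lemma~\ref{lemma:remove_root} (the same lemma underlying Theorem~\ref{th:new_d_sep} via Corollary~\ref{cor:add_paths}); the only inaccuracy is your gloss that conflicts are resolved by \emph{choosing} descents that enter the existing structure at admissible vertices --- this is not always possible, and the lemma instead truncates the descent at its first intersection with the structure and, if that entry vertex becomes inadmissible, repairs it by deleting one of its unused outgoing arrows and pruning to the relevant connected component.
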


The first two rules of Theorem \ref{th:calculus} are very similar to the first two rules of Pearl’s do-calculus. In contrast, the third rule of Pearl’s do-calculus requires verifying the d-separation condition  \(
\Y^m \ind_{\Gm_{\overline{\W^m}, \overline{\X^m (\Z^m)}}} X^m \mid \W^m, \Z^m,
\) in all compatible graph $\Gm$, where \(\X^m(\Z^m) = X^m \setminus \Anc(\Z^m,\Gm_{\overline{\W^m}})\). Since \(\X^m(\Z^m)\) is not, in general, a union of clusters, the associated mutilation depends on the particular graph $\Gm$. As a result, it is not possible to directly apply this mutilation to the unfolded graph to derive an atomically complete criterion for Rule~3. 

Nonetheless, if Rule~3 does not hold in some compatible graph $\Gm$, then there exists a structure of interest between \(\Y^m\) and \(X^m\) in \(\Gm_{\overline{\W^m}, \overline{\X^m (\Z^m)}}\). If this structure includes a root \(X_x \in \X^m\), then \(X_x\) must be an ancestor of some \(Z_z \in \Z^m\) in the mutilated graph. In such a case, we can augment the structure of interest by explicitly adding the directed path from \(X_x\) to \(\Z^m\), resulting in a new structure whose roots lie outside \(\X^m\). This constructive process of eliminating roots from \(\X^m\) is behind the third rule in Theorem~\ref{th:calculus}.

For all the rules, structures of interest are sought in the unfolded graph. The canonical compatible graph is then used to ensure that the identified structure of interest actually exists in a compatible graph.

All the calculus rules given in Theorem \ref{th:calculus} are atomatically complete, as stated by Theorem \ref{th:atomic_completeness}. 

\begin{restatable}[Atomic completeness]{theorem}{thatomiccompleteness}
\label{th:atomic_completeness}
The calculus in Theorem \ref{th:calculus} is atomically complete i.e. if the rule does not hold given a C-DAG, then there exists a compatible graph in which the corresponding rule in Pearl's calculus fails.
\end{restatable}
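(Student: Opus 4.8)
The plan is to argue rule by rule in the completeness (converse) direction: for each of R1--R3, I assume the graphical criterion of Theorem~\ref{th:calculus} is \emph{violated}, so the relevant mutilation of $\Gcu$ contains a structure of interest $\sigma^m$ that witnesses the forbidden connection with $\Gt \cup \sigma^m$ acyclic, and I produce one compatible graph in which the d-separation precondition of the matching Pearl rule fails. In every case the witness graph is the same: put $\Gm := \Gt \cup \sigma^m$. Since $\sigma^m \subseteq \Gcu$ and $\Gt \cup \sigma^m$ is acyclic, Proposition~\ref{prop:gcugcan} gives $\Gm \in \C(\Gc)$, and $\sigma^m \subseteq \Gm$ by construction. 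It then suffices, for each rule, to check that $\sigma^m$ (or a modification of it) survives the corresponding mutilation of $\Gm$ while still connecting $\X^m$ and $\Y^m$ under $\W^m,\Z^m$; Theorem~\ref{th:new_d_sep} then converts this surviving connecting structure into an actual active path, so the Pearl precondition is false in $\Gm$.

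For R1 and R2 the mutilation is independent of $\Gm$. The witness lies in $\Gcu_{\overline{\W^m}}$, respectively $\Gcu_{\overline{\W^m},\underline{\X^m}}$, so $\sigma^m$ carries no arrowhead into $\W^m$ (and, for R2, no tail out of $\X^m$). These are precisely the edges deleted in passing from $\Gm$ to $\Gm_{\overline{\W^m}}$, respectively $\Gm_{\overline{\W^m},\underline{\X^m}}$; as $\sigma^m$ contains none of them, $\sigma^m$ remains a subgraph of the mutilated $\Gm$. Since $\sigma^m$ still satisfies $\X^m \notind_{\sigma^m} \Y^m \mid \W^m,\Z^m$, Theorem~\ref{th:new_d_sep} yields $\X^m \notind \Y^m \mid \W^m,\Z^m$ in that mutilated graph, which is exactly the negation of the separation demanded by Pearl's R1 (resp.\ R2).

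The work is in R3, whose mutilation $\Gm_{\overline{\W^m},\overline{\X^m(\Z^m)}}$ depends on $\Gm$ through $\X^m(\Z^m) = \X^m \setminus \Anc(\Z^m,\Gm_{\overline{\W^m}})$, so $\sigma^m$ may enter deleted vertices of $\X^m(\Z^m)$. Here I exploit the extra hypothesis $\Root(\sigma^m)\subseteq(\W^m\cup\Z^m)\cup\Y^m$: since no $X_x\in\X^m\cap\sigma^m$ is a root, each such $X_x$ reaches, along directed $\sigma^m$-edges, a root in $\W^m\cup\Z^m\cup\Y^m$. As $\sigma^m\subseteq\Gcu_{\overline{\W^m}}$ has no edge into $\W^m$, no directed $\sigma^m$-path ends in $\W^m$; and a directed $\sigma^m$-path reaching $\Z^m$ consists of ancestors of $\Z^m$ in $\Gm_{\overline{\W^m}}$, hence avoids $\X^m(\Z^m)$, so in particular the directed paths supporting the active colliders are untouched by the $\overline{\X^m(\Z^m)}$ mutilation. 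I then apply Theorem~\ref{th:new_d_sep} with $\G=\sigma^m$ to extract an active path $\pi$ between some $X_x\in\X^m$ and some $Y_y\in\Y^m$, chosen minimal so its interior meets neither $\X^m$ nor $\Y^m$; the only vertex of $\pi$ together with its collider-support paths that can belong to $\X^m(\Z^m)$ is the endpoint $X_x$. If $X_x\notin\X^m(\Z^m)$, or if the $\pi$-edge at $X_x$ points away from $X_x$, then no edge of this connecting structure is deleted and it survives. Otherwise $X_x\in\X^m(\Z^m)$ with an arrowhead into it; being a non-root, $X_x$ reaches a root that can lie neither in $\W^m$ nor (by $X_x\in\X^m(\Z^m)$) in $\Z^m$, hence in $\Y^m$, so there is a directed $\sigma^m$-path $D$ from $X_x$ to $\Y^m$. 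Truncating $D$ at its last vertex $N^\ast\in\X^m(\Z^m)$ leaves a directed path $D'$ from $N^\ast\in\X^m$ to $\Y^m$ whose edges all point into vertices outside $\X^m(\Z^m)\cup\W^m$ and whose interior avoids $\W^m\cup\Z^m$; thus $D'$ is a connecting structure of interest inside $\Gm_{\overline{\W^m},\overline{\X^m(\Z^m)}}$. In all cases Theorem~\ref{th:new_d_sep} gives $\X^m\notind\Y^m\mid\W^m,\Z^m$ in $\Gm_{\overline{\W^m},\overline{\X^m(\Z^m)}}$, so Pearl's R3 fails in $\Gm$.

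I expect the main obstacle to be exactly this R3 analysis: because its mutilation is graph-dependent, the witness cannot simply be transported as a subgraph as in R1--R2, and the crux is the rerouting step that replaces a connecting path entering a deleted $\X^m(\Z^m)$ vertex by a surviving directed path to $\Y^m$. Its correctness hinges on two facts that must be established carefully — that the root condition $\Root(\sigma^m)\subseteq(\W^m\cup\Z^m)\cup\Y^m$ channels every such $X_x$ towards $\Y^m$, and that vertices which are ancestors of $\Z^m$ in $\Gm_{\overline{\W^m}}$ are never deleted — so that both the connecting path and the directed paths activating its colliders persist under the mutilation.
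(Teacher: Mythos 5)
Your proposal is correct and takes essentially the same approach as the paper's proof: you use the same witness graph $\Gm := \Gt \cup \sigma^m$, dispose of R1/R2 exactly as the paper does (via the fact that $\sigma^m$ contains none of the edges removed by the cluster-level mutilations, which is the content of the paper's Theorem~\ref{th:cluster_sep_cluster_mutilation}), and your R3 analysis reproduces the paper's case split. In particular, your key steps --- a root reached from $X_x$ along directed $\sigma^m$-edges cannot lie in $\W^m$, lies in $\Z^m$ only if $X_x \notin \X^m(\Z^m)$ (so no incident edge of the witness is deleted), and otherwise lies in $\Y^m$, yielding a directed path to $\Y^m$ that serves as the surviving witness --- are precisely those of the paper, differing only in bookkeeping (the paper replaces $\sigma^m$ wholesale by a proper causal path, whereas you truncate at the last $\X^m(\Z^m)$ vertex).
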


\begin{example}
\label{example:calculus}
    Let us consider $\Gc$, the C-DAG depicted in Figure~\ref{fig:example_gcu:3}. Figure~\ref{fig:example_gcu:6} displays the corresponding unfolded graph and canonical compatible graphs. The plain and dashed arrows represent~$\Gt$, while the dotted arrows denote the "eligible" edges. According to the second rule of Theorem~\ref{th:calculus}, we have $P(\mathbf{y^m}\mid \Do(\mathbf{z^m})) = P(\mathbf{y^m}\mid \mathbf{z^m})$. Indeed any structure of interest $\sigma^m$ which connects $Y^m$ and $Z^m$ under $\emptyset$ contains the arrows $A_1 \rightarrow B_2 \rightarrow Z_1$. Since $\Gt$ contains $Z_1 \rightarrow A_1$, we know that $\Gt \cup \sigma^m$ contains a cycle. Therefore, $\Gcu_{\overline{\emptyset},\underline{Z^m}}$ does not contain a structure of interest $\sigma^m$ that connects $Y^m$ and $Z^m$ under $\emptyset$ such that $\Gt \cup \sigma^m$ is acyclic.
\end{example}

\section{Computational efficiency: reducing clusters of large size to 3 nodes}
\label{sec:3nodes}

While Theorem~\ref{th:calculus} provides a sound and atomically complete calculus, its direct application may be impractical for large clusters, as computing $\Gcu$ becomes intractable due to a combinatorial explosion in the number of edges.
To address this, we associate with any C-DAG $\Gc$ a simplified C-DAG $\G^C_{\leq3}$ on the same set of nodes (but with different cardinals), where each cluster of size greater than 3 is reduced to size 3. The set of edges of $\G^C_{\leq3}$ is the set of edges of $\G^C$. The key difference is that $\mathcal{C}(\G^C_{\leq3}) \neq \mathcal{C}(\G^C)$, because the graphs compatible with $\G^C_{\leq3}$ contain fewer nodes and different edges than the graphs compatible with $\G^C$. We illustrate this in Figure~\ref{fig:example}. 
Notably, Theorem~\ref{th:infinity_leq_three} shows that applying the calculus on $\Gc$ or on $\G^C_{\leq3}$ leads to the same results.

\begin{restatable}[Infinity is at most three]{theorem}{thinfinityleqthree}
\label{th:infinity_leq_three}
    Let $\Gc$ be a C-DAG and $\G^C_{\leq3}$ be the corresponding C-DAG where all clusters of size greater than 3 are reduced to size 3. Let $\W^C$, $\X^C$, $\Y^C$ and $\Z^C$ be pairwise disjoint subsets of nodes. For \(i\in\{1,2,3\}\), let \(R_i(\W, \X,\Y,\Z)\) be the \(i\)\textsuperscript{th} rule of Pearl's Calculus applied to \((\W,\X,\Y,\Z)\), and say it "does not holds in \(\G\)” whenever its associated d-separation condition in the associated mutilated graph is not satisfied. The following propositions are equivalent:
    \begin{enumerate}
      \item There exists $\G^m \in \C(\Gc)$ in which $R_i(\W^m,\X^m,\Y^m,\Z^m)$ does not hold. \label{th:infinity_leq_three:1}
      \item There exists $\G^m_{\le3} \in \C(\G^C_{\le3})$ in which $R_i(\W_{\le3}^m,\X^m_{\le3},\Y^m_{\le3},\Z^m_{\le3})$ does not hold. \label{th:infinity_leq_three:2}
    \end{enumerate}
\noindent Where $\W_{\le3}^m$, $\X^m _{\leq3}$, $\Y^m _{\leq3}$ and $\Z^m _{\leq3}$ are the sets of nodes corresponding to $\W^C$, $\X^C$, $\Y^C$ and $\Z^C$ in $\G^C_{\leq3}$
\end{restatable}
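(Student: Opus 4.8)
The plan is to reduce both propositions to the structure-of-interest characterization established earlier, and then to exhibit a role-preserving correspondence between structures of interest across the two C-DAGs. Concretely, by Theorem~\ref{th:calculus} together with its atomic completeness (Theorem~\ref{th:atomic_completeness}), proposition~\ref{th:infinity_leq_three:1} is equivalent to the existence, in the unfolded graph $\Gcu$ of $\Gc$, of a connecting structure of interest $\sigma^m$ (connecting $\X^m$ and $\Y^m$ under the rule-specific conditioning set, in the rule-specific mutilation of $\Gcu$) such that $\Gt \cup \sigma^m$ is acyclic and, for $R_3$, whose roots lie in $(\W^m \cup \Z^m) \cup \Y^m$. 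The same statement applied to $\G^C_{\le3}$ characterizes proposition~\ref{th:infinity_leq_three:2}. Since $\Gc$ and $\G^C_{\le3}$ share the same cluster-level edge set, and since the mutilations $\overline{\W^m}$, $\underline{\X^m}$ as well as membership in $\X^m,\Y^m,\Z^m,\W^m$ are all determined cluster-wise, these operations commute with the cluster reduction; it therefore suffices to transport connecting structures of interest between the two unfolded graphs while preserving acyclicity with the canonical compatible graph and the root condition.

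The core of the argument is a compression lemma: \emph{within any single cluster, a connecting structure of interest uses at most three essentially distinct nodes}. The mechanism is the following. Inside a cluster the only admissible directed edges come from the self-loop, which in $\Gt$ realizes a total order $V_1 \to \cdots \to V_{\#V^C}$ (Definition~\ref{def:gm_min}, item~\ref{def:gm_min:2}); any maximal directed chain of $\sigma^m$ confined to the cluster has internal nodes that are non-colliders and hence, by the connecting property (third bullet of Definition~\ref{def:connecting_structure_of_interest}), not in $\Z$, so such a chain collapses to the single canonical edge joining its endpoints without affecting connectivity. What cannot be collapsed are the distinguished nodes where $\sigma^m$ interacts with the rest of the graph or with the conditioning set: an entry/exit node carrying the inter-cluster edges, a node bearing a bidirected endpoint (a potential collider), and a node lying in $\X^m \cup \Y^m \cup \Z^m$ (in particular a $\Z^m$-target reached by a collider-to-$\Z$ directed path). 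A case analysis on Definition~\ref{def:structure_of_interest} (each node has at most one outgoing arrow, or two outgoing and no incoming) bounds the number of such distinguished nodes by three.

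With the lemma in hand, both implications follow from an order-preserving map between the nodes a cluster contributes to $\sigma^m$ and the three nodes of the reduced cluster. For \ref{th:infinity_leq_three:1}$\Rightarrow$\ref{th:infinity_leq_three:2}, I compress each oversized cluster to its (at most) three distinguished nodes, map them order-preservingly onto $V_1, V_2, V_3$ of $\G^C_{\le3}$, and carry the inter-cluster edges of $\sigma^m$ along; one checks that the image is again a structure of interest, that it connects the corresponding sets in the reduced unfolded graph, and that acyclicity with the reduced canonical graph is preserved because the map respects the within-cluster order that governs both $\Gt$ and edge eligibility (Definition~\ref{def:unfolded_graph}). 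For \ref{th:infinity_leq_three:2}$\Rightarrow$\ref{th:infinity_leq_three:1}, I embed the three nodes of each reduced cluster back into the larger cluster (e.g.\ $V_1,V_2,V_3 \mapsto V_1, V_2, V_{\#V^C}$), leaving the unused micro-variables out of the structure; the extra room only makes acyclicity and eligibility easier to satisfy, and Proposition~\ref{prop:gcugcan} guarantees the resulting structure is realized in a genuine compatible graph. In both directions the root condition of $R_3$ is preserved because roots map to roots and the relevant set memberships are cluster-wise.

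I expect the main obstacle to be the compression lemma, specifically the verification that three nodes — and not fewer — are always enough, which demands a careful enumeration of how a structure of interest can meet a cluster (entry/exit, bidirected endpoint, and $\Z^m$-target) under the degree constraints of Definition~\ref{def:structure_of_interest}, together with the check that acyclicity of $\Gt \cup \sigma^m$ is retained after compression even though the canonical graph $\Gt$ itself changes under the reduction. A secondary subtlety is bookkeeping the $R_3$ root condition and the collider-to-$\Z^m$ directed paths when the conditioning target and the collider fall in the same oversized cluster, since their relative order in the internal chain must be respected by the map.
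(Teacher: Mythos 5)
Your high-level skeleton is the same as the paper's: both directions are reduced, via the calculus characterization (the paper packages Theorem~\ref{th:calculus} as Corollary~\ref{cor:magic}), to transporting a structure of interest together with the acyclicity-with-$\Gt$ and root conditions, and the direction \ref{th:infinity_leq_three:2}$\Rightarrow$\ref{th:infinity_leq_three:1} is the easy padding step (the paper simply adds isolated vertices). The gap is in your core ``compression lemma,'' and it is twofold. First, as stated the lemma is false: a fixed connecting structure of interest can use arbitrarily many nodes of a single cluster, because nothing in Definition~\ref{def:structure_of_interest} prevents the structure from \emph{visiting the same cluster many times}, each visit entering and leaving through distinct micro-nodes that each carry inter-cluster arrows (cluster $B$ in Figure~\ref{fig:example:1} already contributes three such nodes, and the number of visits is unbounded in general). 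Your degree-based case analysis bounds the arrows around a single node, not the number of nodes visited, and your only merging mechanism --- collapsing maximal \emph{within-cluster directed chains} --- can never merge two distinct visits, since separate visits are by definition not joined by within-cluster edges of $\sigma^m$.

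Second, merging distinct visits is precisely where the real work lies, and it cannot be done by collapsing: it requires \emph{rerouting} inter-cluster arrows onto a common node via Propositions~\ref{prop:move_arrow_up} and~\ref{prop:move_arrow_down}, and then re-verifying all the side conditions. This is what the paper's proof does: after normalizing $\#(\sigma^m\cap\X^m)=\#(\sigma^m\cap\Y^m)=1$ (Lemma~\ref{lemma:structure_of_interest_XY}), it splits $\sigma^m\cap V^m$ into the nodes with incoming arrows ($\NF$) and those without ($\F$), proves by a three-case analysis on how the $\X$--$\Y$ path first and last meets $\NF$ that all incoming arrows can be redirected to the single node $V_{\max \NF}$ and all outgoing arrows to $V_1$. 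The case analysis is unavoidable because rerouting changes collider/non-collider status (case~3 reroutes the path through the child of $V_{\max\NF}$, turning that child into a collider and $V^2_{v^2}$ into a fork), must respect the constraint that non-roots of $\sigma^m$ avoid $\W^m\cup\Z^m$ (hence the separate treatment when $\NF$ contains conditioning roots), and must preserve acyclicity of $\Gt\cup\sigma^m$ and the rule-3 root condition. None of this is recoverable from a chain collapse plus a degree count. A further misconception in your accounting: in the paper's final construction the structure itself occupies at most \emph{two} nodes per cluster ($V_1$ and the single $\NF$ node); the third node $V_{\#V^C}$ is needed only to host the mandatory incoming arrows of the ambient compatible graph $\Gt\cup\sigma^m$, which is why three is tight (Example~\ref{example:infty_geq_3}). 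Your three ``distinguished node types'' (entry/exit, bidirected endpoint, $\Z$-target) do not capture this role and so would not explain the lower bound either.
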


Figure~\ref{fig:example} illustrates Theorem~\ref{th:infinity_leq_three} by showing how a graph \(\G^m \in \C(\Gc)\), in which a given d‑separation does not hold, is transformed into \(\G^m_{\le3} \in \C(\G^C_{\le3})\), in which the corresponding d‑separation does not hold as well. The figure highlights how this transformation impacts the structure of interest that violates the d‑separation by omitting all irrelevant dependencies (see Figure~\ref{fig:example:2}).

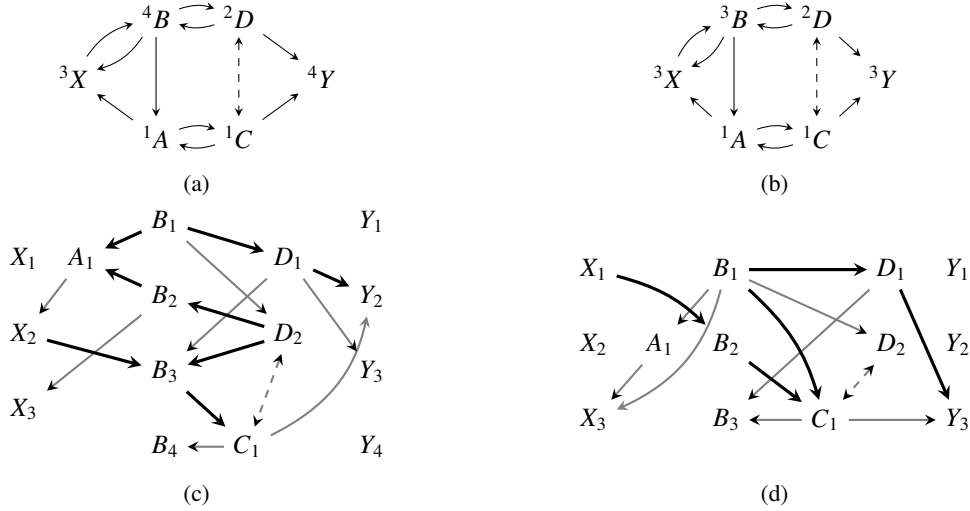
\begin{figure}[t]
    \centering 
   \begin{subfigure}[t]{0.45\textwidth}
    \centering
    \begin{tikzpicture}[->, >=stealth, xscale=1.1]
        \def\xX{0} \def\xA{1} \def\xB{1} \def\xC{2} \def\xD{2} \def\xY{3}

        \node[text=black] (X) at (\xX, -0.8) {${}^3X$};
        \node[text=black] (A) at (\xA, -1.6) {${}^1A$};
        \node[text=black] (B) at (\xB, 0) {${}^4B$};
        \node[text=black] (C) at (\xC, -1.6) {${}^1C$};
        \node[text=black] (D) at (\xD, 0) {${}^2D$};
        \node[text=black] (Y) at (\xY, -0.8) {${}^4Y$};

        \path (D) edge [->, bend left=20] (B);
\path (B) edge [->, bend left=20] (D); 
\path (X) edge [->, bend left=20] (B);
\path (B) edge [->, bend left=20] (X);        \draw[->] (A) -- (X);
        \draw[->] (B) -- (A);
        \draw[<->, dashed] (D) -- (C);
\path (A) edge [->, bend left=20] (C);
\path (C) edge [->, bend left=20] (A);  
        \draw[->] (D) -- (Y);
        \draw[->] (C) -- (Y);
    \end{tikzpicture}
    \caption{}
    \label{fig:example:01}
\end{subfigure}
\hfill
   \begin{subfigure}[t]{0.45\textwidth}
    \centering
    \begin{tikzpicture}[->, >=stealth, xscale=1.1]
        \def\xX{0.2} \def\xA{1} \def\xB{1} \def\xC{2} \def\xD{2} \def\xY{2.8}

        \node[text=black] (X) at (\xX, -0.8) {${}^3X$};
        \node[text=black] (A) at (\xA, -1.6) {${}^1A$};
        \node[text=black] (B) at (\xB, 0) {${}^3B$};
        \node[text=black] (C) at (\xC, -1.6) {${}^1C$};
        \node[text=black] (D) at (\xD, 0) {${}^2D$};
        \node[text=black] (Y) at (\xY, -0.8) {${}^3Y$};

        \path (D) edge [->, bend left=20] (B);
\path (B) edge [->, bend left=20] (D); 
\path (X) edge [->, bend left=20] (B);
\path (B) edge [->, bend left=20] (X);        \draw[->] (A) -- (X);
        \draw[->] (B) -- (A);
        \draw[<->, dashed] (D) -- (C);
\path (A) edge [->, bend left=20] (C);
\path (C) edge [->, bend left=20] (A);  
        \draw[->] (D) -- (Y);
        \draw[->] (C) -- (Y);
    \end{tikzpicture}
    \caption{}
    \label{fig:example:02}
\end{subfigure}

    \begin{subfigure}[t]{0.45\textwidth}
        \centering
        \begin{tikzpicture}[->, >=stealth, xscale=1.1]
            \def\xX{0.3} \def\xA{1} \def\xB{2} \def\xC{3} \def\xD{3.5} \def\xY{4.5}

            \foreach \i in {1,...,3} {
                \node[text=black] (X\i) at (\xX, -0.5 -\i+1) {$X_{\i}$};
            }
            \node[text=black] (A1) at (\xA, -0.5) {$A_1$};
            \foreach \i in {1,...,4} {
                \node[text=black] (B\i) at (\xB, -\i+1) {$B_{\i}$};
            }
            \node[text=black] (C1) at (\xC, -3) {$C_1$};
            \foreach \i in {1,...,2} {
                \node[text=black] (D\i) at (\xD, -0.5 -\i+1) {$D_{\i}$};
            }
            \foreach \i in {1,...,4} {
                \node[text=black] (Y\i) at (\xY, -\i+1) {$Y_{\i}$};
            }

            \draw[->, draw=gray, line width=0.8pt] (C1) -- (B4);
            \draw[->, draw=gray, line width=0.8pt] (A1) -- (X2);
            \draw[->, draw=gray, line width=0.8pt, bend right=25] (C1) edge[gray] (Y2);
            \draw[<->, dashed, draw=gray, line width=0.8pt] (D2) -- (C1);
            \draw[->, draw=gray, line width=0.8pt] (B2) -- (X3);
            \draw[->, draw=gray, line width=0.8pt] (B1) -- (D2);
            \draw[->, draw=gray, line width=0.8pt] (D1) -- (B3);
            \draw[->, draw=gray, line width=0.8pt] (D1) -- (Y3);

            \draw[->, line width=1.2pt] (X2) -- (B3);
            \draw[->, line width=1.2pt] (D2) -- (B3);
            \draw[->, line width=1.2pt] (D2) -- (B2);
            \draw[->, line width=1.2pt] (B2) -- (A1);
            \draw[->, line width=1.2pt] (B1) -- (A1);
            \draw[->, line width=1.2pt] (B1) -- (D1);
            \draw[->, line width=1.2pt] (D1) -- (Y2);
            \draw[->, line width=1.2pt] (B3) -- (C1);
        \end{tikzpicture}
        \caption{}
        \label{fig:example:1}
    \end{subfigure}
\hfill
\begin{subfigure}[t]{0.45\textwidth}
        \centering
        \begin{tikzpicture}[->, >=stealth, xscale=1.1]
            \def\xX{0} \def\xA{0.8} \def\xB{1.6} \def\xC{2.8} \def\xD{3.6} \def\xY{4.4}

            \foreach \i in {1,...,3} {
                \node[text=black] (X\i) at (\xX, -\i+1) {$X_{\i}$};
            }
            \node[text=black] (A1) at (\xA, -1) {$A_1$};
            \foreach \i in {1,...,3} {
                \node[text=black] (B\i) at (\xB, -\i+1) {$B_{\i}$};
            }
            \node[text=black] (C1) at (\xC, -2) {$C_1$};
            \foreach \i in {1,...,2} {
                \node[text=black] (D\i) at (\xD, -\i+1) {$D_{\i}$};
            }
            \foreach \i in {1,...,3} {
                \node[text=black] (Y\i) at (\xY, -\i+1) {$Y_{\i}$};
            }

            \node at (0,-2.5) {};

            \draw[->, draw=gray, line width=0.8pt] (C1) -- (B3);
            \draw[->, draw=gray, line width=0.8pt] (A1) -- (X3);
            \draw[->, draw=gray, line width=0.8pt] (C1) -- (Y3);
            \draw[<->,dashed, draw=gray, line width=0.8pt] (D2) -- (C1);
            \draw[->, draw=gray, line width=0.8pt, bend left=25] (B1) edge[gray] (X3);
            \draw[->, draw=gray, line width=0.8pt] (B1) -- (D2);
            \draw[->, draw=gray, line width=0.8pt] (D1) -- (B3);
            \draw[->, draw=gray, line width=0.8pt] (B1) -- (A1);

            \path[->, line width=1.2pt] (X1) edge[bend left=15] (B2);
            \draw[->, line width=1.2pt] (B1) -- (D1);
            \draw[->, line width=1.2pt] (D1) -- (Y3);
            \draw[->, line width=1.2pt] (B2) -- (C1);
            \path[->, line width=1.2pt] (B1) edge[bend left=15] (C1);
        \end{tikzpicture}
        \caption{}
        \label{fig:example:2}
    \end{subfigure}
    \caption{Top: $\Gc$ (left) and $\G^C_{\leq 3}$ (right). Bottom: a graph compatible with $\Gc$ (left) and a graph compatible with $\G^C_{\leq 3}$ (right). In Figure~\ref{fig:example:1}, the arrows in \textbf{bold black} represent a structure of interest that connects $X^m$ and $Y^m$ under $C^m \cup A^m$. In Figure~\ref{fig:example:2}, the arrows in \textbf{bold black} represent the structure of interest that connects $X^m_{\leq 3}$ and $Y^m_{\leq 3}$ under $C^m_{\leq 3} \cup A^m_{\leq 3}$, which is obtained by applying the strategy used in the proof of Theorem~\ref{th:infinity_leq_three}.}
    \label{fig:example}
\end{figure}

Theorem~\ref{th:infinity_leq_three} shows that reducing cluster size to at most three preserves all relevant dependencies. This bound is tight: in some C-DAGs, any further reduction of the size of the clusters (by removing more nodes)
would necessarily lose causal information. Example~\ref{example:infty_geq_3} illustrates such a case.

\begin{example}[Infinity is at least three]
\label{example:infty_geq_3}
    Let $\Gc$ be the C-DAG defined by Figure~\ref{fig:infty_geq_3:Gc}. Let $\X^C = \{X^C\}$, $\Y^C = \{Y^C\}$ and  $\Z^C = \{{Z^1}^C, {Z^2}^C\}$. There exists a compatible graph $\Gm$ in which $\X^m \notind_{\Gm} \Y^m \mid \Z^m$: we displayed it in Figure~\ref{fig:infty_geq_3:Gm}. 
    
    There is only one graph compatible with $\G^C_{\leq 2}$ : $\C(\G^C_{\leq 2}) = \{\mathcal{G}^m_{\leq2}\}$ (the one displayed in Figure~\ref{fig:infty_geq_3:Gm_leq2}). Moreover,  in $\G^{m}_{\leq 2}$, the corresponding dependence does not hold.
\end{example}

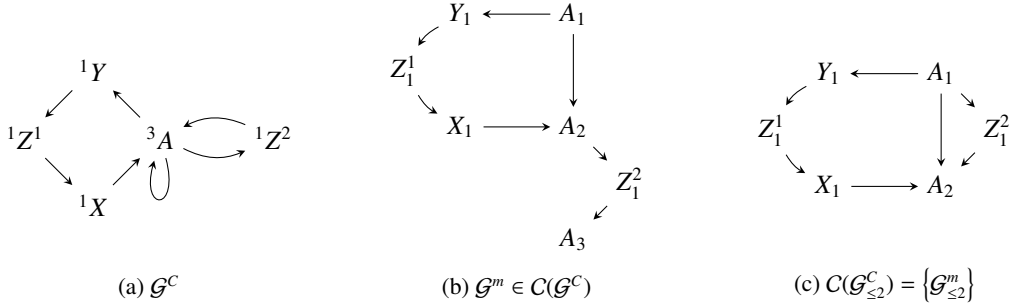
\begin{figure}[t]
    \centering
    \begin{subfigure}[b]{0.3\textwidth}
        \centering
        \begin{tikzpicture}[->, >=stealth, xscale = 1.5, yscale=1.5]
            \node (Y)  at (-0.6,.6)   {${}^1Y$};
            \node (Z1) at (-1.2,0)     {${}^1Z^1$};
            \node (X)  at (-0.6,-.6)   {${}^1X$};
            \node (A)  at (0,0)      {${}^3A$};
            \node (Z2) at (1,0)    {${}^1Z^2$};
            \node at (0,-1) {};
            \draw (Y)  edge (Z1);
            \draw (Z1) edge (X);
            \draw (X)  edge (A);
            \draw (A)  edge[bend right=25] (Z2);
            \draw (Z2) edge[bend right=25] (A);
            \draw (A)  edge[loop below] (A);
            \draw (A)  edge (Y);
        \end{tikzpicture}
        \caption{$\Gc$}
        \label{fig:infty_geq_3:Gc}
    \end{subfigure}
    %
    \hfill
    \begin{subfigure}[b]{0.3\textwidth}
        \centering
        \begin{tikzpicture}[->, >=stealth, scale=1.5]
            \node (Y)  at (-1,1)   {$Y_1$};
            \node (Z1) at (-1.5,.5)  {$Z^1_1$};
            \node (X)  at (-1,0)   {$X_1$};
            \node (A1) at (0,1)    {$A_1$};
            \node (A2) at (0,0)    {$A_2$};
            \node (A3) at (0,-1)   {$A_3$};
            \node (Z2) at (0.5,-.5)  {$Z^2_1$};
            \draw (Y)  edge[bend right=15] (Z1);
            \draw (Z1) edge[bend right=15] (X);
            \draw (X)  edge (A2);
            \draw (A2) edge (Z2);
            \draw (Z2) edge (A3);
            \draw (A1) edge (A2);
            \draw (A1) edge (Y);
        \end{tikzpicture}
        \caption{$\Gm \in \C(\Gc)$}
        \label{fig:infty_geq_3:Gm}
    \end{subfigure}
    \hfill
    \begin{subfigure}[b]{0.3\textwidth}
        \centering
        \begin{tikzpicture}[->, >=stealth, scale=1.5]
            \node (Y)  at (-1,1)   {$Y_1$};
            \node (Z1) at (-1.5,.5)  {$Z^1_1$};
            \node (X)  at (-1,0)   {$X_1$};
            \node (A1) at (0,1)    {$A_1$};
            \node (A2) at (0,0)    {$A_2$};
            \node (Z2) at (0.5,.5)   {$Z^2_1$};
            \node at (0,-.5) {};
            \draw (Y)  edge[bend right=15] (Z1);
            \draw (Z1) edge[bend right=15] (X);
            \draw (X)  edge (A2);
            \draw (A1) edge (Z2);
            \draw (Z2) edge (A2);
            \draw (A1) edge (A2);
            \draw (A1) edge (Y);
        \end{tikzpicture}
        \caption{$\C(\G^C_{\leq 2}) = \left\{\G^m_{\leq 2}\right\}$}
        \label{fig:infty_geq_3:Gm_leq2}
    \end{subfigure}
    
    \caption{Figure~\ref{fig:infty_geq_3:Gc} depicts a cluster $\Gc$. Figure~\ref{fig:infty_geq_3:Gm} illustrates a graph compatible with $\Gc$. Figure~\ref{fig:infty_geq_3:Gm_leq2} illustrates the unique graph compatible with $\G^C_{\leq2}$.}
    \label{fig:infty_geq_3}
\end{figure}

\section{Discussion}
\label{sec:discussion}

\paragraph{On Cluster d-Separation} Theorem~\ref{th:calculus} provides a sound and atomically complete calculus for causal identification. In addition, our results offer a sound and atomically complete solution to the problem of cluster d-separation. Specifically, the criterion for cluster d-separation corresponds to the first rule of Theorem~\ref{th:calculus} when taking $\W^c = \emptyset$. Furthermore, Theorem~\ref{th:cluster_sep_cluster_mutilation} in  Appendix establishes cluster d-separation under cluster-level mutilations. Our results thus encompass both the first (association) and second (intervention) rungs of Pearl's ladder of causation.

\paragraph{Recovering the results on standard (acyclic) C-DAGs} Theorem~\ref{th:calculus} recovers the results of \citet{anand_causal_2023}. When a C-DAG \(\Gc\) is acyclic, its corresponding unfolded graph $\Gcu$ is also acyclic. As a result, $\Gcu$ is a compatible graph and standard d-separation is both sound and complete in $\Gcu$.

\paragraph{A 2-step strategy} In our study, the unfolded graph defines the search space for the structures of interest, while the canonical compatible graph ensures that these structures can indeed be realized within a compatible graph. By keeping these two notions distinct, we are able to apply mutilations directly on the unfolded graph without restricting the overall class of compatible graphs. This approach resolves the non-commutativity between mutilation and enumeration of compatible graphs, since performing mutilation before enumeration generally produces a strictly larger set of graphs than enumerating first before mutilating. A similar phenomenon was observed in \citet{zhang_causal_2008} in the context of ancestral graphs.

\paragraph{Unknown size of clusters} The typical use case for C-DAGs assumes access to individual observed variables (i.e., micro-variables). Clusters are constructed by grouping  variables based on interpretability needs and domain knowledge. In such settings, causal and confounding relationships between clusters are explicitly modeled, while dependencies within clusters are left unspecified. Crucially, the specification of a C-DAG does not require the inclusion or modeling of any unobserved variables within a cluster. Unobserved variables are only explicitly modeled when they act as latent confounders between clusters, in which case bi-directed dashed edges are introduced. This means that the cardinality of each cluster corresponds to the number of observed variables it contains, a quantity that is generally known. Therefore, the assumption of known cluster cardinality reflects realistic scenarios and does not compromise the validity of our theoretical results.

Nonetheless, if the cardinality of a cluster is overestimated, then the calculus remains sound, though not necessarily complete. Conversely, if the cardinality is underestimated, the calculus is complete, but soundness is no longer guaranteed. In cases where the true cluster size is unknown, Theorem \ref{th:infinity_leq_three} ensures that assuming a cardinality of 3 yields a sound calculus.

\section{Conclusion}
\label{sec:conclusion}

We have addressed in this study the problem of identification in causal abstractions based on arbitrary clusterings of variables in ADMGs, extending the framework considered in \citet{anand_causal_2023} to abstract graphs which potentially contain self-loops and cycles between clusters. This extension is important in practice as the structure induced on clusters of variables in a given ADMG is likely to contain cycles between clusters. In this framework, we have first reformulated the notion of d-separation in an ADMG using structures of interest, a reformulation which remains faithful to the original formulation as finding a structure of interest is sufficient to d-connect two sets, and then provided a causal calculus which is both sound and atomically complete. We further showed that any cluster can be reduced to a cluster of limited size, leading to efficient calculus rules.

In the future, we aim to establish the global completeness of the calculus, as it is currently only atomically complete. We also plan to extend this work by considering micro-level interventions, \textit{i.e.}, interventions on individual variables rather than on clusters of variables, when only the C-DAG is known.

\begin{ack}
This work was partly supported by fundings from the French government, managed by the National Research Agency (ANR), under the France 2030 program, reference ANR-23-IACL-0006 and  ANR-23-PEIA-0007.
\end{ack}

\newpage

\bibliography{References}
\bibliographystyle{apalike}  


\appendix
\newpage
\tableofcontents
\newpage

\section{Glossary of Notations}
\begin{itemize}
  \item[X:] A variable.
  \item[x:] A realized value of X.
  \item[$\X$:] A set.
  \item[$\Anc(\X, \G)$:] The set of ancestors of $\X$ in $\G$ (including $\X$ itself).
  \item[$\Desc(\X, \G)$:] The set of descendants of $\X$ in $\G$ (including $\X$ itself).
  \item[$\Root(\G)$:] The set of roots of $\G$, i.e., vertices with no child.
  \item[Active vertex:] A vertex $V$ is \emph{active} on a path relative to $\Z$ if
    (i) $V$ is a collider and $V$ or one of its descendants is in $\Z$, or
    (ii) $V$ is a non-collider and $V \notin \Z$.
  \item[Active path:] A path $\pi$ is \emph{active} given $\Z$ if all vertices on $\pi$ are active relative to $\Z$.
  \item[D-separation:] Sets $\X$ and $\Y$ are d-separated by $\Z$ if every path between them is inactive given $\Z$, denoted $\X \ind_{\G} \Y \mid \Z$.
  \item[$\G_{\overline{\X}\underline{\Z}}$:] The mutilated graph obtained by removing edges with arrowheads into $\X$ and tails from $\Z$.
  \item[$\pi_{[A,B]}$:] The subpath of $\pi$ between vertices $A$ and $B$.
  \item[$\G_1 \cup \G_2$:] The union of graphs $\G_1=(\V_1,\E_1)$ and $\G_2=(\V_2,\E_2)$, defined as $(\V_1 \cup \V_2, \E_1 \cup \E_2)$.
  \item[$\mathcal{X} \cap \G$:] The set of nodes in $\G$ that belong to $\mathcal{X}$.
  \item[$\G \setminus \mathcal{X}$:] The subgraph of $\G$ obtained by removing all vertices in $\mathcal{X}$ and their incident edges.
  \item[$\V^m$:] Micro-variables.
  \item[$\V^C$:] Partition of $\V^m$.
  \item[$\Gm$:] An ADMG on micro-variables.
  \item[$\Gc$:] A C-DAG defined in Definition~\ref{def:Cluster_DAG}.
  \item[$\C(\Gc)$:] The class of compatible graphs defined in Definition~\ref{def:equivalence_micro_admgs}.
  \item[$\sigma$:] A structure of interest.
  \item[$\Gt$:] The canonical compatible graph defined in Definition~\ref{def:gm_min}.
  \item[$\Gcu$:] The unfolded graph defined in Definition~\ref{def:unfolded_graph}.
\end{itemize}

\newpage
\section{Technical Appendices and Supplementary Material}

\begin{notation}
\label{notation:base}
    Let $\Gc$ be a C-DAG, and let $V$ be a cluster in $\Gc$.  When $V$ is seen as a node of $\Gc$, $V$ will be written as $V^C$; but when $V$ is seen as a set of variables of a graph $\Gm$ on micro-variables, $V$ will be written as $V^m = \{V_1, \cdots, V_{\#V} \}$ where the indices follow a topological ordering induced by $\Gm$ (chosen arbitrarily if the ordering is not unique).
    We will use the same notations for any intersection or union of cluster.
\end{notation}

\paragraph{Terminolgy on paths.} Let $\pi$ be a path and $\X$ a subset of variables. We say that $\pi$ intersects or encounters $\X$ if they share at least one common vertex. We treat paths as ordered lists of variables, which allows us to define the first and last encounter of $\pi$ with $\X$. The first encounter is the first vertex in the order of $\pi$ that also belongs to $\X$. Similarly, the last encounter is the last such vertex in the order of $\pi$. Let $\pi$ be a path from $\X$ to $\Y$. Let $A$ be a vertex on $\pi$. We denote by $\pi_{[\X,A]}$ the subpath of $\pi$ from its first vertex to $A$ and $\pi_{[A,\Y]}$ from A to its last vertex.

\subsection{Basic Properties of C-DAGs}

In this section, following the notations of \cite{perkovic_complete_2018}, an arrow ($\leftcomingarrow$) represents either a directed arrow ($\leftarrow$) or a dashed-bidirected arrows ($\dashleftrightarrow$).

\begin{restatable}{proposition}{propvalidgc}
\label{prop:valid_gc}
    Let $\V^C$ be a partition of $\V^m$. Let $\Gc$ be a mixed graph over $\V^C$ and let $\C\left(\Gc\right)$ denote the class of graphs compatible with $\Gc$. Then the following propositions are equivalent:
    \begin{itemize}
        \item $\C\left(\Gc\right) \neq \emptyset$ \label{prop:valid_gc:1} 
        \item $\Gc$ does not contain any cycle on clusters of size $1$. \label{prop:valid_gc:2}
    \end{itemize}
\end{restatable}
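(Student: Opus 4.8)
The plan is to prove the two implications separately, treating the condition \emph{``$\Gc$ contains no directed cycle on clusters of size $1$''} as the exact combinatorial obstruction to the existence of a compatible ADMG. For the direction $\C(\Gc)\neq\emptyset \Rightarrow$ (no singleton cycle), I would argue by contraposition. Suppose $\Gc$ contains a directed cycle $C^C_1 \to C^C_2 \to \cdots \to C^C_k \to C^C_1$ in which every cluster is a singleton, say $C^C_i=\{C_i\}$ (a self-loop being the case $k=1$). Fix any $\Gm\in\C(\Gc)$. By the ``only if'' part of Definition~\ref{def:Cluster_DAG}, each cluster arrow $C^C_i \to C^C_{i+1}$ forces the existence of a micro-arrow $V_v \to W_w$ with $V_v\in C^C_i$ and $W_w\in C^C_{i+1}$; since both clusters are singletons, this micro-arrow is uniquely $C_i \to C_{i+1}$. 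Concatenating these forced arrows yields the directed cycle $C_1 \to \cdots \to C_k \to C_1$ in $\Gm$, contradicting the acyclicity of the ADMG $\Gm$. Hence $\C(\Gc)=\emptyset$, which is the contrapositive of the desired claim.

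For the converse, assuming no singleton cycle, I would exhibit an explicit compatible ADMG, namely the canonical compatible graph $\Gt$ of Definition~\ref{def:gm_min}. Two facts must be checked: (a) the cluster projection of $\Gt$ under Definition~\ref{def:Cluster_DAG} equals $\Gc$, and (b) $\Gt$ is acyclic. Point (a) is essentially immediate from the construction: step~1 realizes each bidirected cluster edge, step~3 realizes each inter-cluster directed edge (since $V_1 \to W_{\#W}$ projects to $V^C \to W^C$), and step~2 realizes each self-loop; conversely, no spurious cluster edges are created because intra-cluster directed edges arise only from self-loops and inter-cluster directed edges only from step~3. The only self-loop that step~2 could fail to realize is one on a singleton cluster, but such a self-loop is precisely a length-$1$ singleton cycle, excluded by hypothesis; hence the projection matches $\Gc$ exactly.

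The crux, and the main obstacle, is establishing (b). Here I would exploit the index bookkeeping built into Definition~\ref{def:gm_min}. The directed edges of $\Gt$ are of two kinds: intra-cluster edges $V_i \to V_j$ with $i<j$ (step~2), which strictly increase the within-cluster index, and inter-cluster edges $V_1 \to W_{\#W}$ (step~3), which leave a cluster only from its first node and enter a cluster only at its last node. The key observation is that for any cluster $V$ with $\#V\ge 2$, its last node $V_{\#V}$ has outgoing directed degree zero in $\Gt$: it emits no step~2 edge (no larger index exists) and no step~3 edge (these originate at $V_1\neq V_{\#V}$). Since every inter-cluster directed edge into $V$ targets precisely $V_{\#V}$, a directed cycle can never traverse a non-singleton cluster via an inter-cluster edge, as it would be forced to halt at the sink $V_{\#V}$. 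Consequently, any hypothetical directed cycle in $\Gt$ either stays inside a single cluster -- impossible, as step~2 edges strictly increase the index -- or visits only singleton clusters, in which case its projection is a directed cycle on singletons in $\Gc$, contradicting the hypothesis. Therefore $\Gt$ is acyclic, hence a valid ADMG, and together with (a) this gives $\Gt\in\C(\Gc)$, so $\C(\Gc)\neq\emptyset$.

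One point I would flag to keep the argument non-circular: I deliberately verify the acyclicity of $\Gt$ directly rather than invoking Proposition~\ref{prop:gm_min}, since the latter's compatibility claim $\Gt\in\C(\Gc)$ already presupposes the very acyclicity that the validity condition of this proposition is meant to characterize. The whole proof then reduces to the sink observation on $V_{\#V}$, which is the single nontrivial step; the remaining verifications (the forced micro-arrows in the necessity direction and the projection bookkeeping in (a)) are routine consequences of Definitions~\ref{def:Cluster_DAG} and~\ref{def:gm_min}.
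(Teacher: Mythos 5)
Your proof is correct, and while the necessity direction coincides with the paper's (the same contraposition argument: a singleton cycle in $\Gc$ forces the corresponding micro-cycle in any compatible graph), your sufficiency direction takes a genuinely different route. The paper does not use $\Gt$ at all: it builds a leaner ad hoc witness in which every singleton cluster attaches all its edges at $V_1$, while every non-singleton cluster receives all outgoing edges at $V_1$ and all incoming edges at $V_2$ (a self-loop becoming $V_1 \rightarrow V_2$). With that layout no vertex of a non-singleton cluster has both an incoming and an outgoing edge, so any directed cycle could only run through singleton clusters and would project to a forbidden singleton cycle in $\Gc$ --- acyclicity is nearly trivial. Your witness $\Gt$ is heavier to verify precisely because step~2 gives vertices of self-looped clusters both incoming and outgoing edges, which is why you need the finer sink observation that $V_{\#V}$ emits no edge when $\#V \ge 2$; but this extra work buys something real: it is, in substance, a self-contained proof of the acyclicity core of Lemma~\ref{lemma:G3_compatible}, whose proof in the paper invokes Proposition~\ref{prop:valid_gc} to reach its contradiction. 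Your explicit flag about avoiding Proposition~\ref{prop:gm_min} is therefore exactly right --- the paper's ordering (ad hoc witness first, canonical graph later) exists to break that circularity, and your argument shows the canonical graph can instead be handled directly. One shared blind spot, not held against you: both your proof and the paper's read ``cycle'' as \emph{directed} cycle and silently ignore a bidirected self-loop on a singleton cluster, which no ADMG can realize either.
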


\begin{proof}
    Let us prove the two implications:
    \begin{itemize}
        \item $\Rightarrow$: If $\Gc$ contains the cycle ${}^1A \rightarrow \cdots \rightarrow {}^1A$, then any compatible graph would contain the cycle $A_1 \rightarrow \cdots \rightarrow A_1$, which is not allowed because compatible graphs have to be acyclic.
        \item $\Leftarrow$: If $\Gc$ does not contain any cycle on cluster of size $1$. Let us construct a compatible graph. For all cluster $V$ of size $1$, we put all incoming and outgoing edges at $V_1$. This does not create a cycle because, otherwise, $\Gc$ would contain a cycle on cluster of size $1$. For all other clusters $V$, as they are at least of size $2$, we can deal with $V_1$ and $V_2$. We put all outgoing edges at $V_1$ and all incoming edges at $V_2$. This construction cannot introduce a directed cycle, since no vertex in $V^m$ ever has both an incoming and an outgoing edge.
    \end{itemize}
\end{proof}

\begin{proposition}
\label{prop:remove_arrow}
    Let $\Gc$ be a C-DAG, $\Gm$ be a compatible graph with $\Gc$ and $V^C$ and $W^C$ be nodes of $\Gc$. If $\Gm$ contains two similar (same type) arrows between $V^m$ and $W^m$, then removing one of these arrows create another compatible graph.
\end{proposition}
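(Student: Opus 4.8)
The plan is to verify directly that the graph $\Gm'$ obtained by deleting one of the two twin edges still belongs to $\C(\Gc)$. By Definition~\ref{def:equivalence_micro_admgs} this reduces to two checks: that $\Gm'$ is a legitimate ADMG over $\V^m$, and that the construction of Definition~\ref{def:Cluster_DAG} applied to $\Gm'$ returns $\Gc$, i.e. $\phi(\Gm';\V^C) = \Gc$.

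Acyclicity of $\Gm'$ is immediate. Since $\Gm'$ is obtained from the ADMG $\Gm$ by removing a single edge, and edge deletion can never create a directed cycle --- removing a bidirected edge leaves the directed part unchanged, and removing a directed edge can only shorten existing directed paths --- the graph $\Gm'$ inherits acyclicity from $\Gm$ and is a well-defined ADMG on the same node set $\V^m$.

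For compatibility, I would first observe that $\phi$ is monotone under edge deletion: any micro-edge witnessing a cluster-level edge of $\phi(\Gm';\V^C)$ is also present in $\Gm$, so $\phi(\Gm';\V^C) \subseteq \phi(\Gm;\V^C) = \Gc$. It then remains to establish the reverse inclusion, namely that no cluster-level edge of $\Gc$ is destroyed. Let $e^C$ denote the cluster-level edge to which the deleted micro-edge contributes under Definition~\ref{def:Cluster_DAG}; by the ``same type'' hypothesis both twins contribute to this same $e^C$ (namely $V^C \rightarrow W^C$ for a directed pair, $V^C \dashleftrightarrow W^C$ for a bidirected pair, or the self-loop on $V^C$ when $V^C = W^C$). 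A single micro-edge between $V^m$ and $W^m$ contributes to no other cluster-level edge, so $e^C$ is the only edge of $\Gc$ that could be lost. But the second, retained twin is itself a witness for $e^C$ in the existential sense of Definition~\ref{def:Cluster_DAG}; hence $e^C \in \phi(\Gm';\V^C)$, every edge of $\Gc$ survives, and $\phi(\Gm';\V^C) = \Gc$.

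The only point demanding attention is the uniform bookkeeping across edge types, in particular the self-loop case $V^C = W^C$, where one must confirm that the retained intra-cluster directed edge still witnesses the self-loop. This is the lightest of the steps rather than a genuine obstacle, since the biconditionals of Definition~\ref{def:Cluster_DAG} are witnessed purely existentially and a single surviving twin always furnishes the required witness; the substance of the argument lies entirely in the monotonicity observation combined with the preservation of this one witness.
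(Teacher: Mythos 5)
Your proof is correct and takes essentially the same approach as the paper, whose entire proof is the one-line observation that, by Definition~\ref{def:Cluster_DAG}, only one witnessing arrow between $V^m$ and $W^m$ is needed for the cluster-level edge. Your additional steps (acyclicity under edge deletion, monotonicity of $\phi$, and the single-witness bookkeeping) merely spell out the routine verifications the paper leaves implicit.
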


\begin{proof}
    By definition, only one arrow between $V^m$ and $W^m$ is necessary.
\end{proof}

\begin{proposition}
\label{prop:move_arrow_up}
    Let $\Gc$ be a C-DAG, $\Gm$ be a compatible graph with $\Gc$ and $V^C$ be a node in $\Gc$, i.e. a cluster. If there exists $(i,j)$ with $i>j$ such that $\Gm$ contains the arrow $V_i \rightarrow W_w$, then there exists a compatible graph $\Gm'$ that contains the arrow $V_j \rightarrow W_w$ and not $V_i \rightarrow W_w$ if desired.
\end{proposition}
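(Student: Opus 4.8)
The plan is to exhibit the modified graph explicitly and check it against the two requirements of compatibility in Definition~\ref{def:Cluster_DAG}: it must be an ADMG (acyclic), and the cluster graph it induces must still be $\Gc$. Concretely, I would set
\[
  \Gm' \coloneqq \bigl(\Gm \cup \{V_j \rightarrow W_w\}\bigr) \setminus \{V_i \rightarrow W_w\},
\]
and argue separately for acyclicity and for preservation of the induced cluster graph. The case where one keeps $V_i \rightarrow W_w$ (the ``if desired'' clause) is then a minor variant handled at the end.

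The preservation of the cluster map is straightforward. Both $V_i \rightarrow W_w$ and $V_j \rightarrow W_w$ have their source in $V^m$ and their target $W_w \in W^m$, so each witnesses exactly the same cluster-level edge $V^C \rightarrow W^C$. Consequently, deleting the former and adding the latter neither removes the witness of $V^C \rightarrow W^C$ (now carried by $V_j \rightarrow W_w$) nor creates any new cluster-level edge, while all remaining directed edges and all bidirected edges are untouched. Hence $\phi(\Gm';\V^C) = \phi(\Gm;\V^C) = \Gc$.

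The main obstacle is acyclicity, and this is exactly where the hypothesis $i > j$ is used. Since deleting an edge cannot create a cycle, any hypothetical cycle in $\Gm'$ must traverse the added edge $V_j \rightarrow W_w$, and therefore must contain a directed path from $W_w$ to $V_j$. I would observe that, being part of a simple cycle, this path is simple; it thus cannot reuse the added edge $V_j \rightarrow W_w$, and it certainly does not use the deleted edge $V_i \rightarrow W_w$, so every edge on it already lies in $\Gm$. Concatenating this path with the edge $V_i \rightarrow W_w \in \Gm$ produces a directed path from $V_i$ to $V_j$ in $\Gm$, which (because the cluster indices respect a topological ordering of $\Gm$) forces $i < j$, contradicting $i > j$. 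Therefore $\Gm'$ is acyclic, and together with the previous paragraph this shows $\Gm' \in \C(\Gc)$.

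Finally, for the variant in which $V_i \rightarrow W_w$ is retained, I would simply add $V_j \rightarrow W_w$ without deleting anything: acyclicity follows from the same contradiction argument (the deleted-edge case above is only weaker), and compatibility is immediate since the new arrow is a redundant witness of $V^C \rightarrow W^C$, which may in turn be removed by Proposition~\ref{prop:remove_arrow}. I expect the acyclicity step to be the delicate one, since it requires carefully ruling out a directed cycle through the freshly added edge; the rest is bookkeeping on the induced cluster graph.
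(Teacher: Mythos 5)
Your proof is correct and follows essentially the same route as the paper: both arguments hinge on the index convention (indices follow a topological ordering of $\Gm$) to show that adding $V_j \rightarrow W_w$ cannot create a cycle, and both treat the removal of $V_i \rightarrow W_w$ as a harmless deletion of a redundant witness of the cluster edge $V^C \rightarrow W^C$. The only difference is cosmetic: the paper argues directly that $V_j \notin \Desc(W_w, \Gm)$ and then invokes Proposition~\ref{prop:remove_arrow}, whereas you run the contrapositive (a cycle would force a directed path from $V_i$ to $V_j$, contradicting $i > j$) and verify the preservation of the induced cluster graph explicitly.
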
 

\begin{proof}
    If $i > j$, thus by the convention of Notation~\ref{notation:base}, we know that $V_j$ is before $V_i$ in a topological order of $\Gm$. Thus, $V_j$ is not a descendant of $V_i$ in $\Gm$. Thus, $V_j$ is not a descendant of $W_w$ in $\Gm$. Therefore, adding the arrow $V_j \rightarrow W_w$ into $\Gm$ does not create a cycle, because otherwise $V_j$ would be a descendant of $W_w$ in $\Gm$.
    
    By Proposition~\ref{prop:remove_arrow}, $V_i \rightarrow W_w$ can be thereafter removed if desired.
\end{proof}

\begin{corollary}
\label{cor:move_fork_up}
    Let $\Gc$ be a C-DAG, $\Gm$ be a compatible graph and $V^C$ be a cluster. If $\Gm$ contains a path which contains a fork on $V_i$ with $i>j$, then there exists a compatible graph $\Gm'$ which contains the same path except that the fork is on $V_j$.
\end{corollary}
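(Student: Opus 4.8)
The plan is to reduce the corollary to two applications of the machinery already established for single edges, namely Propositions~\ref{prop:move_arrow_up} and~\ref{prop:remove_arrow}. A fork on $V_i$ in a path $\pi$ means that the two edges of $\pi$ incident to $V_i$ both point away from $V_i$; that is, $\pi$ locally has the form $W_w \leftarrow V_i \rightarrow U_u$, so the fork consists of exactly the two \emph{directed} edges $V_i \rightarrow W_w$ and $V_i \rightarrow U_u$ present in $\Gm$. The goal is to produce a compatible $\Gm'$ in which these are replaced by $V_j \rightarrow W_w$ and $V_j \rightarrow U_u$, with every other edge of $\pi$ left untouched, so that the same path now forks at $V_j$.

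First I would record the acyclicity ingredient. Since $i > j$, Notation~\ref{notation:base} guarantees that $V_j$ precedes $V_i$ in the topological ordering induced by $\Gm$, hence $V_j \notin \Desc(V_i, \Gm)$. As $W_w$ and $U_u$ are children of $V_i$, they lie in $\Desc(V_i,\Gm)$, and therefore $V_j$ is a descendant of neither. Consequently, adding $V_j \rightarrow W_w$ cannot close a directed cycle (that would force $V_j \in \Desc(W_w,\Gm)$), and likewise for $V_j \rightarrow U_u$. Adding \emph{both} edges at once is still safe: any directed cycle uses at most one edge leaving $V_j$, so every candidate cycle through the two new edges reduces to the single-edge case already excluded. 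The augmented graph is moreover still compatible, because both new edges run between the cluster pairs $V^C,W^C$ and $V^C,U^C$, for which the cluster-level arrows $V^C \rightarrow W^C$ and $V^C \rightarrow U^C$ already exist in $\Gc$ (witnessed by $V_i \rightarrow W_w$ and $V_i \rightarrow U_u$); hence no new cluster-level edge is created.

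Having added the two edges, I would then invoke Proposition~\ref{prop:remove_arrow} twice to delete the old arms. After the augmentation, $\Gm$ contains the two similar (directed) arrows $V_i \rightarrow W_w$ and $V_j \rightarrow W_w$ between $V^m$ and $W^m$, so Proposition~\ref{prop:remove_arrow} lets us drop $V_i \rightarrow W_w$ while staying compatible; the identical argument removes $V_i \rightarrow U_u$. The resulting graph $\Gm'$ is compatible with $\Gc$, and the path $\pi$, with $V_i$ relabelled as $V_j$ and its two outgoing edges redirected, is present in $\Gm'$ and still forks at $V_j$, which is exactly the claim.

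The only genuinely delicate point is the simultaneity of the two additions: one must argue that introducing both fork arms at $V_j$ before performing any deletion does not create a cycle, rather than chaining two applications of Proposition~\ref{prop:move_arrow_up}. Chaining would be awkward because the index labels $V_i, V_j$ are pinned to the topological ordering of $\Gm$, and that ordering could shift after the first modification; doing both additions inside $\Gm$ first, then both removals, sidesteps this entirely. The cycle argument above (each cycle uses at most one edge out of $V_j$) is what makes the simultaneous addition legitimate, and it is the crux of the proof.
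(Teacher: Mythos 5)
Your proof is correct. The paper's own proof is a one-liner: apply Proposition~\ref{prop:move_arrow_up} twice, once per arm of the fork. You instead inline the machinery behind that proposition: add both arrows $V_j \rightarrow W_w$ and $V_j \rightarrow U_u$ simultaneously (justifying acyclicity by the same topological-ordering argument, together with the observation that a simple directed cycle can use at most one edge leaving $V_j$), and only then delete the two old arms via Proposition~\ref{prop:remove_arrow}. The restructuring is driven by your worry that chaining two applications of Proposition~\ref{prop:move_arrow_up} would be illegitimate because the topological ordering, and hence the index labels of Notation~\ref{notation:base}, could shift after the first application. That worry is in fact moot: the intermediate graph obtained by adding $V_j \rightarrow W_w$ and removing $V_i \rightarrow W_w$ still admits the \emph{original} index assignment as a topological ordering, since the added arrow respects it ($V_j$ precedes $V_i$, which precedes $W_w$, because $j<i$ and $V_i \rightarrow W_w \in \Gm$), and deleting an arrow never invalidates a topological order; so the second application of Proposition~\ref{prop:move_arrow_up} with the same indices is sound, which is what the paper's two-word proof implicitly relies on. What your version buys is self-containment: it makes explicit the non-interference of the two moves, at the cost of re-proving the content of Proposition~\ref{prop:move_arrow_up} rather than citing it. One caveat shared by both arguments (and really by the statement itself): if $V_j$ already lies elsewhere on the path, the "same path with the fork moved to $V_j$" is not literally a path, so the conclusion should be read modulo that degenerate case.
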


\begin{proof}
    Apply Proposition~\ref{prop:move_arrow_up} twice. 
\end{proof}

\begin{proposition}
\label{prop:move_arrow_down}
    Let $\Gc$ be a C-DAG, $\Gm$ be a compatible graph and $V^C$ be a cluster. If there exists $i < j$ such that $\Gm$ contains the arrow $V_i \leftcomingarrow W_w$, where  $\leftcomingarrow$, then there exists a compatible graph $\Gm'$ that contains the arrow $V_{j} \leftcomingarrow W_w$ and not $V_i \leftcomingarrow W_w$ if desired.
\end{proposition}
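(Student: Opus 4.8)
The plan is to mirror the proof of Proposition~\ref{prop:move_arrow_up}, exploiting the duality between outgoing and incoming arrows: there the \emph{source} of an \emph{outgoing} arrow was moved to a \emph{smaller} index, whereas here I move the \emph{target} of an \emph{incoming} arrow to a \emph{larger} index. First I would split on the type of $\leftcomingarrow$, treating the directed case $W_w \to V_i$ and the bidirected case $V_i \dashleftrightarrow W_w$ separately.

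For the directed case, I would build $\Gm'$ by adding the arrow $W_w \to V_j$ to $\Gm$ and show this creates no directed cycle. The crux is the following topological observation, dual to the one used in Proposition~\ref{prop:move_arrow_up}: since $i<j$, the convention of Notation~\ref{notation:base} places $V_i$ before $V_j$ in a topological ordering induced by $\Gm$, so $V_j$ is not an ancestor of $V_i$. Adding $W_w \to V_j$ would create a cycle only if $V_j$ were already an ancestor of $W_w$ in $\Gm$; but composing a directed path from $V_j$ to $W_w$ with the existing arrow $W_w \to V_i$ would make $V_j$ an ancestor of $V_i$, contradicting the observation. Hence $\Gm \cup \{W_w \to V_j\}$ is acyclic, i.e.\ an ADMG.

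Next I would verify compatibility and handle the optional removal. The added arrow $W_w \to V_j$ induces exactly the cluster-level edge $W^C \to V^C$, which is already present in $\Gc$ because $W_w \to V_i$ was; since no cluster-level edge is created or destroyed, $\Gm'$ remains compatible with $\Gc$. After the addition, $\Gm'$ contains the two same-type directed arrows $W_w \to V_i$ and $W_w \to V_j$ between $V^m$ and $W^m$, so Proposition~\ref{prop:remove_arrow} lets me delete $W_w \to V_i$ while preserving compatibility, giving the ``if desired'' clause.

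The bidirected case $V_i \dashleftrightarrow W_w$ is easier: adding $V_j \dashleftrightarrow W_w$ cannot create a directed cycle, since bidirected edges do not participate in directed paths, and it induces the same cluster-level edge $V^C \dashleftrightarrow W^C$ already present in $\Gc$, so compatibility is immediate and the optional removal again follows from Proposition~\ref{prop:remove_arrow}; the only point requiring a word of care is the degenerate possibility $V_j = W_w$ (which can occur only when $W^C = V^C$ and $j = w$), where a bidirected self-loop would be illegal, and which I would note does not arise for the indices in question. I expect the directed-case acyclicity argument to be the main step, since it is the only place where the hypothesis $i<j$ is genuinely used, though it is routine once the topological observation is stated.
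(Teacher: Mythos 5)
Your proposal is correct and follows essentially the same route as the paper's proof: use the topological-ordering convention of Notation~\ref{notation:base} (together with the existing arrow into $V_i$) to show that adding $V_j \leftcomingarrow W_w$ cannot create a directed cycle, then invoke Proposition~\ref{prop:remove_arrow} for the optional deletion. Your version merely spells out details the paper leaves implicit (the directed/bidirected case split, the compatibility check, and the degenerate $V_j = W_w$ remark), so no further comparison is needed.
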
 

\begin{proof}
    $i < j$, thus by the convention of Notation~\ref{notation:base}, we know that $W_w$ is not a descendant of $V_j$. Therefore, adding the arrow $V_j \leftcomingarrow W_w$ does not create a cycle.
    
    By Proposition~\ref{prop:remove_arrow}, $V_i \leftcomingarrow W_w$ can be removed if desired.
\end{proof}

\subsection{Proof of Theorem \ref{th:new_d_sep}}

We introduce Figure~\ref{fig:arrows_in_structure_of_interest} which helps the reader understanding structures of interest.

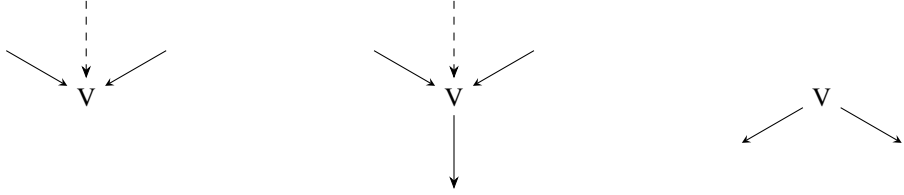
\begin{figure}[ht]
    \centering
    \begin{subfigure}[b]{0.3\textwidth}
        \centering
        \begin{tikzpicture}[->, >=Stealth, scale =.7]
            \node (A) {V};
            \node[circle, draw = none] (Blanc) at (0, -2) {};
            \foreach \i in {5,7} {
                \node[circle, draw = none] (B\i) at (90+\i*360/6:2) {};
                \draw[->, >=stealth] (B\i) -- (A);
            }
            \node (B) at (90+360:2) {};
            \draw[dashed, ->] (B) -- (A);
        \end{tikzpicture}
    \end{subfigure}
    \hfill
    \begin{subfigure}[b]{0.3\textwidth}
        \centering
        \begin{tikzpicture}[->, >=Stealth, scale =.7]
            \node (A) {V};
            \node[circle, draw = none] (Blanc) at (0, -2) {};
            \foreach \i in {5,7} {
                \node[circle, draw = none] (B\i) at (90+\i*360/6:2) {};
                \draw[->, >=stealth] (B\i) -- (A);
            }
            \node (B) at (90+360:2) {};
            \draw[dashed, ->] (B) -- (A);
            \draw[ ->] (A) -- (Blanc);
        \end{tikzpicture}
    \end{subfigure}
    \hfill
    \begin{subfigure}[b]{0.3\textwidth}
        \centering
        \begin{tikzpicture}[->, >=Stealth, scale =.7]
            \node (A) {V};
            \node[circle, draw = none] (Blanc) at (0, -2) {};
            \foreach \i in {2,4} {
                \node[circle, draw = none] (B\i) at (90+\i*360/6:2) {};
                \draw[->, >=stealth] (A) -- (B\i) ;
            }
            \node (B) at (90+360:2) {};
        \end{tikzpicture}
    \end{subfigure}
    \caption{We represent the three forms that arrows can take around a vertex with multiple arrow in a structure of interest. Some vertices have incoming arrows (left), some have incoming arrows and a single outgoing arrow (middle), and some have exactly two outgoing arrows and no incoming arrows (right).}
\label{fig:arrows_in_structure_of_interest}
\end{figure}

During the proofs, we are often led to construct structures of interest in which a root lies outside the target set. However, the graph containing such a structure also includes a directed path from this problematic root (outside the target set) to a vertex within the target set. Lemma~\ref{lemma:remove_root} shows how to exploit this directed path to construct a new structure of interest where the problematic root has been removed. Figure~\ref{fig:example1} depicts this idea.

\begin{figure}[ht]
    \centering
    \begin{subfigure}{0.45\textwidth}
    \centering
    \begin{tikzpicture}[->, >=Stealth]

    \node[text=black, font=\bfseries] (A) at (0,0) {$A$};
    \node[text=black, font=\bfseries] (B) at (2,0) {$B$};
    \node[text=black, font=\bfseries] (C) at (1,-1) {$C$};
    \node[text=black] (R1) at (0,-2) {$R_1$};
    \node[text=black, font=\bfseries] (R2) at (2,-2) {$R_2$};
    
    \draw[<->, dashed, , line width=1.1pt] (A) to[bend left=25] (B);
    \draw[->, line width=1.1pt] (A) -- (C);
    \draw[->, draw=gray] (B) -- (C);
    \draw[->, line width=1.1pt] (C) -- (R2);
    \draw[->, draw=gray] (C) -- (R1);
    
    \node[draw=black, dotted, fit=(R1)(R2), ellipse, inner sep=5pt, scale=.9] {};

    \end{tikzpicture}
        \caption{}
        \label{fig:example1:a}
    \end{subfigure}
    \hfill
    \begin{subfigure}{0.45\textwidth}
    \centering
    \begin{tikzpicture}[->, >=Stealth]
    
    \node[text=black, font=\bfseries] (A) at (0,0) {$A$};
    \node[text=black, font=\bfseries] (B) at (2,0) {$B$};
    \node[text=black, font=\bfseries] (C) at (1,-1) {$C$};
    \node[text=black] (R1) at (0,-2) {$R_1$};
    \node[text=black, font=\bfseries] (R2) at (2,-2) {$R_2$};
    
    \draw[<->, dashed, , line width=1.1pt] (A) to[bend left=25] (B);
    \draw[->, line width=1.1pt] (A) -- (C);
    \draw[->, line width=1.1pt] (B) -- (C);
    \draw[->, line width=1.1pt] (C) -- (R2);
    \draw[->, draw=gray] (C) -- (R1);
    
    \node[draw=black, dotted, fit=(R1)(R2), ellipse, inner sep=5pt, scale=.9] {};
    
    \end{tikzpicture}
    \caption{}
    \label{fig:example1:b}
    \end{subfigure}
    \caption{Figure~\ref{fig:example1:a} depicts a graph containing a structure of interest shown in bold black. Let us assume that the target set of roots is $\{R_1, R_2\}$. In this graph, $B$ is a root outside the target set. However, the graph contains a directed path $\langle B, C, R_1 \rangle$ from $B$ to a vertex in the target set. Figure~\ref{fig:example1:b} illustrates how this path can be used to construct a structure of interest in which $B$ is no longer a root, without introducing a new root outside the target set.}
    \label{fig:example1}
\end{figure}
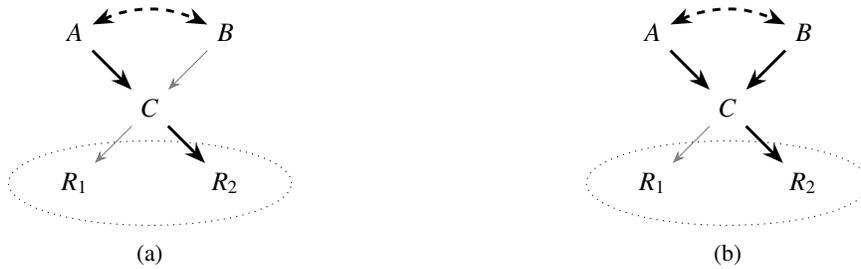

\begin{lemma}[Add a Path to Remove a Problematic Root]
\label{lemma:remove_root}
    Let $\G$ be an ADMG. Let $\X$, $\Y$ and $\Z$ be pairwise distinct subsets of nodes of $\G$. Let $\sigma$ be a structure of interest such that:
    \begin{itemize}
        \item $\sigma \subseteq \G$
        \item \(\X \cap \sigma \neq \emptyset\) and \(\Y \cap \sigma \neq \emptyset\).
        \item \((\sigma \setminus \Root(\sigma)) \cap \Z = \emptyset\). 
    \end{itemize}
    If there exists $R \in \left(\Root(\sigma) \setminus \Z\right) \cap \Anc(\Z, \G)$ then $\G$ contains a structure of interest $\sigma'$ such that:
    \begin{itemize}
        \item[(a)] $\sigma' \subseteq \G$
        \item[(b)] \(\X \cap \sigma' \neq \emptyset\) and \(\Y \cap \sigma' \neq \emptyset\).
        \item[(c)] \((\sigma' \setminus \Root(\sigma')) \cap \Z = \emptyset\). 
        \item[(d)] $\Root(\sigma') \setminus \Z \subseteq \left( \Root(\sigma) \setminus \Z \right) \setminus \{R\}$.
    \end{itemize}

\end{lemma}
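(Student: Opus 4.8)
The plan is to realize inside $\sigma$ the directed path that witnesses $R \in \Anc(\Z,\G)$, thereby giving $R$ an outgoing arrow so it stops being a root, while making sure that the only root possibly created lies in $\Z$. Since $R \in \Anc(\Z,\G)$ and $R \notin \Z$, I would first fix a directed path $P = \langle R = P_0 \to P_1 \to \cdots \to P_k\rangle$ in $\G$ with $P_k \in \Z$; acyclicity of $\G$ makes $P$ simple. Let $m \ge 1$ be the least index such that $P_m$ is a vertex of $\sigma$ or $P_m \in \Z$ (such $m$ exists because $P_k \in \Z$), and let $Q$ be the initial segment of $P$ from $R$ to $P_m$. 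By minimality, the internal vertices $P_1,\dots,P_{m-1}$ lie neither in $\sigma$ nor in $\Z$.

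I would then set $\sigma' := \sigma \cup Q$ in every case except the one treated in the next paragraph. The internal vertices $P_1,\dots,P_{m-1}$ are fresh, each carrying exactly one incoming and one outgoing arrow, so they meet Definition~\ref{def:structure_of_interest}; and $R$, previously a root (no outgoing arrow), now carries exactly one outgoing arrow, which is still admissible. The only vertex at risk is $P_m$, which receives the new arrow $P_{m-1}\to P_m$. If $P_m \in \Z$, then either $P_m \notin \sigma$ (a fresh root lying in $\Z$), or $P_m \in \sigma$, in which case the third hypothesis $(\sigma\setminus\Root(\sigma))\cap\Z=\emptyset$ forces $P_m \in \Root(\sigma)$, so $P_m$ had no outgoing arrow; either way one extra incoming arrow is admissible. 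If $P_m$ is a vertex of $\sigma\setminus\Z$ with at most one outgoing arrow, adding one incoming arrow is again admissible.

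The expected main obstacle is the remaining case, where $P_m =: F$ is a fork of $\sigma$, i.e.\ $F$ has two children $C_1,C_2$ and no incoming arrow in $\sigma$: then $P_{m-1}\to F$ would leave $F$ with two outgoing and one incoming arrow, violating Definition~\ref{def:structure_of_interest}. To repair this I would delete one of $F$'s outgoing edges. Since $F$ has no incoming arrow in $\sigma$, its only neighbours there are $C_1,C_2$, so any path of $\sigma$ from $R$ to $F$ must enter $F$ through one of them; let $C_1$ be such a child and put $\sigma' := (\sigma \cup Q) \setminus \{F \to C_1\}$. Now $F$ has a single outgoing arrow $F\to C_2$ and is admissible, and deleting an incoming edge of $C_1$ cannot break its admissibility. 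The crux is connectivity: if $F\to C_1$ is not a bridge of the underlying undirected graph of $\sigma$, its removal keeps $\sigma$ connected; if it is a bridge, its removal separates $F$ from $R$ (which lies on the $C_1$-side, since the chosen $\sigma$-path from $R$ reaches $F$ through $C_1$), but the freshly added $Q$ reconnects $R$ to $F$. In either case $\sigma'$ has a single connected component and retains every vertex of $\sigma$.

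Finally I would check (a)--(d). Acyclicity and $\sigma'\subseteq\G$ are immediate, as $\sigma,Q\subseteq\G$ and we only ever delete an edge; (b) holds since no vertex of $\sigma$ is removed. For the roots, the only out-degree changes are that $R$ gains an outgoing arrow (losing root status), the fresh $P_i$ carry outgoing arrows, and in the fork case $F$ drops from two to one outgoing arrow (remaining a non-root). Hence $\Root(\sigma') = \Root(\sigma)\setminus\{R\}$, together with a single fresh root $P_m\in\Z$ in the subcase $P_m\in\Z$ with $P_m\notin\sigma$; this yields (d), namely $\Root(\sigma')\setminus\Z \subseteq (\Root(\sigma)\setminus\Z)\setminus\{R\}$. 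For (c), a non-root vertex of $\sigma'$ is an internal $P_i$ (not in $\Z$ by construction), or $R$ (not in $\Z$ by hypothesis), or a vertex $v$ of $\sigma$ other than $R$ still carrying an outgoing arrow in $\sigma'$; since outgoing arrows are added only to $R$ and the fresh $P_i$ and deletion only removes one outgoing arrow from $F$, such a $v$ already carried an outgoing arrow in $\sigma$, so it is a non-root of $\sigma$ and lies outside $\Z$ by the third hypothesis. This closes all four conditions.
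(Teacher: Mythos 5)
Your proof is correct, and its skeleton matches the paper's own proof: fix a directed path witnessing $R \in \Anc(\Z, \G)$, truncate it at its first contact with $\sigma$ (or with $\Z$), and glue the truncated segment onto $\sigma$ so that $R$ acquires an outgoing arrow and stops being a root; in both proofs the only genuine obstruction is that the attachment vertex may be a fork of $\sigma$, which would then carry two outgoing arrows plus one incoming arrow, violating Definition~\ref{def:structure_of_interest}. Where you genuinely diverge is in repairing that fork. The paper does path surgery: through a three-way case analysis on how a path $\pi^{XR}$ from $\X$ to $R$ interacts with a connecting path $\pi^\star$, it exhibits a path between $\X$ and $\Y$ inside $\sigma \cup \pi'$ that avoids the fork configuration $A \leftarrow W \rightarrow B$, deletes an outgoing arrow of $W$ unused by that path, and finally discards every connected component except the one containing $\X$ and $\Y$ (so its $\sigma'$ may lose vertices of $\sigma$). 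You instead select \emph{which} outgoing edge of the fork $F$ to delete, namely $F \to C_1$ where $C_1$ is the child through which a $\sigma$-path from $R$ enters $F$, and argue via a bridge dichotomy that the adjoined segment $Q$ restores connectivity; consequently $\sigma'$ retains every vertex of $\sigma$ and no component pruning is needed. Your route buys a shorter, more local argument with the slightly stronger conclusion $\Root(\sigma') \subseteq \left(\Root(\sigma) \setminus \{R\}\right) \cup \left(\Z \setminus \sigma\right)$ and full vertex preservation; the paper's route avoids having to choose an entry side but pays with the case analysis and the pruning step, which is harmless for the lemma as stated. One further simplification of your own argument: the bridge dichotomy is dispensable, since the prefix of the $\sigma$-path from $C_1$ to $R$, followed by $Q$, is a walk from $C_1$ to $F$ in $\sigma \cup Q$ avoiding $F \to C_1$ (the edges of $Q$ are new because $R$ is a root of $\sigma$ and the internal vertices of $Q$ are fresh), so that edge is never a bridge of $\sigma \cup Q$ and its removal always leaves the union connected.
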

\begin{proof}[Sketch of proof]
    $R \in \Anc(\Z, \G)$, thus $\G$ contains a directed path $\pi$ from $R$ to $\Z$. We add this path to $\sigma$ to remove $R$ from the root set. The end of the path is in $\Z$, thus we do not add a problematic root.
\end{proof}

\begin{proof}
    $R \in \Anc(\Z, \G)$, thus $\G$ contains a directed path $\pi$ from $R$ to $\Z$. Without loss of generality, we assume that $\pi$ meets $\Z$ only at its last vertex. We construct $\sigma'$ with the following procedure:
    \begin{itemize}
        \item If $\sigma \cap \pi \setminus \{R\} = \emptyset$, then $\sigma \cup \pi$ is a structure of interest. In this case, we set $\sigma' \gets \sigma \cup \pi$. We have the following properties:
        \begin{itemize}
            \item[(a)] $\sigma' \subseteq \G$ because $\sigma \subseteq \G$ and $\pi \subseteq \G$.
            
            \item[(b)] \(\X \cap \sigma \subseteq \X \cap \sigma'\) and \(\Y \cap \sigma \subseteq \Y \cap \sigma'\) thus \(\X \cap \sigma' \neq \emptyset\) and \(\Y \cap \sigma' \neq \emptyset\).
            
            \item[(c)] By construction, $\Root(\sigma') = \Root(\sigma) \cup \Root(\pi) \setminus \{R\}$. Thus, $\sigma' \setminus \Root(\sigma') = \{R\} \cup (\sigma \cup \pi) \setminus(\Root(\sigma) \cup \Root(\pi)) \subseteq \{R\} \cup (\sigma \setminus \Root(\sigma)) \cup (\pi \setminus \Root(\pi))$. Since $R \notin \Z$, \((\sigma \setminus \Root(\sigma)) \cap \Z = \emptyset\), and \((\pi \setminus \Root(\pi)) \cap \Z = \emptyset\) (because  $\pi$ meets $\Z$ only at its last vertex), we can conclude that \((\sigma' \setminus \Root(\sigma')) \cap \Z = \emptyset\).
            
            \item[(d)]  $\Root(\sigma') = \Root(\sigma) \cup \Root(\pi) \setminus \{R\}$. Since $\Root(\pi) \subseteq \Z$, we conclude that $\Root(\sigma') \setminus \Z \subseteq \left( \Root(\sigma) \setminus \Z \right) \setminus \{R\}$.
        \end{itemize}
        
        \item Otherwise, $\sigma \cap \pi \setminus \{R\} \neq \emptyset$. Let $W$ be the first encounter of $\pi$ and $\sigma \setminus \{R\}$ and let $\pi'$ be the subpath of $\pi$ from $R$ to $W$.\footnote{$\pi$ and $\pi'$ may be equal.} In this case, we set $\sigma' \gets \sigma \cup \pi'$. We have (a), (b), but also
        \begin{itemize}
                
            \item[(c)] By construction, $\Root(\sigma') = \Root(\sigma) \setminus \{R\}$. Thus, $\sigma' \setminus \Root(\sigma') = \{R\} \cup (\sigma \setminus \Root(\sigma)) \cup (\pi'\setminus \Root(\sigma))$. Since $\pi$ meets $\Z$ only at its last vertex, we know that $\pi' \cap \Z \subseteq \{W\}$.\footnote{$\pi' \cap \Z$ is empty if $\pi' \neq \pi$, otherwise, it is equal to \{W\}.} Thus, $(\pi'\setminus \Root(\sigma)) \cap \Z = (\pi'  \cap \Z) \setminus \Root(\sigma) \subseteq \{W\} \setminus \Root(\sigma) \subseteq \sigma \setminus \Root(\sigma)$. Since $R \notin \Z$ and \((\sigma \setminus \Root(\sigma)) \cap \Z = \emptyset\), we can conclude that \((\sigma' \setminus \Root(\sigma')) \cap \Z = \emptyset\).
    
            \item[(d)]  $\Root(\sigma') = \Root(\sigma) \setminus \{R\}$. Therefore, $\Root(\sigma') \setminus \Z \subseteq \left( \Root(\sigma) \setminus \Z \right) \setminus \{R\}$.
        \end{itemize} 
        However, $\sigma'$ is not necessarily a structure of interest. By construction, $W$ is the only node of $\sigma \cup \pi'$ which does not necessarily satisfy the conditions of Definition~\ref{def:structure_of_interest}. Since $\sigma$ is a structure of interest, the arrows around $W$ in $\sigma$ are necessarily in one of the three cases described by Figure~\ref{fig:arrows_in_structure_of_interest}. The right hand case is the only case where adding an incoming arrow prevents $\sigma'$ from being a structure of interest. Thus, if $\sigma'$ is not a structure of interest, it means that $W$ has one incoming arrow and two outgoing arrows in $\sigma'$. Let $A$ and $B$ be the two children of $W$ in $\sigma'$. Moreover, since $\sigma \cap \pi \neq \emptyset$, we know that $\sigma'$ has a single connected component. Let $\pi^\star$ be a path between $X \in \X$ and $Y \in \Y$ in $\sigma'$. Let us show that we can assume that $\pi^\star$ does not use $A \leftarrow W \rightarrow B$:
        \begin{itemize}
            \item If  $\pi^\star$ uses $A \leftarrow W \rightarrow B$. Without loss of generality, we assume that $A$ is before $B$ in $\pi^\star$. Let $\pi^{XR}$ be a path from $X$ to $R$ in $\sigma$. We distinguish two cases:
            \begin{itemize}
                \item If $\pi^{XR}$ does not encounter $W$ (cf Figure~\ref{fig:example2:a}). We consider $\theta \coloneqq \pi^{XR} \cup \pi' \cup \pi^\star_{[W,Y]}$. $\theta$ is a subgraph of $\sigma'$ in which $\X$ and $\Y$ are connected.\footnote{$\theta$ is not necessarily a path.} By construction, $\theta$ does not contain $A \leftarrow W$. Thus $\theta \subseteq \sigma'$ contains a path between $\X$ and $\Y$ that does not use $A \leftarrow W \rightarrow B$.

                \item Otherwise, $\pi^{XR}$ encounters $W$. Since $\pi^{XR} \subseteq \sigma$, $\pi^{XR}$ uses $A \leftarrow W \rightarrow B$. We distinguish two cases:
                \begin{itemize}
                    \item If $A$ is before $B$ in $\pi^{XR}$ (cf Figure~\ref{fig:example2:b}), we consider $\theta \coloneqq \pi^\star_{[XW]} \cup \pi' \cup \pi^{XR}_{[R,B]} \cup \pi^\star_{[B,Y]}$. $\theta$ is a subgraph of $\sigma'$ in which $\X$ and $\Y$ are connected. By construction, $\theta$ does not contain $W \rightarrow B$. Thus $\theta \subseteq \sigma'$ contains a path between $\X$ and $\Y$ that does not use $A \leftarrow W \rightarrow B$.
                    
                    \item Otherwise, $A$ is after $B$ in $\pi^{XR}$ (cf Figure~\ref{fig:example2:c}), we consider $\theta \coloneqq  \pi^{XR}_{[X,B]} \cup \pi^\star_{[B,Y]}$. $\theta$ is a subgraph of $\sigma'$ in which $\X$ and $\Y$ are connected. By construction, $\theta$ does not contain $A \leftarrow W \rightarrow B$. Thus $\theta \subseteq \sigma'$ contains a path between $\X$ and $\Y$ that does not use $A \leftarrow W \rightarrow B$.
                \end{itemize}
            \end{itemize}
        \end{itemize}
        Thus, without loss of generality, we assume that $\pi^\star$ does not use $A \leftarrow W \rightarrow B$. We remove from $\sigma'$ one outgoing arrow from $W$ that is not used by $\pi^\star$. By doing so, $W$ satisfies the conditions of Definition \ref{def:structure_of_interest}. Thus, all the vertices in $\sigma'$ now satisfy the conditions of Definition \ref{def:structure_of_interest}, and satisfies (a) and (b), and
        \begin{itemize}
                
            \item[(c)] \((\sigma' \setminus \Root(\sigma')) \cap \Z = \emptyset\) because removing the arrow does not change the vertices nor the roots of $\sigma'$.
    
            \item[(d)] $\Root(\sigma') \setminus \Z \subseteq \left( \Root(\sigma) \setminus \Z \right) \setminus \{R\}$ because removing the arrow does not change the roots of $\sigma'$.
        \end{itemize} 
        
        However, $\sigma'$ does not contain necessarily a single connected component. Thus, we only keep the connected component of $X$, which contains $Y$ via $\pi^\star$. By doing so, $\sigma'$ is now a structure of interest and we have:
        \begin{itemize}
        \item[(c)] \((\sigma' \setminus \Root(\sigma')) \cap \Z = \emptyset\) because we consider a subgraph.
        \item[(d)] $\Root(\sigma') \setminus \Z \subseteq \left( \Root(\sigma) \setminus \Z \right) \setminus \{R\}$ because we consider a subgraph.
    \end{itemize}
    \end{itemize}
\end{proof}

\begin{figure}[ht]
    \centering

    \begin{subfigure}{0.5\textwidth}
        \centering
        \begin{tikzpicture}[>=Stealth, scale=1.1]
                \node (R) at (0,0) {$R$};
                \node (W) at (0,-1) {$W$};
                \node (A) at (-1,-2) {$A$};
                \node (B) at (1,-2) {$B$};
                \node (X) at (-2,0) {$X$};
                \node (Y) at (2,0) {$Y$};
                
                \draw[->] (W) -- (A);
                \draw[->] (W) -- (B);
                \draw[decorate, decoration={snake, amplitude=0.6mm, segment length=2mm}] (R) -- node[midway, right, font=\scriptsize] {$\pi'$} (W);
                \draw[decorate, decoration={snake, amplitude=0.6mm, segment length=4mm}] (X) -- (A);
                \draw[decorate, decoration={snake, amplitude=0.6mm, segment length=4mm}] (Y) -- (B);
                \draw[decorate, decoration={snake, amplitude=0.6mm, segment length=4mm}] (X) -- (R);
                
                \draw[dotted, transform canvas={yshift=3mm}]
                plot[smooth, tension=.5]
                coordinates {(X) (R)}
                node[midway, above,xshift=-1cm, font=\scriptsize] {$\pi^{XR}$};
        
                \draw[dotted, transform canvas={yshift=-5mm}]
                    plot[smooth, tension=.5]
                    coordinates {(X) (A) (W) (B) (Y)}
                    node[midway, below, yshift=-11mm, font=\scriptsize] {$\pi^\star$};
        \end{tikzpicture}
        \caption{}
        \label{fig:example2:a}
    \end{subfigure}

    \begin{subfigure}{0.45\textwidth}
        \centering
        \begin{tikzpicture}[>=Stealth, scale=1.1]
                \node (R) at (0,0) {$R$};
                \node (W) at (0,-1) {$W$};
                \node (A) at (-1,-2) {$A$};
                \node (B) at (1,-2) {$B$};
                \node (X) at (-2,0) {$X$};
                \node (Y) at (2,0) {$Y$};
                
                \draw[->] (W) -- (A);
                \draw[->] (W) -- (B);
                \draw[decorate, decoration={snake, amplitude=0.6mm, segment length=2mm}] (R) -- node[midway, left, font=\scriptsize] {$\pi'$} (W);
                \draw[decorate, decoration={snake, amplitude=0.6mm, segment length=4mm}] (X) -- (A);
                \draw[decorate, decoration={snake, amplitude=0.6mm, segment length=4mm}] (Y) -- (B);
                
                \path (X) edge[decorate, decoration={snake, amplitude=0.3mm, segment length=2mm}, bend left=35]  (A);
                
                \path (R) edge[decorate, decoration={snake, amplitude=0.3mm, segment length=2mm}, bend left=35]  (B);

                \draw[dotted, transform canvas={yshift=-5mm}]
                    plot[smooth, tension=.5]
                    coordinates {(X) (A) (W) (B) (Y)}
                    node[midway, below, yshift=-11mm, font=\scriptsize] {$\pi^\star$};

                \draw[dotted]
                    plot[smooth, tension=.7]
                    coordinates {(-1.7,0) (-1.275, -0.275) (-.95,-.75) (-.65,-1.4) (0.02,-.8) (.9,-1.6) (0.725, -0.85) (.15,-.25)}
                    node[midway, below,xshift=-8mm, yshift=-1mm, font=\scriptsize] {$\pi^{XR}$};
                
        \end{tikzpicture}
        \caption{}
        \label{fig:example2:b}
    \end{subfigure}
    \hfill
    \begin{subfigure}{0.45\textwidth}
        \centering
        \begin{tikzpicture}[>=Stealth, scale=1.1]
                \node (R) at (0,0) {$R$};
                \node (W) at (0,-1) {$W$};
                \node (A) at (-1,-2) {$A$};
                \node (B) at (1,-2) {$B$};
                \node (X) at (-2,0) {$X$};
                \node (Y) at (2,0) {$Y$};
                
                \draw[->] (W) -- (A);
                \draw[->] (W) -- (B);
                \draw[decorate, decoration={snake, amplitude=0.6mm, segment length=2mm}] (R) -- node[midway, right, font=\scriptsize] {$\pi'$} (W);
                \draw[decorate, decoration={snake, amplitude=0.6mm, segment length=4mm}] (X) -- (A);
                \draw[decorate, decoration={snake, amplitude=0.6mm, segment length=4mm}] (Y) -- (B);
                
                \path (X) edge[decorate, decoration={snake, amplitude=0.3mm, segment length=2mm}, bend left=55,looseness = 1.8]  (B);
                
                \path (A) edge[decorate, decoration={snake, amplitude=0.3mm, segment length=2mm}, bend left=25]  (R);
        
                \draw[dotted, transform canvas={yshift=-5mm}]
                    plot[smooth, tension=.5]
                    coordinates {(X) (A) (W) (B) (Y)}
                    node[midway, below, yshift=-11mm, font=\scriptsize] {$\pi^\star$};

                \draw[dotted]
                    plot[smooth, tension=.7]
                    coordinates {(-1.7,0) (.35,.3) (.9,-1.6) (0.02,-.8) (-.775,-1.5) (-0.65, -0.775) (-.15,-.25)}
                    node[midway, below,xshift=-10mm, yshift=3mm, font=\scriptsize] {$\pi^{XR}$};
                    
        \end{tikzpicture}
        \caption{}
        \label{fig:example2:c}
    \end{subfigure}

    \caption{Helping figure for the Proof of Lemma \ref{lemma:remove_root}. $\pi^\star$ and $\pi^{XR}$ are represented by the dotted paths following the arrows. A squiggly arrow represents an arbitrary path.}
    \label{fig:example2}
\end{figure}
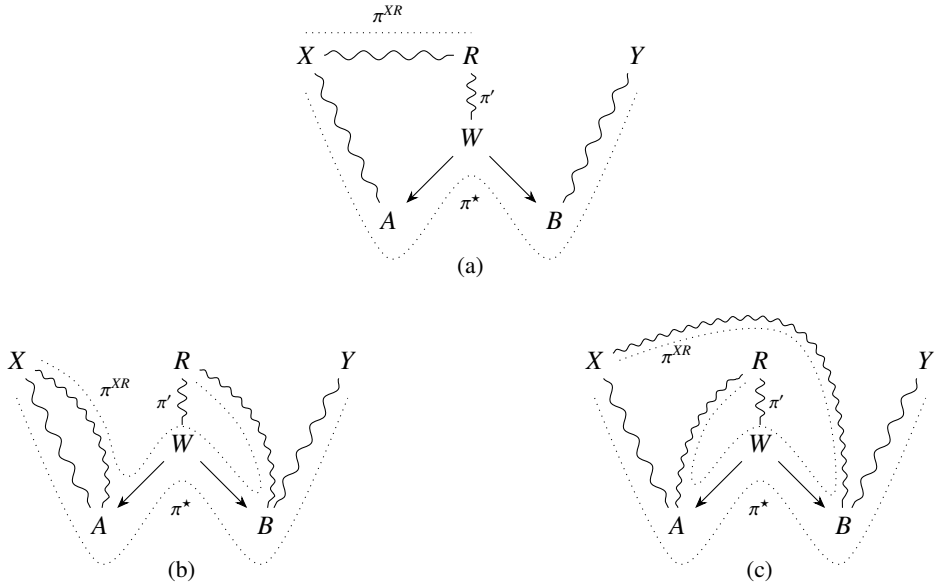

\begin{corollary}
\label{cor:add_paths}
    Let $\G$ be an ADMG. Let $\X$, $\Y$, $\Z$ and $\mathcal{R}$ be pairwise distinct subsets of nodes of $\G$. Let  $\G$ contains a structure of interest $\sigma$ such that:
    \begin{itemize}
        \item $\X \cap \sigma \neq \emptyset$ and $\Y \cap \sigma \neq \emptyset$
        \item $\Root(\sigma) \subseteq \Z \cup \X \cup \Y \cup \mathcal{R}$
        \item $(\sigma \setminus \Root(\sigma)) \cap \Z = \emptyset$
    \end{itemize}
     If for all $R \in \mathcal{R}$, $R \in \Anc(\Z, \G)$, then $\G$ contains a structure of interest that connects $\X$ and $\Y$ under $\Z$.
\end{corollary}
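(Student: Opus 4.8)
The plan is to reduce the corollary to a repeated application of Lemma \ref{lemma:remove_root}, eliminating the problematic roots one at a time. Observe first that the only obstruction to $\sigma$ already connecting $\X$ and $\Y$ under $\Z$ in the sense of Definition \ref{def:connecting_structure_of_interest} is the second condition: by hypothesis we have $\X \cap \sigma \neq \emptyset$, $\Y \cap \sigma \neq \emptyset$ and $(\sigma \setminus \Root(\sigma)) \cap \Z = \emptyset$, but $\Root(\sigma)$ may meet $\mathcal{R}$. Since $\mathcal{R}$ is disjoint from $\Z \cup \X \cup \Y$, the set of problematic roots is exactly $\Root(\sigma) \cap \mathcal{R}$, and I would induct on its cardinality.

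For the base case, if $\Root(\sigma) \cap \mathcal{R} = \emptyset$, then $\Root(\sigma) \subseteq \Z \cup \X \cup \Y$, so $\sigma$ satisfies all three conditions of Definition \ref{def:connecting_structure_of_interest} and we are done. For the inductive step, pick any $R \in \Root(\sigma) \cap \mathcal{R}$. Because $\mathcal{R} \cap \Z = \emptyset$ we have $R \notin \Z$, and by hypothesis $R \in \Anc(\Z, \G)$; hence $R \in (\Root(\sigma) \setminus \Z) \cap \Anc(\Z, \G)$, which is precisely the precondition of Lemma \ref{lemma:remove_root}. The lemma's remaining hypotheses hold by assumption, so it yields a structure of interest $\sigma'$ satisfying its conclusions (a)--(d).

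It then remains to check that $\sigma'$ again meets the hypotheses of the corollary, with a strictly smaller problematic-root set. Property (d), $\Root(\sigma') \setminus \Z \subseteq (\Root(\sigma) \setminus \Z) \setminus \{R\}$, guarantees both that no new root outside $\Z \cup \X \cup \Y$ is created (so $\Root(\sigma') \subseteq \Z \cup \X \cup \Y \cup \mathcal{R}$ still holds) and that $\Root(\sigma') \cap \mathcal{R} \subseteq (\Root(\sigma) \cap \mathcal{R}) \setminus \{R\}$, strictly decreasing the count. Properties (b) and (c) preserve the first and third connecting conditions, and the ancestor hypothesis $R' \in \Anc(\Z,\G)$ for all $R' \in \mathcal{R}$ is unchanged. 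Applying the induction hypothesis to $\sigma'$ then produces a structure of interest that connects $\X$ and $\Y$ under $\Z$.

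The argument is essentially bookkeeping once Lemma \ref{lemma:remove_root} is in place; the only point requiring care is that all four guarantees (a)--(d) of the lemma are needed simultaneously to keep the induction hypothesis available. In particular, (d) is exactly what prevents the elimination of one problematic root from spawning another, and I do not expect any genuine difficulty beyond verifying that these invariants survive each step.
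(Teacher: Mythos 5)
Your proof is correct and follows essentially the same route as the paper, which likewise applies Lemma \ref{lemma:remove_root} iteratively to eliminate each problematic root in $\mathcal{R}$. Your version simply makes the induction explicit and verifies that property (d) maintains the invariant, which the paper's terser proof leaves implicit.
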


\begin{proof}
    We apply Lemma \ref{lemma:remove_root} iteratively for each $R \in \mathcal{R}$, we get a structure of interest $\sigma'$ such that:
    \begin{itemize}
        \item $\sigma' \subseteq \G$
        \item $\X \cap \sigma' \neq \emptyset$ and $\Y \cap \sigma' \neq \emptyset$
        \item $\Root(\sigma') \setminus \Z \subseteq (\Root(\sigma) \setminus \Z) \setminus \mathcal{R}$. Thus, $\Root(\sigma) \subseteq \Z \cup \X \cup \Y$.
        \item $(\sigma' \setminus \Root(\sigma')) \cap \Z = \emptyset$
    \end{itemize}
    Therefore, $\sigma'$ a structure of interest that connects $\X$ and $\Y$ under $\Z$.
\end{proof}

\thnewdsep*

\begin{proof}
Let us prove the two implications:
\begin{itemize}
    \item $\ref{th:new_d_sep:1} \Rightarrow \ref{th:new_d_sep:2}$: If \(\X \notind_{\G} \Y \mid \Z\), then, by definition, the ADMG \(\G\) contains a path \(\pi\) d-connecting \(\X\) and \(\Y\). Without loss of generality, we assume that $\pi$ encounters $\X$ only at its first vertex and $\Y$ only at its last vertex.  Since \(\pi\) is a d-connecting path, it is a structure of interest which satisfies the following properties:
    \begin{itemize}
        \item \(\pi \subseteq \G\).
        \item \(\X \cap \pi \neq \emptyset\) and \(\Y \cap \pi \neq \emptyset\).
        \item \((\pi \setminus \Root(\pi)) \cap \Z = \emptyset\).
    \end{itemize}
    
    However, some roots of \(\pi\) may not be in $\Z \cup \X \cup \Y$, preventing \(\pi\) from being a connecting structure of interest. Necessarily, theses roots are colliders. Define $\mathcal{R} \coloneqq \Root(\pi) \setminus \left( \Z \cup \X \cup \Y \right)$ to be the set of these colliders. Since $\pi$ is d-connecting, for all $R \in \mathcal{R}$, $R \in \Anc(\Z,\G)$. By Corollary~\ref{cor:add_paths}, $\G$ contains a structure of interest which connects \(\X\) and \(\Y\) under \(\Z\).

    \item $\ref{th:new_d_sep:2} \Rightarrow \ref{th:new_d_sep:1}$: By definition, $\sigma$ has a single connected component. Thus $\sigma$ contains a path $\pi$ from $\X$ to $\Y$. Without loss of generality, we assume that $\pi$ encounters $\X$ only at its first vertex and $\Y$ only at its last vertex. Let us first prove that without loss of generality, we can assume that all colliders on $\pi$ are ancestors of $\Z$.
    If it's not the case, since $\Root(\sigma) \subseteq \Z \cup \X \cup \Y$, all colliders that are not ancestors of $\Z$ are ancestors of $\X \cup \Y$. Without loss of generality, assume that a collider \(C\) on \(\pi\) is an ancestor of \(\X\). Thus, $\sigma$ contains a directed path $\pi^1$ from $C$ to $\X$. Let $\pi^2$ be the subpath of $\pi$ between $C$ and $\Y$. Since \(C\) belongs to both \(\pi^1\) and \(\pi^2\), these two paths intersect at \(C\), and possibly at other vertices. Let $T$ be the last vertex of $\pi^1$ that is in $\pi^1 \cap \pi^2$. Let $\pi' \coloneqq \pi^1_{[T,\X]} \cup \pi^2_{[T,\Y]}$. $\pi'$ is a path. Moreover, between $X$ and $T$,  $\pi'$ is a directed path, and, after $T$ to $\Y$, $\pi'$ is a subpath of $\pi_2$. Therefore, $\pi'$ contains at least one fewer collider than $\pi$ that is not an ancestor of $\Z$. Repeating this procedure iteratively allows us to construct a path in \(\sigma\) from \(\X\) to \(\Y\) in which all colliders are ancestors of \(\Z\).

    Finally, note that all forks and chains on \(\pi\) are not roots by definition, and hence do not belong to \(\Z\). Therefore, the resulting path \(\pi\) is \(d\)-connecting between \(\X\) and \(\Y\) given \(\Z\).
\end{itemize}
\end{proof}

\subsection{Proofs of the Properties of the canonical compatible Graph and the Unfolded Graph}

\subsubsection{Canonical Compatible Graph} 
\propgmmin*

We break the proof of Proposition~\ref{prop:gm_min} in Lemma~\ref{lemma:G3_compatible} and Lemma~\ref{lemma:gm_cup_g3_compatible}.

\begin{lemma}
\label{lemma:G3_compatible}
    Let $\Gc$ be a C-DAG. Then, $\Gt$, its canonical compatible graph, is compatible with $\Gc$.
\end{lemma}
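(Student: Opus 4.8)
The plan is to verify directly the two conditions that together define compatibility (Definitions~\ref{def:Cluster_DAG} and~\ref{def:equivalence_micro_admgs}): first, that applying the cluster map $\phi$ of Definition~\ref{def:Cluster_DAG} to $\Gt$ with the partition $\V^C$ returns exactly $\Gc$, i.e.\ $\phi(\Gt;\V^C)=\Gc$; and second, that $\Gt$ is a genuine ADMG, i.e.\ its directed part contains no cycle. The edge-matching is essentially bookkeeping, so the real work is the acyclicity.

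For the edge-matching I would proceed edge type by edge type, comparing each clause of Definition~\ref{def:gm_min} with the two clauses of Definition~\ref{def:Cluster_DAG}. A bidirected edge within $V^m,W^m$ appears in $\Gt$ only through step~\ref{def:gm_min:1}, which fires iff $V^C\dashleftrightarrow W^C$ lies in $\Gc$; the constraint $V_v\neq W_w$ merely forbids micro-level bidirected self-loops and is never an obstruction, since a bidirected cluster self-loop $V^C\dashleftrightarrow V^C$ forces $\#V^C\geq 2$ (the witnessing micro edge in $\G^m$ joins two distinct nodes). An intra-cluster directed edge $V_i\to V_j$ arises only from step~\ref{def:gm_min:2}, hence exists iff $\Gc$ carries the self-loop at $V^C$; an inter-cluster directed edge (with $V^C\neq W^C$) arises only from step~\ref{def:gm_min:3}, hence a directed edge from $V^m$ to $W^m$ exists in $\Gt$ iff $V^C\to W^C$ is in $\Gc$. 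Combining these equivalences yields $\phi(\Gt;\V^C)=\Gc$.

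The main obstacle is proving that $\Gt$ is acyclic, and the key is to read off from Definition~\ref{def:gm_min} the index geometry of the directed edges: intra-cluster edges strictly increase the index ($V_i\to V_j$ with $i<j$), every inter-cluster edge \emph{leaves} a cluster only from index $1$ (its tail is $V_1$), and every inter-cluster edge \emph{enters} a cluster only at the maximal index (its head is $W_{\#W^C}$). I would assume, for contradiction, a directed cycle $\gamma$ in $\Gt$ (all its edges are directed, so bidirected edges are irrelevant) and show it cannot meet any cluster of size at least $2$. Take a non-singleton cluster $V$ met by $\gamma$ and let $V_m$ be the $\gamma$-node in $V$ of largest index. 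Its outgoing cycle-edge cannot be intra-cluster, as that would produce a $\gamma$-node in $V$ of index $>m$; being inter-cluster, it forces $m=1$. But then the incoming cycle-edge at $V_1$ cannot be intra-cluster (there is no index below $1$), and being inter-cluster it must enter at the maximal index $\#V$, forcing $\#V=1$ — contradicting $\#V\geq 2$.

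It then remains to dispatch the singleton case. Every node of $\gamma$ lies in a singleton cluster, and singleton clusters carry no directed self-loop in $\Gt$ (step~\ref{def:gm_min:2} needs two distinct indices and step~\ref{def:gm_min:3} needs distinct clusters), so consecutive nodes of $\gamma$ lie in distinct clusters and $\gamma$ projects to a directed cycle of singleton clusters in $\Gc$. This is impossible: since $\Gc$ is, by Definition~\ref{def:Cluster_DAG}, obtained from an acyclic $\G^m$, such a cluster cycle would lift edge by edge (each $V_1\to W_1$ coming from a corresponding $V_1\to W_1$ in $\G^m$) to a directed cycle in $\G^m$ — equivalently, this is precisely the exclusion guaranteed by Proposition~\ref{prop:valid_gc} together with $\G^m\in\C(\Gc)$. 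Hence no such $\gamma$ exists, $\Gt$ is an ADMG, and combined with $\phi(\Gt;\V^C)=\Gc$ we conclude $\Gt\in\C(\Gc)$.
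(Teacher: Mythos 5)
Your proposal is correct and follows essentially the same route as the paper's proof: edge-matching is dispatched as bookkeeping, and acyclicity is shown by contradiction using the index geometry of $\Gt$ (intra-cluster edges increase indices, inter-cluster edges leave from $V_1$ and enter at $W_{\#W^C}$), concluding that any directed cycle would have to traverse only singleton clusters, contradicting Proposition~\ref{prop:valid_gc}. Your local max-index argument replaces the paper's induction along the cycle, but both rest on identical structural observations.
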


\begin{proof}
By construction, all arrows in $\Gc$ are represented in $\Gt$. Moreover, all arrows that are added correspond to an arrow in $\Gc$. Therefore, we only need to check for acyclicity to prove that $\Gt$ is a compatible graph with $\Gc$. 
By contradiction, let us assume that $\Gt$ contains a cycle $\pi$. First, we know that $\pi$ is not within a single cluster. Indeed, in each cluster $V$, $\Gt_{\mid V}$ is a $V_{\#V}$-rooted tree. Thus, $\pi$ encounters at least two clusters and has an arrow between two clusters. Let $A_1 \rightarrow B_{\#B}$ be such an arrow. We prove that necessarily, $\#A^C = 1$. Indeed, if $\#A^C \geq 2$, then no arrow in $\Gt$ is pointing on $A_1$. Similarly, we can show that $\#B^C = 1$. Thus, the next arrow cannot be pointing into $B^C$, thus the next arrow is also an arrow between two clusters. By induction, we show that all the clusters encountered by $\pi$ have a cardinal of $1$. This contradicts Proposition \ref{prop:valid_gc}. Therefore, $\Gt$ is acyclic. 

Therefore, $\Gt$ is a compatible graph with $\Gc$.
\end{proof}

\begin{lemma}
\label{lemma:gm_cup_g3_compatible}
    Let $\Gc$ be a C-DAG, $\Gt$ be its corresponding canonical compatible graph, and $\Gm$ be a compatible graph. Then,  $\Gt \cup \Gm$  is a compatible graph.
\end{lemma}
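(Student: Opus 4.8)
The plan is to verify the two requirements for membership in $\C(\Gc)$ separately: first that $\Gt \cup \Gm$ projects onto exactly $\Gc$ at the cluster level, and second that $\Gt \cup \Gm$ is acyclic. The projection requirement is the routine part. Since the cluster construction of Definition~\ref{def:Cluster_DAG} declares a cluster edge present \emph{iff} some micro-edge of the matching type exists, and a micro-edge lies in $\Gt \cup \Gm$ iff it lies in $\Gt$ or in $\Gm$, the cluster projection of $\Gt \cup \Gm$ is the union of the projections of $\Gt$ and of $\Gm$. By Lemma~\ref{lemma:G3_compatible} the former equals $\Gc$, and by hypothesis the latter equals $\Gc$; hence the projection of $\Gt \cup \Gm$ is $\Gc$, and in particular $\Gt \cup \Gm$ introduces no spurious cluster edge. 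It therefore remains only to establish acyclicity, which is where the real work lies.

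For acyclicity, the strategy is to exhibit a single linear order on $\V^m$ that every directed edge of $\Gt \cup \Gm$ respects. I would take $\prec$ to be the very topological order of $\Gm$ used to index the micro-variables (Notation~\ref{notation:base}), so that within each cluster $V_i \prec V_j$ iff $i < j$, and $\Gt$ is read with this same indexing. Every directed edge of $\Gm$ respects $\prec$ by definition. For $\Gt$, I would split its directed edges according to how they were created in Definition~\ref{def:gm_min}: the within-cluster edges from step~\ref{def:gm_min:2} have the form $V_i \to V_j$ with $i<j$, hence $V_i \prec V_j$; the bidirected edges from step~\ref{def:gm_min:1} are irrelevant to acyclicity. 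The only edges that could a priori violate $\prec$ are the inter-cluster edges from step~\ref{def:gm_min:3}, namely $V_1 \to W_{\#W}$.

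The crux is thus to show $V_1 \prec W_{\#W}$ for every cluster edge $V^C \to W^C$ of $\Gc$. Here I would invoke compatibility of $\Gm$: since $V^C \to W^C$ is in $\Gc$, there exist indices $a,b$ with $V_a \to W_b$ in $\Gm$, and because $\prec$ is topological for $\Gm$ we get $V_a \prec W_b$. As $V_1$ is the $\prec$-minimal node of cluster $V$ and $W_{\#W}$ the $\prec$-maximal node of cluster $W$, it follows that $V_1 \preceq V_a \prec W_b \preceq W_{\#W}$, so $V_1 \prec W_{\#W}$. Consequently every directed edge of $\Gt \cup \Gm$ respects $\prec$, so $\Gt \cup \Gm$ has no directed cycle; combined with the projection computed above, this yields $\Gt \cup \Gm \in \C(\Gc)$.

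The main obstacle is precisely the acyclicity step, since the union of two acyclic graphs is in general cyclic, so the argument must exploit the specific construction of $\Gt$ rather than treat $\Gt$ and $\Gm$ symmetrically. The observation that unlocks it is that the topological order of $\Gm$ \emph{already} orders the canonical inter-cluster edges correctly, which hinges on $V_1$ and $W_{\#W}$ being the extreme elements of their clusters together with the fact that $\Gm$ realizes each cluster edge by at least one micro-edge. Once this is seen, no separate global reordering (as was needed in the proof of Lemma~\ref{lemma:G3_compatible}) is required.
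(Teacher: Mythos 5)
Your proof is correct, and it rests on the same key observation as the paper's: by Notation~\ref{notation:base} the within-cluster indices follow a topological order of $\Gm$, and the inter-cluster edges of $\Gt$ are deliberately placed at the extremes ($V_1 \to W_{\#W^C}$), so compatibility of $\Gm$ forces them to be consistent with that order. The packaging differs, however. The paper argues edge-by-edge: it takes each arrow $a$ of $\Gt$ not in $\Gm$, splits into the same three cases as you (bidirected, within-cluster, inter-cluster), and for the inter-cluster case invokes Propositions~\ref{prop:move_arrow_up} and~\ref{prop:move_arrow_down} to conclude that $a$ can be added without creating a cycle. You instead exhibit a single linear order $\prec$ (the topological order of $\Gm$) and check that every directed edge of $\Gt \cup \Gm$ respects it, certifying acyclicity of the union in one shot. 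Your formulation is arguably tighter: the paper's iterated insertion implicitly needs the invariant that, after each addition, the index order remains topological for the augmented graph (otherwise the cited propositions, which are stated for a compatible graph $\Gm$, no longer literally apply), and your global-order argument is precisely what makes that invariant explicit, so the subtlety disappears. What the paper's version buys is reuse of the two move-arrow propositions, which recur in several later proofs (e.g., Proposition~\ref{lemma:gm_subgraph_gcu} and Theorem~\ref{th:infinity_leq_three}), keeping the appendix stylistically uniform. You also spell out the projection step (the cluster projection of a union is the union of the projections, hence equals $\Gc$), which the paper compresses into its opening sentence; making it explicit is harmless and slightly more complete.
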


\begin{proof}
Since $\Gm$ and $\Gt$ are compatible, then we only need to check for acyclicity. Let label indices according to Notation \ref{notation:base}. Let $a$ be an arrow in $\Gt$ that is not in $\Gm$. We distinguish three cases:
\begin{itemize}
    \item If $a$ is a dashed-bidirected arrow, then adding $a$ does not create a cycle.
    
    \item If $a$ is a directed arrow inside a cluster $V^C$, since $\Gm$ follows Notation \ref{notation:base}, then the indices in $V^m$ follow a topological ordering associated with $\Gm$. Therefore, $a$ can be added without creating a cycle.
    
    \item Otherwise, $a$ corresponds to an arrow between two clusters $V^m$ and $U^m$. $\Gm$ is compatible thus it also contains an arrow from $U^m$ to $V^m$. By applying Propositions \ref{prop:move_arrow_up} and \ref{prop:move_arrow_down}, we see that we can add $a$ without creating a cycle.
\end{itemize}

Therefore, $\Gt \cup \Gm$  is a compatible graph.
\end{proof}

\subsubsection{Unfolded Graph}

\propgcu*

\begin{proof}
    Let $\Gm =\left( \mathcal{V}^m, \mathcal{E}^m  \right)$ be a compatible graph. By definition, we already know that $\mathcal{V}^m = \mathcal{V}_{\text{u}}$.  $\Gm$ is a DAG, thus, in each cluster, we can permute the indices of the vertices so that a topological order of $\Gm$ agrees with the order of the indices. Let $a$ be an arrow of $\Gm$. We distinguish three cases:
    \begin{itemize}
        \item If $a$ is a dashed-bidirected-arrow, then $a$ is also in $\Gcu$ because $\Gcu$  is a super graph of $\Gt$ which contains all possible dashed-bidirected-arrows.
        
        \item If $a$ corresponds to a self-loop $\leftselfloop V^C$ in $\Gc$. Necessarily, in $\Gm$, $a = V_i \rightarrow V_j$ with $i<j$. Thus, $a$ corresponds to an arrow added during step \ref{def:gm_min:2} of Definition \ref{def:gm_min}. Therefore, $a$ is also an arrow in $\Gcu$.
        
        \item Otherwise, $a$ corresponds to an arrow $U^C \rightarrow V^C$, with $U^C \neq V^C$. We distinguish two cases:
        \begin{itemize}
            \item If $a$ is added at step \ref{def:gm_min:3} of Definition \ref{def:gm_min}, then $a$ is also an arrow in $\Gcu$.
            
            \item Otherwise, by Lemma \ref{lemma:gm_cup_g3_compatible}, we know that $\Gt \cup \Gm$ is compatible. Thus  $\Gt \cup \Gm$ is acyclic. Since $\Gt \cup a$ is a subgraph of $\Gt \cup \Gm$, we can conclude that $a$ does not create a cycle in $\Gt$. Thus, $a \in \E_{\text{eligible}}$. Therefore $a$ is  an arrow in $\Gcu$
        \end{itemize}
    \end{itemize}
    Hence, $\mathcal{E}^m \subseteq \mathcal{E}_{\text{u}}$. Therefore, $\Gm$ is a subgraph of $G_{\text{u}}$.
    
\end{proof}

\subsubsection{Proof of Proposition \ref{prop:gcugcan}}

\propgcuetgcan*

\begin{proof}
    By Proposition \ref{prop:gm_min}, $\Gt$ is compatible with $\Gc$. Therefore, $\Gt \cup \sigma^m$ contains all necessary arrows to be compatible with $\Gc$. Moreover, by definition of $\Gcu$, all arrow $V_v \rightarrow W_w$ in $\sigma^m \subseteq \Gcu$ correspond to an arrow $V \rightarrow W$ in $\Gc$. Therefore, $\Gt \cup \sigma^m$ does not contain any arrow preventing it from being compatible with $\Gcu$. $\Gt \cup \sigma^m$ is acyclic, thus it is an ADMG. Therefore, $\Gt \cup \sigma^m$ is compatible with $\Gc$.
\end{proof}

\subsection{Proofs of the Calculus}

First of all let us recall the rules of Pearl's calculus for cluster queries.

\begin{theorem}[Do-Calculus Rules for Cluster Queries \citep{pearl_causality_2009}]
Let \(\Gc\) be a C-DAG and \(\X^C,\Y^C,\Z^C,\mathcal{W}^C\) be pairwise distinct subsets of nodes. 
Let $\Gm$ be a compatible graph. The following rules hold:
\begin{enumerate}
    \item \textbf{Insertion/deletion of observations:}
    
    \(
    P\bigl(\mathbf{y^m} \mid \Do(\mathbf{w^m}),\mathbf{x^m}, \mathbf{z^m}) = P\bigl(\mathbf{y^m} \mid \Do(\mathbf{w^m}),\mathbf{z^m})
    \)
    \quad if $\Y^m \ind_{\G_{\overline{\W^m}}} \X^m \mid \W^m, \Z^m$
    
    \item \textbf{Action/observation exchange:}
    
    \(
    P\bigl(\mathbf{y^m} \mid \Do(\mathbf{w^m}), \Do(\mathbf{x^m}), \mathbf{z^m}) = P\bigl(\mathbf{y^m} \mid \Do(\mathbf{w^m}),\mathbf{x^m}, \mathbf{z^m})
    \)
    \quad if $\Y^m \ind_{\G_{\overline{\W^m}, \underline{\X^m}}} \X^m \mid \W^m, \Z^m$

    \item \textbf{Insertion/deletion of actions:}
    
    \(
    P\bigl(\mathbf{y^m} \mid \Do(\mathbf{w^m}), \Do(\mathbf{x^m}), \mathbf{z^m}) = P\bigl(\mathbf{y^m} \mid \Do(\mathbf{w^m}),\mathbf{z^m})
    \)
    \quad if  $\Y^m \ind_{\G_{\overline{\W^m}, \overline{\X^m(\Z^m)}}} \X^m \mid \W^m, \Z^m$
    
    where $\X^m(\Z^m) \coloneqq \X^m \setminus \Anc(\Z^m, \G_{\overline{\W^m}})$.
\end{enumerate}
\end{theorem}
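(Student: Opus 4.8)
The plan is to observe that this theorem is Pearl's do-calculus specialized to the particular sets of micro-variables induced by the clusters, so the proof reduces to checking that the hypotheses of the classical do-calculus are met and then invoking it verbatim. Indeed, the footnote to Theorem~\ref{th:calculus} already signals that we remain within Pearl's framework; the only content of the present statement is to record that the cluster structure imposes no additional constraint.

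First I would recall that any compatible graph $\Gm \in \C(\Gc)$ is, by Definition~\ref{def:Cluster_DAG}, an ADMG over $\V^m$, and that $P$ is induced by an SCM whose causal diagram is precisely $\Gm$. Hence $\Gm$ is a legitimate causal diagram (a semi-Markovian model when dashed-bidirected edges are present), and the three rules of Pearl's do-calculus apply to any pairwise-disjoint subsets of $\V^m$, with the mutilations $\Gm_{\overline{\W^m}}$, $\Gm_{\overline{\W^m},\underline{\X^m}}$, and $\Gm_{\overline{\W^m},\overline{\X^m(\Z^m)}}$ defined exactly as in the Preliminaries.

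Second I would verify that the cluster-level sets translate into admissible micro-level sets. Since $\V^C$ is a partition of $\V^m$ and $\X^C,\Y^C,\Z^C,\W^C$ are pairwise distinct as collections of clusters, their associated unions $\X^m,\Y^m,\Z^m,\W^m$ are pairwise-disjoint subsets of $\V^m$. This is the only bookkeeping step: distinctness at the cluster level must be promoted to disjointness at the micro level, which holds precisely because the clusters form a partition (two distinct clusters share no micro-variable). With disjointness in hand, I would apply the three classical rules of \cite{pearl_causality_2009} to $\X^m,\Y^m,\Z^m,\W^m$ in $\Gm$, obtaining exactly the stated identities under the stated d-separation conditions. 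In Rule~3 one uses the graph-specific set $\X^m(\Z^m) = \X^m \setminus \Anc(\Z^m,\Gm_{\overline{\W^m}})$, which is well-defined for the fixed graph $\Gm$.

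I expect essentially no obstacle here, since do-calculus is agnostic to how the variables were grouped: the whole difficulty of the paper lies not in this per-graph statement but in lifting the three $\Gm$-dependent d-separation conditions to a single graphical criterion on $\Gc$ that is sound and complete across all of $\C(\Gc)$ (the role of the structures of interest, the unfolded graph $\Gcu$, and the canonical compatible graph $\Gt$ in Theorems~\ref{th:calculus} and~\ref{th:atomic_completeness}). The one point that warrants an explicit sentence is the validity of do-calculus for ADMGs with latent confounding rather than fully observed DAGs; this is the standard semi-Markovian extension, and it is exactly what the footnote to Theorem~\ref{th:calculus} anticipates.
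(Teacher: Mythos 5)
Your proposal is correct and matches the paper's treatment exactly: the paper states this theorem as a recalled result with a citation to \citet{pearl_causality_2009} and provides no proof, precisely because — as you argue — it is just Pearl's do-calculus (in its standard semi-Markovian/ADMG form) applied to a fixed compatible graph $\Gm$, with the cluster-level distinctness translating into micro-level disjointness via the partition $\V^C$. Your identification of where the paper's real work lies (lifting the per-graph d-separation conditions to a single sound and atomically complete criterion on $\Gc$ via structures of interest, $\Gcu$, and $\Gt$) is also accurate.
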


The first two rules hinge on graphical conditions expressed as d-separations under cluster-level mutilations. We begin by rigorously characterizing this form of dependency; Theorem \ref{th:cluster_sep_cluster_mutilation} provides precisely this characterization.

\subsubsection{Cluster D-separation with Cluster Mutilations}

\begin{theorem}
\label{th:cluster_sep_cluster_mutilation}
    Let $\Gc = \left( \mathcal{V}^C, \mathcal{E}^C  \right)$ be a C-DAG.  Let $\Gt$ be its corresponding canonical compatible graph. Let $\Gcu$ be the corresponding unfolded graph. Let $\X^C,\Y^C$ and $\Z^C$ be pairwise distinct subsets of nodes of $\Gc$. Let $\mathcal{A}$ and $\mathcal{B}$ be subsets of nodes of $\Gc$. Then the following properties are equivalent:
    
    \begin{enumerate}
        \item $ \exists~ \Gm \in \C\left(\Gc\right) ~~ \X^m \notind_{\Gm_{\overline{A^m} \underline{B^m}}} \Y^m \mid \Z^m.$ \label{th:cluster_sep_cluster_mutilation:1}
        
        \item $\Gcu_{\overline{\mathcal{A}^m} \underline{\mathcal{B}^m}}$ contains a structure of interest $\sigma^m$ such that $\X^m \notind_{\sigma^m} \Y^m \mid \Z^m$ and $\Gt \cup \sigma^m$ is acyclic. \label{th:cluster_sep_cluster_mutilation:2}
    \end{enumerate}
\end{theorem}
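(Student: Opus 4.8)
The plan is to prove the two implications separately, using Theorem~\ref{th:new_d_sep} as the bridge between d-connection and structures of interest, together with the compatibility properties of $\Gt$ and $\Gcu$ from Propositions~\ref{prop:gm_min}, \ref{lemma:gm_subgraph_gcu}, and~\ref{prop:gcugcan}. Throughout I would rely on the elementary observation that mutilation is monotone under the subgraph relation: if $\G_1 \subseteq \G_2$ on a common vertex set, then $(\G_1)_{\overline{\A^m}\underline{\B^m}} \subseteq (\G_2)_{\overline{\A^m}\underline{\B^m}}$, since mutilation deletes a fixed collection of edge types (arrowheads into $\A^m$, tails out of $\B^m$), and an edge retained in $\G_1$ is also present and retained in $\G_2$.

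For $\ref{th:cluster_sep_cluster_mutilation:1} \Rightarrow \ref{th:cluster_sep_cluster_mutilation:2}$, I would start from a witness $\Gm \in \C(\Gc)$ with $\X^m \notind_{\Gm_{\overline{\A^m}\underline{\B^m}}} \Y^m \mid \Z^m$. Applying Theorem~\ref{th:new_d_sep} to the ADMG $\Gm_{\overline{\A^m}\underline{\B^m}}$ produces a structure of interest $\sigma^m \subseteq \Gm_{\overline{\A^m}\underline{\B^m}}$ connecting $\X^m$ and $\Y^m$ under $\Z^m$. It then remains to embed $\sigma^m$ into $\Gcu_{\overline{\A^m}\underline{\B^m}}$ and to check acyclicity with $\Gt$. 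Since $\X^m,\Y^m,\Z^m,\A^m,\B^m$ are all unions of clusters, relabeling indices within clusters fixes each of these sets and therefore leaves statement~\ref{th:cluster_sep_cluster_mutilation:1} unchanged; by Proposition~\ref{lemma:gm_subgraph_gcu} I may assume without loss of generality that the indices of $\Gm$ follow a topological order, so that $\Gm \subseteq \Gcu$. Monotonicity of mutilation then yields $\sigma^m \subseteq \Gm_{\overline{\A^m}\underline{\B^m}} \subseteq \Gcu_{\overline{\A^m}\underline{\B^m}}$. For acyclicity, Proposition~\ref{prop:gm_min} gives $\Gt \cup \Gm \in \C(\Gc)$, which is acyclic, and since $\sigma^m \subseteq \Gm$ we obtain $\Gt \cup \sigma^m \subseteq \Gt \cup \Gm$ acyclic, establishing~\ref{th:cluster_sep_cluster_mutilation:2}.

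For $\ref{th:cluster_sep_cluster_mutilation:2} \Rightarrow \ref{th:cluster_sep_cluster_mutilation:1}$, I would take the candidate compatible graph to be $\Gm := \Gt \cup \sigma^m$. Because $\sigma^m \subseteq \Gcu_{\overline{\A^m}\underline{\B^m}} \subseteq \Gcu$ is a structure of interest and $\Gt \cup \sigma^m$ is acyclic by hypothesis, Proposition~\ref{prop:gcugcan} gives $\Gm \in \C(\Gc)$. It then remains to show that $\sigma^m$ survives mutilation, i.e.\ $\sigma^m \subseteq \Gm_{\overline{\A^m}\underline{\B^m}}$: every edge of $\sigma^m$ already lies in $\Gcu_{\overline{\A^m}\underline{\B^m}}$, so it has no arrowhead into $\A^m$ and no tail out of $\B^m$, and is thus retained when $\Gm = \Gt \cup \sigma^m$ is mutilated. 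Hence $\Gm_{\overline{\A^m}\underline{\B^m}}$ contains the structure of interest $\sigma^m$ connecting $\X^m$ and $\Y^m$ under $\Z^m$, and Theorem~\ref{th:new_d_sep} delivers $\X^m \notind_{\Gm_{\overline{\A^m}\underline{\B^m}}} \Y^m \mid \Z^m$, which is~\ref{th:cluster_sep_cluster_mutilation:1}.

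The delicate points are bookkeeping rather than conceptual. The main obstacle I anticipate is the index-permutation caveat in Proposition~\ref{lemma:gm_subgraph_gcu}: one must verify that choosing a topological relabeling of $\Gm$ disturbs neither the d-connection statement nor the acyclicity of $\Gt \cup \sigma^m$. This is resolved by noting that all sets involved are setwise invariant under within-cluster relabeling, so both the connection conditions of Definition~\ref{def:connecting_structure_of_interest} and the mutilation operation commute with the permutation, while acyclicity follows because any relabeling of $\Gm$ is again compatible and Proposition~\ref{prop:gm_min} reapplies. The only other care needed is to confirm that cluster-level mutilation on $\Gcu$ coincides with the micro-level mutilation on $\Gm$, which is immediate once one recalls that $\A^m$ and $\B^m$ are unions of clusters.
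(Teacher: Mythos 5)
Your proposal is correct and follows essentially the same route as the paper's proof: both directions use Theorem~\ref{th:new_d_sep} as the bridge between d-connection and structures of interest, Proposition~\ref{lemma:gm_subgraph_gcu} to embed $\sigma^m$ into $\Gcu_{\overline{\A^m}\underline{\B^m}}$, Proposition~\ref{prop:gm_min} for acyclicity of $\Gt \cup \sigma^m$, and the construction $\Gm \coloneqq \Gt \cup \sigma^m$ (justified by Proposition~\ref{prop:gcugcan}, whose content the paper invokes via its underlying lemmas) for the converse. If anything, you are slightly more explicit than the paper on two bookkeeping points it glosses over — the within-cluster index permutation in Proposition~\ref{lemma:gm_subgraph_gcu} and the monotonicity of mutilation under the subgraph relation.
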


\begin{proof}
Let us prove the two implications:

\begin{itemize}
    \item $\ref{th:cluster_sep_cluster_mutilation:1} \Rightarrow \ref{th:cluster_sep_cluster_mutilation:2}$: Let $\Gm$ be a compatible graph such that $\Gm_{\overline{\mathcal{A}^m} \underline{\mathcal{B}^m}}$ contains a structure of interest $\sigma^m$ which connects $\X^m$ and $\Y^m$ under $\Z^m$. By Lemma \ref{lemma:gm_subgraph_gcu}, $\sigma^m \subseteq \Gm_{\overline{\mathcal{A}^m} \underline{\mathcal{B}^m}} \subseteq \Gm \subseteq \Gcu$. Moreover, since $\sigma^m \subseteq \Gm_{\overline{\mathcal{A}^m} \underline{\mathcal{B}^m}}$, we know that $\sigma^m$ does not contain any incoming arrow in $\A^m$ and no outgoing arrow from $\B^m$. Therefore, $\sigma^m \subseteq \Gcu_{\overline{\mathcal{A}^m} \underline{\mathcal{B}^m}}$ and $\sigma^m$ connects $\X^m$ and $\Y^m$ under $\Z^m$.
    
    By Lemma \ref{lemma:gm_cup_g3_compatible}, $\Gt \cup \Gm $ is a compatible graph, thus acyclic. Moreover, $\Gt \cup \sigma^m \subseteq \Gt \cup \Gm $. Therefore,  $\Gt \cup \sigma^m$ is acyclic.
    
    \item $\ref{th:cluster_sep_cluster_mutilation:2} \Rightarrow \ref{th:cluster_sep_cluster_mutilation:1}$: $\Gt \cup \sigma^m$ is acyclic. By Lemma \ref{lemma:G3_compatible}, $\Gt$ is compatible. Moreover, all arrows from $\sigma^m$ come from $\Gcu$. Therefore,  $\Gt \cup \sigma^m$ is a compatible graph. 
    
    Moreover, since $\sigma^m \subseteq \Gcu_{\overline{\mathcal{A}^m} \underline{\mathcal{B}^m}}$, we know that $\sigma^m$ does not contain any incoming arrow in $\A^m$ and no outgoing arrow from $\B^m$. Thus, $ \sigma^m \subseteq (\Gt \cup \sigma^m)_{\overline{\mathcal{A}^m} \underline{\mathcal{B}^m}}$. Since $\sigma^m$ connects $\X^m$ and $\Y^m$ under $\Z^m$, we can conclude that $\X^m \notind_{(\Gt \cup \sigma^m)_{\overline{\mathcal{A}^m} \underline{\mathcal{B}^m}}} \Y^C \mid \Z^m$. 
    
\end{itemize}
\end{proof}

Figure~\ref{fig:th:cluster_sep_cluster_mutilation} illustrates Theorem~\ref{th:cluster_sep_cluster_mutilation}.

\begin{figure}[ht]
    \centering
    \begin{subfigure}[b]{0.4\textwidth}
        \centering
        \begin{tikzpicture}[->, >=stealth, xscale = 1.5, yscale=1.5]
            \node (A) at (0, 0) {${}^1A$};
            \node (X) at (0, -1) {${}^2X$};
            \node (Y) at (0, 1) {${}^2Y$};
            \node (B) at (1,0) {${}^3B$};
            \node (Z) at (2,0) {${}^1Z$};

            \draw (Y) edge (A);
            \draw (X) edge (A);
            \draw (A) edge (B);
            \draw (B) edge (Z);
            \path (Z) edge[bend right= 35] (A);
            \draw[<->, dashed] (B) to[bend right=35] (Z);
        \end{tikzpicture}
        \caption{$\Gc$}
        \label{fig:th:cluster_sep_cluster_mutilation:1}
    \end{subfigure}
    \hspace{7pt}
    \begin{subfigure}[b]{0.4\textwidth}
        \centering
        \begin{tikzpicture}[->, >=stealth, xscale = 1.5, yscale=1.5]
            \node (A1) at (0, 0) {$A_1$};
            \node (X1) at (-.75, -1) {$X_1$};
            \node (X2) at (.25, -1) {$X_2$};
            \node (Y1) at (-.75, 1) {$Y_1$};
            \node (Y2) at (.25, 1) {$Y_2$};
            \node (B1) at (1,1) {$B_1$};
            \node (B2) at (1,0) {$B_2$};
            \node (B3) at (1,-1) {$B_3$};
            \node (Z1) at (2,0) {$Z_1$};

            \draw (Y1) edge (A1);
            \draw (X1) edge (A1);
            \draw (A1) edge (B3);
            \draw (B1) edge (Z1);
            \path (Z1) edge[bend right= 35] (A1);
            \draw[<->, dashed] (B1) to[bend left=35] (Z1);
            \draw[<->, dashed] (B2) to[bend right=30] (Z1);
            \draw[<->, dashed] (B3) to[bend right=35] (Z1);

            \draw[tochoose] (Y2) edge (A1);
            \draw[tochoose] (X2) edge (A1);
            \draw[tochoose] (A1) edge (B2);
            \draw[tochoose] (B2) edge (Z1);
        \end{tikzpicture}
        \caption{$\Gcu$}
        \label{fig:th:cluster_sep_cluster_mutilation:2}
    \end{subfigure}
    \caption{(a)  a C-DAG~$\Gc$. (b) its corresponding unfolded graphs. The plain and dashed arrows represent~$\Gt$, while the dotted arrows denote the "eligible" edges. According to Theorem~\ref{th:cluster_sep_cluster_mutilation}, we have $\forall~ \Gm \in \C\left(\Gc\right) ~ \X^m \ind_{\Gm} \Y^m \mid \Z^m$, as all structures of interest connecting $X^m$ and $Y^m$ given $Z^m$ in~$\Gcu$ include a directed path from $A_1$ to $Z_1$. Since~$\Gt$ already contains the edge $Z_1 \rightarrow A_1$, such paths would necessarily form a cycle.
    }
    \label{fig:th:cluster_sep_cluster_mutilation}
\end{figure}

\subsubsection{Proofs of the Three Rules of the Calculus}

As soon as Theorem~\ref{th:cluster_sep_cluster_mutilation} has been established, Rules~1 and~2 of the calculus follow almost immediately. In contrast, the third rule of Pearl’s do-calculus requires verifying the d-separation condition  \(
\Y^m \ind_{\Gm_{\overline{\W^m}, \overline{\X^m (\Z^m)}}} X^m \mid \W^m, \Z^m,
\) in all compatible graph $\Gm$, where \(\X^m(\Z^m) = X^m \setminus \Anc(\Z^m,\Gm_{\overline{\W^m}})\). Since \(\X^m(\Z^m)\) is not, in general, a union of clusters, the associated mutilation depends on the particular graph $\Gm$. As a result, this rule does not fall under the scope of Theorem \ref{th:cluster_sep_cluster_mutilation}. 

Nonetheless, if Rule~3 does not hold in some compatible graph $\Gm$, then there exists a structure of interest between \(\Y^m\) and \(X^m\) in \(\Gm_{\overline{\W^m}, \overline{\X^m (\Z^m)}}\). If this structure includes a root \(X_x \in \X^m\), then \(X_x\) must be an ancestor of some \(Z_z \in \Z^m\) in the mutilated graph. In such a case, we can augment the structure of interest by explicitly adding the directed path from \(X_x\) to \(\Z^m\), resulting in a new structure whose roots lie outside \(\X^m\). 

\thcalculus*

\begin{proof}
    The first two rules are proven by Theorem \ref{th:cluster_sep_cluster_mutilation}. The third one is proved by the following reasoning.
    
    We will show that if the third rule applies, then in all compatible graph $\Gm$, the third rule of Pearl's calculus applies. More precisely, we prove the contrapositive. Let $\Gm$ be a compatible graph in which the third rule does not apply. Then, \(\Gm_{\overline{\W^m}, \overline{\X^m (\Z^m)}}\) contains a structure of interest $\sigma^m$ that connects $\Y^m$ and $\X^m$ under the conditioning set $\W^m \cup \Z^m$, where \(\X^m(\Z^m) = X^m \setminus \Anc(\Z^m,\Gm_{\overline{\W^m}})\). By definition, we already know that $\sigma^m$ follows the following properties:
\begin{itemize}
    \item $\sigma^m \subseteq \Gm_{\overline{\W^m}, \overline{\X^m (\Z^m)}} \subseteq  \Gm_{\overline{\W^m}}$.
    \item $\X^m \cap \sigma^m \neq \emptyset$ and $\Y^m \cap \sigma^m \neq \emptyset$.
    \item \(\Root(\sigma^m) \subseteq (\W^m \cup \Z^m) \cup \Y^m \cup \X^m\)
    \item $(\sigma^m \setminus \Root(\sigma^m)) \cap (\W^m \cup \Z^m) = \emptyset$
\end{itemize}

Let us remark that $\Root(\sigma^m) \cap \X^m \subseteq  \X^m \setminus \X^m(\Z^m) = \Anc(\Z^m,\Gm_{\overline{\W^m}})$. Indeed, let $X_x$ be an element of $\Root(\sigma^m) \cap \X^m$. Since $X_x$ is a root and $\sigma^m$ has a single connected component, $X_x$ must have an incoming edge within $\sigma^m$. Thus, $X_x$ must have an incoming edge within \(\Gm_{\overline{\W^m}, \overline{\X^m (\Z^m)}}\).  Thus $X_x \notin \X^m(\Z^m)$. For element  of $\Root(\sigma^m) \cap \X^m$, we iteratively update $\sigma^m$ using Lemma~\ref{lemma:remove_root}. At the end of this process, we obtain a structure of interest ${\sigma^m}'$ which satisfies the following identities:
\begin{itemize}
    \item ${\sigma^m}' \subseteq  \Gm_{\overline{\W^m}}$.
    \item $\X^m \cap {\sigma^m}' \neq \emptyset$ and $\Y^m \cap {\sigma^m}' \neq \emptyset$.
    \item $({\sigma^m}'\setminus \Root({\sigma^m}') \cap (\W^m \cup \Z^m) = \emptyset$
    \item $\Root({\sigma^m}') \setminus (\W^m \cup \Z^m) 
        \subseteq (\Root({\sigma^m}) \setminus (\W^m \cup \Z^m)) \setminus ( \Root(\sigma^m) \cap \X^m)
        \subseteq \Y^m$
\end{itemize} 

Thus, \(\Root({\sigma^m}') \subseteq (\W^m \cup \Z^m) \cup \Y^m\) and ${\sigma^m}'$ is a structure of interest which connects $\X^m$ and $\Y^m$ under $\W^m \cup \Z^m$. Since \(\Gm_{\overline{\W^m}} \subseteq {\Gcu}_{\overline{\W^m}}\), it follows that $\Gcu$ contains ${\sigma^m}'$. Moreover, since \(\Gt \cup {\sigma^m}' \subseteq \Gt \cup \Gm\), it follows that $\Gt \cup {\sigma^m}'$ is acyclic.

Therefore, \({\Gcu}_{\overline{\W^m}}\) contains a structure of interest $\sigma^m$ which $\X^m$ and $\Y^m$ under $\W^m \cup \Z^m$, with \(\Root(\sigma^m) \subseteq (\W^m \cup \Z^m) \cup \Y^m\), and such that \(\Gt \cup \sigma^m\) is acyclic.
\end{proof}

\thatomiccompleteness*

\begin{proof}
    The first two rules are proven by Theorem \ref{th:cluster_sep_cluster_mutilation}. The third one is proved by the following reasoning.
    
        If the rule does not hold, then  \( {\Gcu}_{\overline{\W^m}} \) contains a structure of interest $\sigma^m$ that connects $\X^m$ and $\Y^m$ under $\W^m \cup \Z^m$ such that $\Root(\sigma^m) \subseteq (\W^m \cup \Z^m) \cup \Y^m$ and $\Gt \cup \sigma^m $ is acyclic.  $\Gt \cup \sigma^m $ is acyclic, thus $\Gm \coloneqq  \Gt \cup \sigma^m $ is a compatible graph. Moreover, since $\sigma^m \subseteq {\Gcu}_{\overline{\W^m}} $, then $\sigma^m \subseteq \Gm_{\overline{\W^m}}$.  We will show that we can assume that $\sigma^m \subseteq \Gm_{\overline{\W^m},\overline{\X^m (\Z^m)}}$, where \(\X^m(\Z^m) = X^m \setminus \Anc(\Z^m,\Gm_{\overline{\W^m}})\):
    
    By definition, $\X^m \cap \sigma^m \neq \emptyset$. Let $X_x$ be an element of $\X^m \cap \sigma^m$. We distinguish the cases:
    \begin{itemize}
        \item If $\sigma^m$ contains an outgoing arrow from $X_x$ i.e. $X_x \rightarrow \subseteq \sigma^m$. Then this arrow is not deleted by the mutilation $\overline{\X^m (\Z^m)}$. It exists in $\Gm_{\overline{\W^m},\overline{\X^m (\Z^m)}}$.
        
        \item If $\sigma^m$ contains an incoming arrow to $X_x$ i.e. $\rightarrow X_x \subseteq \sigma^m$. Since  $\Root(\sigma^m) \subseteq (\W^m \cup \Z^m) \cup \Y^m$, we know that $X_x$ is not a root. Let $R_r$ be a root corresponding to $X_x$ i.e. an element of $\Desc(X_x, \sigma^m) \cap \Root(\sigma^m)$. Since $\sigma^m \subseteq \Gm_{\overline{\W^m}}$, we know that $R_r \notin \W^m$. Thus, $R_r \in \Z^m \cup \Y^m$. We distinguish two cases:
        \begin{itemize}
            \item If $R_r \in \Z^m$. Then $X_x \notin \X^m (\Z^m)$. Therefore, $\rightarrow X_x$ is not deleted by the mutilation $\overline{\X^m (\Z^m)}$ and it exists in $\Gm_{\overline{\W^m},\overline{\X^m (\Z^m)}}$.
            
            \item Otherwise, $R_r \in \Y^m$. Thus, $\sigma^m$ contains a proper causal path from $\X^m$ to $\Y^m$. We update $\sigma^m$ to be this path. Now $\sigma^m$ have no incoming arrows on $\X^m$, thus it exists in  $\Gm_{\overline{\W^m},\overline{\X^m (\Z^m)}}$.
        \end{itemize}
    \end{itemize}
    
    Therefore, we can assume that $\sigma^m \subseteq \Gm_{\overline{\W^m},\overline{\X^m (\Z^m)}}$. Therefore, by Theorem \ref{th:new_d_sep},  \(\Y^m \notind_{\Gm_{\overline{\W^m}, \overline{\X^m (\Z^m)}}} X^m \mid \W^m, \Z^m,\) i.e. the third rule of Pearl's do-calculus does not hold in $\Gm$.
\end{proof}

\subsection{Proof of Theorem \ref{th:infinity_leq_three}}
Theorem \ref{th:infinity_leq_three} presents three equivalences, one for each rule of do-calculus. To streamline the proofs, we introduce Corollary~\ref{cor:magic}, which restates Theorem \ref{th:calculus} in a form better suited to treating all three rules in a uniform manner.

\begin{corollary}
\label{cor:magic}
   Let $\Gc$ be a C-DAG. Let $\W^C, \X^C, \Y^C$ and $\Z^C$ be pairwise disjoint subsets of nodes. Let $\W^C$, $\X^C$, $\Y^C$ and $\Z^C$ be pairwise disjoint subsets of nodes. For \(i\in\{1,2,3\}\), let \(R_i(\W, \X,\Y,\Z)\) be the \(i\)\textsuperscript{th} rule of Pearl's Calculus applied to \((\W,\X,\Y,\Z)\), and say it "does not holds in \(\G\)” whenever its associated d-separation condition in the associated mutilated graph is not satisfied. The following propositions are equivalent:
      \begin{itemize}
        \item There exists $\G^m \in \C(\Gc)$ in which $R_i(\W^m,\X^m,\Y^m,\Z^m)$ does not hold.
        
        \item $\Gcu_{\overline{\W^m},\underline{\M_i^m}}$ contains a structure of interest $\sigma^m$ such that $\X^m \notind_{\sigma^m} \Y^m \mid \W^m,\Z^m$ and $\Gt \cup \sigma^m$ is acyclic and $\Root(\sigma^m) \subseteq \R_i^m$.
    \end{itemize}
 where $\M_i^m= \begin{cases}
        \X^m &\text{if } i=2,\\
        \emptyset &\text{otherwise.}\\
    \end{cases}$
    and
    $\R_i^m= \begin{cases}
        (\W^m \cup \Z^m) \cup \Y^m &\text{if } i=3,\\
        (\W^m \cup \Z^m) \cup \X^m \cup \Y^m &\text{otherwise.}\\
    \end{cases}$.
\end{corollary}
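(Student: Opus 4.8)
The plan is to recognize the second bullet of Corollary~\ref{cor:magic} as the \emph{common negation} of the graphical criteria already appearing in Theorem~\ref{th:calculus}, and then to close the equivalence by combining soundness (Theorem~\ref{th:calculus}) with atomic completeness (Theorem~\ref{th:atomic_completeness}). No new graph-theoretic construction is needed; the work is entirely a matter of checking that the uniform data $\M_i^m$ and $\R_i^m$ reproduce, case by case, the per-rule formulations.

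First I would unfold the definition of $\M_i^m$ to match the mutilated search graphs. For $i=1$ and $i=3$ we have $\M_i^m=\emptyset$, so $\Gcu_{\overline{\W^m},\underline{\M_i^m}}=\Gcu_{\overline{\W^m}}$, which is exactly the graph named in rules R1 and R3 of Theorem~\ref{th:calculus}; for $i=2$ we have $\M_2^m=\X^m$, recovering $\Gcu_{\overline{\W^m},\underline{\X^m}}$ as in rule R2. Thus in each case the mutilated search graph in the corollary coincides with the one appearing in the corresponding rule.

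Next I would dispose of the root condition $\Root(\sigma^m)\subseteq\R_i^m$. By Definition~\ref{def:connecting_structure_of_interest}, the statement $\X^m\notind_{\sigma^m}\Y^m\mid\W^m,\Z^m$ already forces $\Root(\sigma^m)\subseteq(\W^m\cup\Z^m)\cup\X^m\cup\Y^m$. Hence for $i\in\{1,2\}$, where $\R_i^m=(\W^m\cup\Z^m)\cup\X^m\cup\Y^m$, the extra requirement $\Root(\sigma^m)\subseteq\R_i^m$ is automatically satisfied and adds nothing, so the second bullet is verbatim the negation of the criteria of R1 and R2. For $i=3$, $\R_3^m=(\W^m\cup\Z^m)\cup\Y^m$ reproduces exactly the additional root restriction $\Root(\sigma^m)\subseteq(\W^m\cup\Z^m)\cup\Y^m$ of R3. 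Consequently, for all three values of $i$, the second bullet of the corollary \emph{is} the negation of the graphical criterion stated in rule $R_i$ of Theorem~\ref{th:calculus}.

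Finally I would chain the two theorems. The contrapositive of Theorem~\ref{th:calculus} gives that if some compatible graph violates Pearl's rule $R_i$, then the criterion is violated, i.e.\ such a $\sigma^m$ exists in $\Gcu_{\overline{\W^m},\underline{\M_i^m}}$; Theorem~\ref{th:atomic_completeness} supplies the reverse direction, namely that whenever the criterion is violated (such a $\sigma^m$ exists) some compatible graph breaks $R_i$. Together these yield the claimed equivalence for each $i\in\{1,2,3\}$. The only delicate point is the bookkeeping of the two preceding paragraphs --- matching the mutilation indices and verifying that the root condition is redundant for $i\in\{1,2\}$ --- and once that is checked the corollary is an immediate repackaging of Theorems~\ref{th:calculus} and~\ref{th:atomic_completeness}.
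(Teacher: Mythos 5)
Your proposal is correct and is essentially the paper's own argument: the paper's proof of this corollary is the single line ``Directly follows from Theorem~\ref{th:calculus}'', and your write-up supplies exactly the bookkeeping that line leaves implicit --- matching $\M_i^m$ to the mutilations appearing in R1--R3, observing via Definition~\ref{def:connecting_structure_of_interest} that the root condition is automatic for $i\in\{1,2\}$ and coincides with the extra restriction of R3 for $i=3$, and obtaining the two implications from soundness (Theorem~\ref{th:calculus}) and atomic completeness (Theorem~\ref{th:atomic_completeness}), the latter of which the paper's one-liner omits to cite even though it is needed for the direction from the structure of interest to a violating compatible graph. The only point to tighten is that Theorem~\ref{th:calculus} is stated as a probabilistic implication while proposition 1 of the corollary is purely graphical, so the contrapositive step should formally invoke what the \emph{proof} of Theorem~\ref{th:calculus} establishes (the criterion forces the d-separation of rule $i$ in every compatible graph; for rules 1--2 this is Theorem~\ref{th:cluster_sep_cluster_mutilation}) --- the same implicit identification the paper itself makes.
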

\begin{proof}
    Directly follows from Theorem~\ref{th:calculus}.
\end{proof}

In order to prove Theorem \ref{th:infinity_leq_three}, we need to prove Lemma \ref{lemma:structure_of_interest_XY} on structures of interest. 

\begin{lemma}
\label{lemma:structure_of_interest_XY}
 Let $\G=(\V,\E)$ be a mixed graph. Let $\X, \Y, \Z$ be pairwise disjoint subsets of $\V$. Let $\sigma  \subseteq \G$ be a structure of interest such that $\X \notind_{\sigma} \Y \mid \Z$. Then there exists a structure of interest $\sigma' \subseteq \sigma$ such that $\X \notind_{\sigma'} \Y \mid \Z$ and such that $\# \sigma' \cap \X = 1$ and $\# \sigma' \cap \Y = 1$.
\end{lemma}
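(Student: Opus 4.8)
The plan is to strip the connecting structure $\sigma$ down to a single d-connecting path between one vertex of $\X$ and one vertex of $\Y$, and then to re-grow only as much of $\sigma$ as is needed to reactivate its colliders, while keeping the intersections with $\X$ and $\Y$ down to one vertex each. The whole argument takes place inside $\sigma$, which is itself an ADMG, so the ambient graph $\G$ plays no role.

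First I would note that, since $\sigma$ is a structure of interest with $\X \notind_{\sigma} \Y \mid \Z$, the ADMG $\sigma$ satisfies condition~\ref{th:new_d_sep:2} of Theorem~\ref{th:new_d_sep} (with $\sigma$ itself as the witnessing structure). Hence by Theorem~\ref{th:new_d_sep}, $\X \notind_{\sigma} \Y \mid \Z$ holds in the usual d-separation sense, so $\sigma$ contains a path $\pi$ d-connecting $\X$ and $\Y$ given $\Z$. Passing to the subpath from the last vertex of $\pi$ lying in $\X$ to the first subsequent vertex lying in $\Y$, I may assume $\X \cap \pi = \{X_0\}$ and $\Y \cap \pi = \{Y_0\}$, with $X_0,Y_0$ the endpoints of $\pi$; this subpath is still d-connecting because $\X,\Y,\Z$ are disjoint, so the new endpoints are non-colliders outside $\Z$ and all interior vertices are inherited from $\pi$. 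This $\pi$ is a structure of interest meeting $\X$ and $\Y$ exactly once; the only property of Definition~\ref{def:connecting_structure_of_interest} it may violate is $\Root(\pi)\subseteq \Z\cup\X\cup\Y$, since a collider of $\pi$ may be active only through a descendant in $\Z$.

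Because $\pi$ is d-connecting in $\sigma$, every collider $C$ of $\pi$ lies in $\Anc(\Z,\sigma)$, so applying Corollary~\ref{cor:add_paths} inside $\sigma$ with $\mathcal{R} = \Root(\pi)\setminus(\Z\cup\X\cup\Y)$ already yields \emph{some} structure of interest $\sigma'\subseteq\sigma$ connecting $\X$ and $\Y$ under $\Z$. The difficulty is that the directed drains added by Lemma~\ref{lemma:remove_root} to reach $\Z$ may traverse or terminate at vertices of $\X\cup\Y$ other than $X_0,Y_0$, so the resulting $\sigma'$ need not meet $\X$ and $\Y$ only once. To control this I would impose minimality: among all structures of interest $\tau\subseteq\sigma$ with $\X \notind_{\tau} \Y \mid \Z$ (a finite, nonempty family, as $\sigma$ belongs to it) choose $\sigma'$ with the fewest edges, and take $\pi,X_0,Y_0$ as above for this $\sigma'$. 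Suppose some $x'\in(\X\cap\sigma')\setminus\{X_0\}$ exists. If $x'$ is not a root of $\sigma'$, truncating the directed drain carrying it at $x'$ makes $x'$ a legitimate root in $\X$ and deletes at least one edge, producing a strictly smaller connecting structure on the same spine $\pi$. If $x'$ is a root, it ends a directed drain $C\to\cdots\to x'$ from some collider $C$ of $\pi$; minimality forces $C\notin\Z$ (a collider in $\Z$ would be kept as a bare sink, with no drain), so rerouting the spine as $x'\leftarrow\cdots\leftarrow C$ followed by the $C$--$Y_0$ subpath of $\pi$ turns $C$ into an active chain and discards the $X_0$--$C$ portion, again giving a strictly smaller connecting structure. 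Either way we contradict minimality, so $\X\cap\sigma'=\{X_0\}$; the symmetric argument gives $\Y\cap\sigma'=\{Y_0\}$, which is the claim.

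The main obstacle is the final dichotomy: making the truncation/rerouting rigorous in full generality, in particular checking that a pendant vertex of $\X$ attached only through bidirected edges can always be pruned, that drains which branch or merge are handled correctly, and that each modified object is still a single-connected-component structure of interest with strictly fewer edges. The minimality device is precisely what prevents this rewriting from looping, while the subpath reduction and Corollary~\ref{cor:add_paths} dispose of the routine parts.
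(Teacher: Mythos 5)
Your first half matches the paper's proof: both extract a d-connecting path $\pi$ from $\sigma$ (Theorem~\ref{th:new_d_sep} applied with $\sigma$ as the ambient ADMG), normalize it so that it meets $\X$ and $\Y$ exactly once, and identify the same crux, namely that the directed drains needed to legitimize the collider roots may themselves hit $\X\cup\Y$. Where you diverge is in how that crux is resolved, and there your argument has a genuine gap. In Case B of your dichotomy you assert that a root $x'\in(\X\cap\sigma')\setminus\{X_0\}$ of an edge-minimal connecting structure ``ends a directed drain $C\to\cdots\to x'$ from some collider $C$ of $\pi$''. Nothing guarantees this: you cannot presuppose that $\sigma'$ has the spine-plus-collider-drains shape, since that shape is part of what minimality is supposed to deliver. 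In a structure of interest, the directed path into $x'$, traced backwards, need never reach the spine at all; it can originate at an off-spine fork (a vertex with two outgoing arrows and no incoming arrow, whose other branch drains elsewhere, e.g.\ into $\Z$), or at a vertex whose only remaining attachment to the rest of the structure is a bidirected edge. In those configurations the rerouting you describe (reversed drain spliced onto $\pi_{[C,Y_0]}$) is undefined, and the contradiction must instead come from a different surgery (delete an edge, keep the connected component containing $\pi$, and control a possible cascade of newly illegitimate roots, which must be shown not to eat into the spine). You flag exactly this in your closing paragraph, so the proof is incomplete at its decisive step. Case A, by contrast, is essentially fine once ``truncating the directed drain carrying it'' is replaced by ``delete one outgoing edge of $x'$ and keep the component of the spine'', which works whether or not $x'$ lies on a drain.

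The paper avoids minimality altogether by performing the surgery on the path \emph{before} assembling the structure: among the colliders of $\pi$ whose drain $\pi_C$ meets $\X$, take the last one $C^\star$, truncate its drain at its first encounter $X$ with $\X$, and replace $\pi$ by the reversed truncated drain followed by $\pi_{[C^\star,\Y]}$; after the symmetric operation for $\Y$, every remaining drain avoids $\X\cup\Y$, and the construction of Lemma~\ref{lemma:remove_root} (via Corollary~\ref{cor:add_paths}) then yields a structure contained in $\pi\cup\bigcup_{C}\pi_C$ that meets $\X$ and $\Y$ exactly once. If you wish to keep the minimality framework, the missing piece is precisely a proof that minimality rules out forks and bidirected pendant attachments above $x'$ --- which amounts to the same pruning/rerouting analysis, so the paper's order of operations (reroute first, then assemble) is the more economical route.
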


\begin{proof}
    $\X \notind_{\sigma} \Y \mid \Z$, thus $\sigma$ contains a d-connecting path $\pi$ from $\X$ to $\Y$. For all collider $C$ on $\pi$, $\sigma$ contains a directed path $\pi_C$ from $C$ to $\Z$. We can assume, without loss of generality, that for all collider $C$, $\pi_C$ does not encounter $\X$. Indeed, let $C^\star$ denote the last collider on $\pi$ such that $\pi_{C^\star}$ encounters $\X$. Let $X$ be the first encounter of $\X$ and $\pi_{C^\star}$. We just need to consider the path $\pi' \coloneqq {\pi_{C^\star}}_{[C^\star, X]} \cup \pi_{[C^\star, \Y]}$. Similarly, we can assume, without loss of generality that for all collider $C$, $\pi_C$ does not encounter $\Y$. We apply the construction of Lemma \ref{lemma:remove_root}. By doing so, we obtain a structure of interest $\sigma' \subseteq \pi \cup \bigcup_{C \text{ collider on } \pi} \pi_C \subseteq \sigma$ such that $\X \notind_{\sigma'} \Y \mid \Z$. Therefore,  $\# \sigma' \cap \X = 1$ and $\# \sigma' \cap \Y = 1$.
\end{proof}

\thinfinityleqthree*

\begin{proof}
    We prove the two implications.

    \textbf{Proof of $\ref{th:infinity_leq_three:2} \Rightarrow \ref{th:infinity_leq_three:1}$: }
    
    We add as many vertices without arrow as necessary to construct a graph compatible with $\Gc$ in which $R_i(\W^m,\X^m,\Y^m,\Z^m)$ does not hold.
    
    \textbf{Proof of $\ref{th:infinity_leq_three:1} \Rightarrow \ref{th:infinity_leq_three:2}$: }
    (Figure~\ref{fig:all-subfigs} illustrates the key steps of this implication with a concrete example.)

    By Corollary~\ref{cor:magic}, $\Gcu_{\overline{\W^m},\underline{\M_i^m}}$ contains a structure of interest $\sigma^m$ such that $\X^m \notind_{\sigma^m} \Y^m \mid \W^m,\Z^m$ and $\Gt \cup \sigma^m$ is acyclic and $\Root(\sigma^m) \subseteq \R_i^m$
    where $\M_i^m= \begin{cases}
        \X^m &\text{if } i=2,\\
        \emptyset &\text{otherwise.}\\
    \end{cases}$
    and
    $\R_i^m= \begin{cases}
        (\W^m \cup \Z^m) \cup \Y^m &\text{if } i=3,\\
        (\W^m \cup \Z^m) \cup \X^m \cup \Y^m &\text{otherwise.}\\
    \end{cases}$.
    
   Let $\sigma^m$ be such a structure of interest. By Lemma \ref{lemma:structure_of_interest_XY}, we can assume that $\# \sigma^m \cap \X^m = 1$ and $\# \sigma^m \cap \Y^m = 1$. Let $V^C$ be a cluster. We split $\sigma^m \cap V^m$ in two subsets as follows:
    
    \[
        \F \coloneqq \left\{ V_v \in \sigma^m \cap V^m \mid V_v \text{ has no incoming arrows in } \sigma^m  \right\}
    \]
    
    \[
        \NF \coloneqq  \sigma^m \cap V^m \setminus \F
    \]

   \textbf{We will show that we can assume that $\# \NF \leq 1$ without loss of generality.} Consider the case where $\# \NF \geq 2$. Necessarily, since $\# \sigma^m \cap \X^m = 1$ and $\# \sigma^m \cap \Y^m = 1$, we know that $V^C$ is different from any cluster in $\X^C \cup \Y^C$. Let $V_{\max \NF}$ denote the element of $\NF$ with maximal index. We distinguish two cases:
   
    \begin{itemize}
    \item If $\NF$ contains a root of $\sigma^m$. Since $\X^m \notind_{\sigma^m} \Y^m \mid \W^m,\Z^m$, we know that $\NF \subseteq (\W^m \cup \Z^m) \cup \X^m \cup \Y^m$. Since $V^C$ is different from any cluster in $\X^C \cup \Y^C$, we can conclude that $\NF \subseteq \W^m \cup \Z^m$. Since, $\X^m \notind_{\sigma^m} \Y^m \mid \W^m,\Z^m$, we know that $(\sigma^m \setminus \Root(\sigma^m)) \cap (\W^m \cup \Z^m) = \emptyset$. Therefore, all elements in $\NF$ are roots in $\sigma^m$. By Proposition \ref{prop:move_arrow_down} and since mutilations are done at cluster level, we know that for all arrows $W_w \rightarrow V_v$ with $V_v \in \mathcal{NF}$, $\Gcu_{\overline{\W^m},\underline{\M_i^m}}$ also contains the arrow $W_w \rightarrow V_{\max \NF}$. Therefore, $\Gcu_{\overline{\W^m},\underline{\M_i^m}}$ also contains a structure of interest ${\sigma^m}'$ which is equal to $\sigma^m$ except that all arrows $W_w \rightarrow V_v$ with $V_v \in \mathcal{NF}$ are now pointing toward $V_{\max \NF}$. Therefore, $V_{\max \NF}$ is the only element of ${\sigma^m}' \cap V^m$ that has incoming arrows. Therefore, in this case, we can assume that $\# \NF = 1$.
        
    \item Otherwise, every element in $\NF$ has an outgoing arrow in $\sigma^m$. Since $\sigma^m$ is a structure of interest, we know that $\sigma^m$ contains a path $\pi^m$ from $\X^m$ to $\Y^m$. We will construct a graph ${\sigma^m}'$, satisfying the following conditions:
        \begin{itemize}
            \item $\sigma^m \subseteq {\sigma^m}' \subseteq \Gcu_{\overline{\W^m},\underline{\M_i^m}}$
            \item $\Root({\sigma^m}') = \Root(\sigma^m)$
            \item ${\sigma^m}'$ differs from $\sigma^m$ only inside $V^m$.
        \end{itemize}
        
        and such that ${\sigma^m}'$ contains a path ${\pi^m}'$ from $\X^m$ to $\Y^m$ which encounters $\NF$ at most once with a chain or a collider and that this intersection occurs a $V_{\max \NF}$.
        If $\sigma^m$ and $\pi^m$ does not satisfy these conditions, let us consider $a_1$ and $a_2$ be respectively the first and last arrows of $\pi^m$ in $\NF$. We distinguish the cases:
        \begin{itemize}
            \item[1.] If both $a_1$ and $a_2$ are incoming arrows in $\NF$, i.e $\pi^m = \cdots W_w \rightcomingarrow V^1_{v^1} \cdots V^2_{v^2} \leftcomingarrow U_u \cdots $. By Proposition \ref{prop:move_arrow_down} and since mutilations are done at cluster level, we know that  $\Gcu_{\overline{\W^m},\underline{\M_i^m}}$  contains the arrows $W_w \rightcomingarrow V_{\max \NF}$ and $V_{\max \NF} \leftcomingarrow U_u$. We consider ${\sigma^m}' \coloneqq \sigma^m \cup \{W_w \rightcomingarrow V_{\max \NF} \} \cup \{V_{\max \NF} \leftcomingarrow U_u\}$ and ${\pi^m}' \coloneqq \pi^m_{[\X^m,W_w]} \cup \{W_w \rightcomingarrow V_{\max \NF}  \leftcomingarrow U_u\} \cup \pi^m_{[U_u, \Y^m]}$. Note that ${\pi^m}'$ encounters $\NF$ only at $V_{\max \NF}$ and that $\Root({\sigma^m}') = \Root(\sigma^m)$. 

            \item[2.] If both $a_1$ and $a_2$ are outgoing arrows in $\NF$, i.e $\pi^m = \cdots W_w \leftcomingarrow V^1_{v^1} \cdots V^2_{v^2} \rightcomingarrow U_u \cdots $. By Proposition \ref{prop:move_arrow_up} and since mutilations are done at cluster level, we know that  $\Gcu_{\overline{\W^m},\underline{\M_i^m}}$ contains the arrows $W_w \leftcomingarrow V_1$ and $V_1 \rightcomingarrow U_u$. We consider ${\sigma^m}' \coloneqq \sigma^m \cup \{W_w \leftcomingarrow V_1 \} \cup \{V_1 \rightcomingarrow U_u\}$ and ${\pi^m}' \coloneqq \pi^m_{[\X^m,W_w]} \cup \{W_w \leftcomingarrow V_1  \rightcomingarrow U_u\} \cup \pi^m_{[U_u, \Y^m]}$. Note that ${\pi^m}'$ does not encounter $\NF$ and that $\Root({\sigma^m}') = \Root(\sigma^m)$. 
            
            \item[3.] Otherwise, without loss of generality, we can assume that $a_1$ and $a_2$ are pointing towards $\Y^m$, i.e. $\pi^m = \cdots W_w \rightcomingarrow V^1_{v^1} \cdots V^2_{v^2} \rightcomingarrow U_u \cdots$. Indeed, otherwise, we just need to consider the path from $\Y^m$ to $\X^m$. Note that $V^1_{v^1}$ or $V^2_{v^2}$ could be equal to $V_{\max \NF}$ but not both. Since all elements in $\NF$ have an outgoing arrow, let us consider $C_c$, the child of $V_{\max \NF}$ in $\sigma^m$. By Propositions  \ref{prop:move_arrow_up}, \ref{prop:move_arrow_down} and since mutilations are done at cluster level, we know that  $\Gcu_{\overline{\W^m},\underline{\M_i^m}}$ contains the arrows $W_w \rightcomingarrow V_{\max \NF}$ and $C_c \leftcomingarrow V^2_{v^2}$. We consider ${\sigma^m}' \coloneqq \sigma^m \cup \{W_w \rightcomingarrow V_{\max \NF}\} \cup \{C_c \leftcomingarrow V^2_{v^2}\}$ and ${\pi^m}' \coloneqq \pi^m_{[\X^m,W_w]} \cup \{W_w \rightcomingarrow V_{\max \NF} \rightarrow C_c \leftcomingarrow V^2_{v^2}\} \cup \pi^m_{[V^2_{v^2}, \Y^m]}$. Note that ${\pi^m}'$  encounters $\NF$ only once with a collider in $V_{\max \NF}$ and once with a fork in $V^2_{v^2}$ and that $\Root({\sigma^m}') = \Root(\sigma^m)$.
        \end{itemize}
        
        Note that in all cases, we have used Propositions \ref{prop:move_arrow_up} and \ref{prop:move_arrow_down}, thus ${\sigma^m}' \cup \Gt$ is acyclic. Moreover, in all cases, $\Root({\sigma^m}') = \Root(\sigma^m)$. Therefore, $\Root({\sigma^m}') \subseteq \R_i^m$. In addition, ${\sigma^m}'$ contains ${\pi^m}'$, a path from $\X^m$ to $\Y^m$ which encounters $\NF$ at most once with a chain or a collider and that this intersection occurs a $V_{\max \NF}$.
        
        However, in Cases 2 and 3, we have added outgoing arrows to some vertices different from $V_{\max \NF}$. This could prevent ${\sigma^m}'$ from being a structure of interest. We apply the following transformation to construct a structure of interest from ${\sigma^m}'$:
        \begin{enumerate}
            \item \textbf{Move all incoming arrows to $\mathbf{V_{\max \NF}}$ :} By Proposition \ref{prop:move_arrow_down} and since mutilations are done at cluster level, we know that for all arrows $W_w \rightarrow V_v$  with $V_v \in NF$, $\Gcu_{\overline{\W^m},\underline{\M_i^m}}$ also contains the arrow $W_w \rightarrow V_{\max \NF}$. Therefore, $\Gcu_{\overline{\W^m},\underline{\M_i^m}}$ also contains ${{\sigma^m}'}'$ which is equal to ${\sigma^m}'$ except that all arrows $W_w \rightarrow V_v$ with $V_v \in NF$ are now pointing toward $V_{\max \NF}$. Since we are using Proposition \ref{prop:move_arrow_down}, we know that ${{\sigma^m}'}' \cup \Gt$ is acyclic. Note that ${\pi^m}'$ still exists in ${{\sigma^m}'}'$ and that all vertices in ${{\sigma^m}'}' \cap V^m$ except $V_{\max \NF}$ have no incoming arrows in ${{\sigma^m}'}'$. Moreover, note that $\Root({{\sigma^m}'}') = \Root({\sigma^m}')$. 
            
            \item \textbf{Remove problematic outgoing arrows:} To keep the notations simple, we update ${\sigma^m}' \gets {{\sigma^m}'}'$. Some vertices, different from $V_{\max \NF}$, may have more than two outgoing arrows, preventing ${\sigma^m}'$ from being a structure of interest. We remove from ${\sigma^m}'$, all outgoing arrow from $V^m$ that is not used by ${\pi^m}'$. Since, ${\pi^m}'$ uses at most two arrows around a vertex, we know that all vertices have now at most two outgoing arrows. Since we have just removed some arrows, we know that ${\sigma^m}' \cup \Gt$ remains acyclic. Moreover, since ${\pi^m}'$ is preserved, we know that $\X^m$ and $\Y^m$ are still connected.
            
            \item \textbf{Remove the problematic vertices:} At the end of the previous steps, some vertices are not connected to the others in ${\sigma^m}'$, preventing ${\sigma^m}'$ from being a structure of interest. More precisely, these vertices are $\NF \setminus \{V_{\max \NF}\}$ except $V_1$ in case 2 and except $V^2_{v^2}$ in case 3. Indeed, at step 1, they have lost their incoming arrows and at step 2, they have lost their outgoing arrows. We remove these vertices from ${\sigma^m}'$. By doing so, ${\sigma^m}' \cup \Gt$ remains acyclic and ${\sigma^m}'$ is now a structure of interest, $\Root({\sigma^m}') = \Root(\sigma^m) \subseteq \R_i^m$.
        \end{enumerate}
        
        To summarize, we have constructed a structure of interest ${\sigma^m}'$ in $\Gcu_{\overline{\W^m},\underline{\M_i^m}}$, such that $\X^m \notind_{{\sigma^m}'} \Y^m \mid \W^m,\Z^m$, $\Gt \cup {\sigma^m}'$ is acyclic and $\Root({\sigma^m}') \subseteq \R_i^m$. Moreover, $V_{\max \NF}$ is the only element of $V^m \cap {\sigma^m}'$ with incoming arrows. Therefore, in this case, we can assume that $\# \NF = 1$.
    \end{itemize}
    
    Therefore, in all cases, we can assume that $\# \NF \leq 1$ without loss of generality.
    
    \textbf{We will now show that we can assume that $\F \subseteq \{V_1\}$ without loss of generality.} If it is not the case we apply the following transformations:
    \begin{enumerate}
        \item \textbf{Move all arrows to $V_1$:} By Proposition \ref{prop:move_arrow_up} and since mutilations are done at cluster level, for all arrow $V_v \rightarrow W_w$ in $\F \cap \sigma^m$, we know that  $\Gcu_{\overline{\W^m},\underline{\M_i^m}}$ contains the arrow $V_1 \rightarrow W_w$. Thus, we consider ${\sigma^m}'$ the subgraph of $\Gcu_{\overline{\W^m},\underline{\M_i^m}}$ obtained by moving all the arrows $V_v \rightarrow W_w$ with $V_v$ in $\F \cap \sigma^m$ to $V_1 \rightarrow W_w$. Since, $\sigma^m$ is a structure of interest, it contains a path $\pi^m$ from $\X^m$ to $\Y^m$. Note that the above transformation yields a path ${\pi^m}'$ in ${\sigma^m}'$ which connects $\X^m$ and $\Y^m$.
        
        \item \textbf{Remove problematic outgoing arrows:} In ${\sigma^m}'$, $V_1$ may have more than two outgoing arrows, preventing ${\sigma^m}'$ from being a structure of interest. Yet ${\pi^m}'$ uses at most two of these arrows. Thus, we only keep two of them without altering ${\pi^m}'$.
        
        \item \textbf{Remove the problematic vertices:} At the end of the previous steps, some vertices are not connected to the others in ${\sigma^m}'$, preventing ${\sigma^m}'$ from being a structure of interest. More precisely, these vertices are $\F \setminus \{V_1\}$. Indeed, they have lost their outgoing arrow at step 1 and had no incoming arrows are they are in $\F$.
    \end{enumerate}
    
    To summarize, we have constructed a structure of interest ${\sigma^m}'$ in $\Gcu_{\overline{\W^m},\underline{\M_i^m}}$, such that $\X^m \notind_{{\sigma^m}'} \Y^m \mid \W^m,\Z^m$, $\Gt \cup {\sigma^m}'$ is acyclic and $\Root({\sigma^m}') \subseteq \R_i^m$. Moreover, $V_1$ is the only element of $V^m \cap {\sigma^m}'$ with no incoming arrows. Therefore, we can assume that $\F \subseteq \{V_1\}$.

    \textbf{End of the proof of $\ref{th:infinity_leq_three:1} \Rightarrow \ref{th:infinity_leq_three:2}$:} 
    Without loss of generality, we assume that $\# \NF \leq 1$ and $\F \subseteq \{V_1\}$. Let $V_{\NF}$ be the only element of $\NF$ (if exists). We consider the graph $\Gm \coloneqq \Gt \cup \sigma^m$. By Corollary~\ref{cor:magic},
    $R_i(\W^m,\X^m,\Y^m,\Z^m)$ does not hold in $\Gm$ and by construction $\Gm$ is compatible with $\Gc$. For every vertex in \(V^m \setminus (\F \cup \NF)\), we apply Proposition~\ref{prop:move_arrow_up} to move all outgoing arrows to \(V_1\) and Proposition~\ref{prop:move_arrow_down} to move all incoming arrows to \(V_{\#V^C}\). This yields a graph \(\Gm'\in\C(\Gc)\) in which \(R_i(\W^m,\X^m,\Y^m,\Z^m)\) does not hold, and where no vertex outside \(\NF\cup\{V_1\}\cup\{V_{\#V^C}\}\) is incident to an arrow. 
    
    We repeat this construction for each cluster, ultimately obtaining \(\Gm^\star\), where \(R_i(\W^m,\X^m,\Y^m,\Z^m)\) still fails and every cluster has arrows on at most three vertices. Finally, by removing all vertices that are not incident to any arrow in \(\Gm^\star\), we obtain \(\Gm^\star_{\le3}\), a graph compatible with \(\Gc_{\le3}\) in which \(R_i(\W^m_{\le3},\X^m_{\le3},\Y^m_{\le3},\Z^m_{\le3})\) does not hold.

    Therefore, there exists $\G^m_{\le3} \in \C(\G^C_{\le3})$ in which $R_i(\W_{\le3}^m,\X^m_{\le3},\Y^m_{\le3},\Z^m_{\le3})$ does not hold.
\end{proof}

We come back to the example presented in Figure \ref{fig:example} to illustrate the successive steps of this proof in Figure \ref{fig:all-subfigs}. Particularly, Figure \ref{fig:subfig-a} is the initial graph, then in Figure \ref{fig:subfig-b} we consider an analogous structure of interest such that $\# \NF \leq 1$. Then, in Figure \ref{fig:subfig-c}, we consider an analogous structure of interest such that $\F \subseteq \{V_1\}$. Finally, in Figure \ref{fig:subfig-d} we  move every unused gray arrows such that only three vertices of $B^m$ are incident to any arrows. From this graph, we  deduce $\G^m_{\leq3}$ in Figure \ref{fig:subfig-e} on which d-connection with structures of interests are equivalent. We apply equivalent rules to get \ref{fig:subfig-f}, which corresponds to Figure \ref{fig:example}.

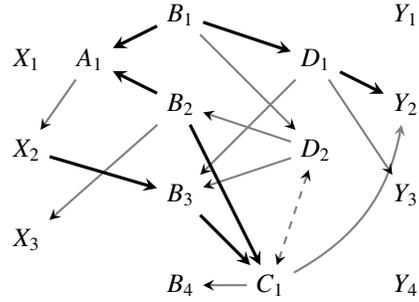
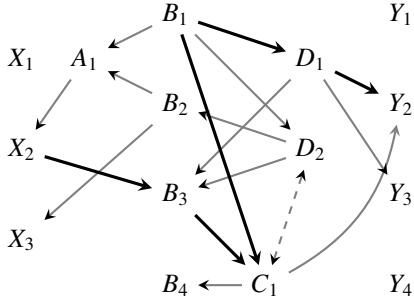
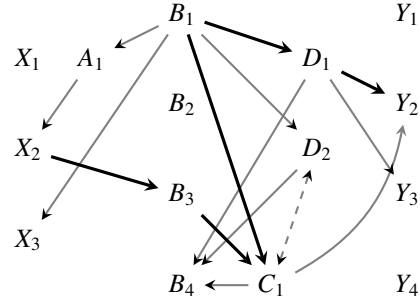
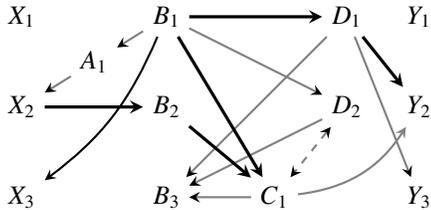
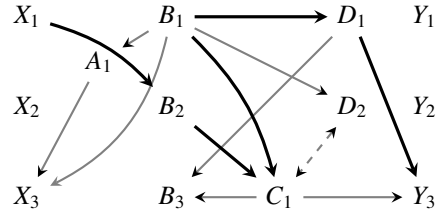
\begin{figure}[htbp]
  \centering
%
  \begin{subfigure}[t]{0.46\linewidth}
    \centering
    \begin{tikzpicture}[->, >=stealth, scale=1.2]
        \def\xX{0.3} \def\xA{1} \def\xB{2} \def\xC{3} \def\xD{3.5} \def\xY{4.5}

        \foreach \i in {1,...,3} {
            \node[text=black] (X\i) at (\xX, -0.5 -\i+1) {$X_{\i}$};
        }
        \node[text=black] (A1) at (\xA, -0.5) {$A_1$};
        \foreach \i in {1,...,4} {
            \node[text=black] (B\i) at (\xB, -\i+1) {$B_{\i}$};
        }
        \node[text=black] (C1) at (\xC, -3) {$C_1$};
        \foreach \i in {1,...,2} {
            \node[text=black] (D\i) at (\xD, -0.5 -\i+1) {$D_{\i}$};
        }
        \foreach \i in {1,...,4} {
            \node[text=black] (Y\i) at (\xY, -\i+1) {$Y_{\i}$};
        }

        \draw[->, draw=gray, line width=0.8pt] (C1) -- (B4);
        \draw[->, draw=gray, line width=0.8pt] (A1) -- (X2);
        \draw[->, draw=gray, line width=0.8pt, bend right=25] (C1) edge[gray] (Y2);
        \draw[<->, dashed, draw=gray, line width=0.8pt] (D2) -- (C1);
        \draw[->, draw=gray, line width=0.8pt] (B2) -- (X3);
        \draw[->, draw=gray, line width=0.8pt] (B1) -- (D2);
        \draw[->, draw=gray, line width=0.8pt] (D1) -- (B3);
        \draw[->, draw=gray, line width=0.8pt] (D1) -- (Y3);

        \draw[->, line width=1.2pt] (X2) -- (B3);
        \draw[->, line width=1.2pt] (D2) -- (B3);
        \draw[->, line width=1.2pt] (D2) -- (B2);
        \draw[->, line width=1.2pt] (B2) -- (A1);
        \draw[->, line width=1.2pt] (B1) -- (A1);
        \draw[->, line width=1.2pt] (B1) -- (D1);
        \draw[->, line width=1.2pt] (D1) -- (Y2);
        \draw[->, line width=1.2pt] (B3) -- (C1);
    \end{tikzpicture}
    \caption{$\Gm \in \C(\Gc)$}
    \label{fig:subfig-a}
  \end{subfigure}
\hfill
  \begin{subfigure}[t]{0.46\linewidth}
    \centering
    \begin{tikzpicture}[->, >=stealth, scale=1.2]
        \def\xX{0.3} \def\xA{1} \def\xB{2} \def\xC{3} \def\xD{3.5} \def\xY{4.5}

        \foreach \i in {1,...,3} {
            \node[text=black] (X\i) at (\xX, -0.5 -\i+1) {$X_{\i}$};
        }
        \node[text=black] (A1) at (\xA, -0.5) {$A_1$};
        \foreach \i in {1,...,4} {
            \node[text=black] (B\i) at (\xB, -\i+1) {$B_{\i}$};
        }
        \node[text=black] (C1) at (\xC, -3) {$C_1$};
        \foreach \i in {1,...,2} {
            \node[text=black] (D\i) at (\xD, -0.5 -\i+1) {$D_{\i}$};
        }
        \foreach \i in {1,...,4} {
            \node[text=black] (Y\i) at (\xY, -\i+1) {$Y_{\i}$};
        }

        \draw[->, draw=gray, line width=0.8pt] (C1) -- (B4);
        \draw[->, draw=gray, line width=0.8pt] (A1) -- (X2);
        \draw[->, draw=gray, line width=0.8pt, bend right=25] (C1) edge[gray] (Y2);
        \draw[<->, dashed, draw=gray, line width=0.8pt] (D2) -- (C1);
        \draw[->, draw=gray, line width=0.8pt] (B2) -- (X3);
        \draw[->, draw=gray, line width=0.8pt] (B1) -- (D2);
        \draw[->, draw=gray, line width=0.8pt] (D1) -- (B3);
        \draw[->, draw=gray, line width=0.8pt] (D1) -- (Y3);

        \draw[->, line width=1.2pt] (X2) -- (B3);
        \draw[->, draw=gray,line width=0.8pt] (D2) -- (B3);
        \draw[->, draw=gray,line width=0.8pt] (D2) -- (B2);
        \draw[->, line width=1.2pt] (B2) -- (A1);
        \draw[->, line width=1.2pt] (B1) -- (A1);
        \draw[->, line width=1.2pt] (B1) -- (D1);
        \draw[->, line width=1.2pt] (D1) -- (Y2);
        \draw[->, line width=1.2pt] (B3) -- (C1);
        \draw[->, line width=1.2pt] (B2) -- (C1);
    \end{tikzpicture}
    \caption{Application of Proposition~\ref{prop:move_arrow_up} to the arrow $B_3 \to C_1$, thereby adding $B_2 \to C_1$. Note that now $\#\NF = 1$.}
    \label{fig:subfig-b}
  \end{subfigure}

  \begin{subfigure}[t]{0.48\linewidth}
    \centering
    \begin{tikzpicture}[->, >=stealth, scale=1.2]
        \def\xX{0.3} \def\xA{1} \def\xB{2} \def\xC{3} \def\xD{3.5} \def\xY{4.5}

        \foreach \i in {1,...,3} {
            \node[text=black] (X\i) at (\xX, -0.5 -\i+1) {$X_{\i}$};
        }
        \node[text=black] (A1) at (\xA, -0.5) {$A_1$};
        \foreach \i in {1,...,4} {
            \node[text=black] (B\i) at (\xB, -\i+1) {$B_{\i}$};
        }
        \node[text=black] (C1) at (\xC, -3) {$C_1$};
        \foreach \i in {1,...,2} {
            \node[text=black] (D\i) at (\xD, -0.5 -\i+1) {$D_{\i}$};
        }
        \foreach \i in {1,...,4} {
            \node[text=black] (Y\i) at (\xY, -\i+1) {$Y_{\i}$};
        }

        \draw[->, draw=gray, line width=0.8pt] (C1) -- (B4);
        \draw[->, draw=gray, line width=0.8pt] (A1) -- (X2);
        \draw[->, draw=gray, line width=0.8pt, bend right=25] (C1) edge[gray] (Y2);
        \draw[<->, dashed, draw=gray, line width=0.8pt] (D2) -- (C1);
        \draw[->, draw=gray, line width=0.8pt] (B2) -- (X3);
        \draw[->, draw=gray, line width=0.8pt] (B1) -- (D2);
        \draw[->, draw=gray, line width=0.8pt] (D1) -- (B3);
        \draw[->, draw=gray, line width=0.8pt] (D1) -- (Y3);

        \draw[->, line width=1.2pt] (X2) -- (B3);
        \draw[->, draw=gray,line width=0.8pt] (D2) -- (B3);
        \draw[->, draw=gray,line width=0.8pt] (D2) -- (B2);
        \draw[->, draw=gray,line width=0.8pt] (B2) -- (A1);
        \draw[->, draw=gray,line width=0.8pt] (B1) -- (A1);
        \draw[->, line width=1.2pt] (B1) -- (D1);
        \draw[->, line width=1.2pt] (D1) -- (Y2);
        \draw[->, line width=1.2pt] (B3) -- (C1);
        \draw[->, line width=1.2pt] (B1) -- (C1);
    \end{tikzpicture}
    \caption{Application of Proposition~\ref{prop:move_arrow_up} to shift $B_2 \to C_1$ to $B_1 \to C_1$. Note that now $\F \subseteq \{V_1\}$.}
    \label{fig:subfig-c}
  \end{subfigure}
\hfill
%
  \begin{subfigure}[t]{0.48\linewidth}
    \centering
    \begin{tikzpicture}[->, >=stealth, scale=1.2]
        \def\xX{0.3} \def\xA{1} \def\xB{2} \def\xC{3} \def\xD{3.5} \def\xY{4.5}

        \foreach \i in {1,...,3} {
            \node[text=black] (X\i) at (\xX, -0.5 -\i+1) {$X_{\i}$};
        }
        \node[text=black] (A1) at (\xA, -0.5) {$A_1$};
        \foreach \i in {1,...,4} {
            \node[text=black] (B\i) at (\xB, -\i+1) {$B_{\i}$};
        }
        \node[text=black] (C1) at (\xC, -3) {$C_1$};
        \foreach \i in {1,...,2} {
            \node[text=black] (D\i) at (\xD, -0.5 -\i+1) {$D_{\i}$};
        }
        \foreach \i in {1,...,4} {
            \node[text=black] (Y\i) at (\xY, -\i+1) {$Y_{\i}$};
        }

        \draw[->, draw=gray, line width=0.8pt] (C1) -- (B4);
        \draw[->, draw=gray, line width=0.8pt] (A1) -- (X2);
        \draw[->, draw=gray, line width=0.8pt, bend right=25] (C1) edge[gray] (Y2);
        \draw[<->, dashed, draw=gray, line width=0.8pt] (D2) -- (C1);
        \draw[->, draw=gray, line width=0.8pt] (B1) -- (X3);
        \draw[->, draw=gray, line width=0.8pt] (B1) -- (D2);
        \draw[->, draw=gray, line width=0.8pt] (D1) -- (B4);
        \draw[->, draw=gray, line width=0.8pt] (D1) -- (Y3);

        \draw[->, line width=1.2pt] (X2) -- (B3);
        \draw[->, draw=gray,line width=0.8pt] (D2) -- (B4);
        \draw[->, draw=gray,line width=0.8pt] (B1) -- (A1);
        \draw[->, line width=1.2pt] (B1) -- (D1);
        \draw[->, line width=1.2pt] (D1) -- (Y2);
        \draw[->, line width=1.2pt] (B3) -- (C1);
        \draw[->, line width=1.2pt] (B1) -- (C1);
    \end{tikzpicture}
    \caption{Application of Propositions~\ref{prop:move_arrow_up} and~\ref{prop:move_arrow_down} to reposition all unused (gray) arrows. Note that only three vertices of $B^m$ are incident to any arrows.}
    \label{fig:subfig-d}
  \end{subfigure}
  
  \begin{subfigure}[t]{0.48\linewidth}
    \centering
    \begin{tikzpicture}[->, >=stealth, scale=1.2]
        \def\xX{0} \def\xA{0.8} \def\xB{1.6} \def\xC{2.8} \def\xD{3.6} \def\xY{4.4}

        \foreach \i in {1,...,3} {
            \node[text=black] (X\i) at (\xX, -\i+1) {$X_{\i}$};
        }
        \node[text=black] (A1) at (\xA, -.5) {$A_1$};
        \foreach \i in {1,...,3} {
            \node[text=black] (B\i) at (\xB, -\i+1) {$B_{\i}$};
        }
        \node[text=black] (C1) at (\xC, -2) {$C_1$};
        \foreach \i in {1,...,2} {
            \node[text=black] (D\i) at (\xD, -\i+1) {$D_{\i}$};
        }
        \foreach \i in {1,...,3} {
            \node[text=black] (Y\i) at (\xY, -\i+1) {$Y_{\i}$};
        }

        \draw[->, draw=gray, line width=0.8pt] (C1) -- (B3);
        \draw[->, draw=gray, line width=0.8pt] (A1) -- (X2);
        \draw[->, draw=gray, line width=0.8pt, bend right=25] (C1) edge[gray] (Y2);
        \draw[<->, dashed, draw=gray, line width=0.8pt] (D2) -- (C1);
        \path[->, draw=gray, line width=0.8pt] (B1) edge[bend left = 15] (X3);
        \draw[->, draw=gray, line width=0.8pt] (B1) -- (D2);
        \draw[->, draw=gray, line width=0.8pt] (D1) -- (B3);
        \draw[->, draw=gray, line width=0.8pt] (D1) -- (Y3);

        \path[->, line width=1.2pt] (X2) edge (B2);
        \draw[->, draw=gray,line width=0.8pt] (D2) -- (B3);
        \draw[->, draw=gray,line width=0.8pt] (B1) -- (A1);
        \draw[->, line width=1.2pt] (B1) -- (D1);
        \draw[->, line width=1.2pt] (D1) -- (Y2);
        \draw[->, line width=1.2pt] (B2) -- (C1);
        \draw[->, line width=1.2pt] (B1) -- (C1);
    \end{tikzpicture}
    \caption{Remove some vertices that are not incident to any arrow to get $\G^m_{\leq 3}$. Note that  $\G^m_{\leq 3} \in \C(\G^C_{\leq3})$.}
    \label{fig:subfig-e}
  \end{subfigure}
\hfill
%
  \begin{subfigure}[t]{0.48\linewidth}
    \centering
    \begin{tikzpicture}[->, >=stealth, scale=1.2]
        \def\xX{0} \def\xA{0.8} \def\xB{1.6} \def\xC{2.8} \def\xD{3.6} \def\xY{4.4}

        \foreach \i in {1,...,3} {
            \node[text=black] (X\i) at (\xX, -\i+1) {$X_{\i}$};
        }
        \node[text=black] (A1) at (\xA, -.5) {$A_1$};
        \foreach \i in {1,...,3} {
            \node[text=black] (B\i) at (\xB, -\i+1) {$B_{\i}$};
        }
        \node[text=black] (C1) at (\xC, -2) {$C_1$};
        \foreach \i in {1,...,2} {
            \node[text=black] (D\i) at (\xD, -\i+1) {$D_{\i}$};
        }
        \foreach \i in {1,...,3} {
            \node[text=black] (Y\i) at (\xY, -\i+1) {$Y_{\i}$};
        }

        \draw[->, draw=gray, line width=0.8pt] (C1) -- (B3);
        \draw[->, draw=gray, line width=0.8pt] (A1) -- (X3);
        \draw[->, draw=gray, line width=0.8pt] (C1) -- (Y3);
        \draw[<->,dashed, draw=gray, line width=0.8pt] (D2) -- (C1);
        \draw[->, draw=gray, line width=0.8pt, bend left=25] (B1) edge[gray] (X3);
        \draw[->, draw=gray, line width=0.8pt] (B1) -- (D2);
        \draw[->, draw=gray, line width=0.8pt] (D1) -- (B3);
        \draw[->, draw=gray, line width=0.8pt] (B1) -- (A1);

        \path[->, line width=1.2pt] (X1) edge[bend left=15] (B2);
        \draw[->, line width=1.2pt] (B1) -- (D1);
        \draw[->, line width=1.2pt] (D1) -- (Y3);
        \draw[->, line width=1.2pt] (B2) -- (C1);
        \path[->, line width=1.2pt] (B1) edge[bend left=15] (C1);

    \end{tikzpicture}
    \caption{Application of Propositions~\ref{prop:move_arrow_up} and~\ref{prop:move_arrow_down} to get the same graph of Figure~\ref{fig:example:2}.}
    \label{fig:subfig-f}
  \end{subfigure}

  \caption{Figure~\ref{fig:subfig-a} shows the graph $\Gm$ containing the structure of interest $\sigma^m$ (in bold black), which connects $X^m$ and $Y^m$ under $C^m \cup A^m$. Figures~\ref{fig:subfig-b}, \ref{fig:subfig-c} and \ref{fig:subfig-d} illustrate the successive transformations of $\Gm$ and $\sigma^m$ (as carried out in the proof of Theorem~\ref{th:infinity_leq_three}) for the cluster $B^C$. Figure~\ref{fig:subfig-e} shows the last step of the proof. Figure~\ref{fig:subfig-f} shows how to transform the graph in Figure~\ref{fig:subfig-e} to get the graph in Figure~\ref{fig:example:2}.}
  \label{fig:all-subfigs}
\end{figure}

\newpage
\section{NeurIPS Paper Checklist}

\begin{enumerate}

\item {\bf Claims}
    \item[] Question: Do the main claims made in the abstract and introduction accurately reflect the paper's contributions and scope?
    \item[] Answer: \answerYes{}{} 
    \item[] Justification: Main claims are listed in the introduction at line 62. They correspond to the outline of the paper. 
    \item[] Guidelines:
    \begin{itemize}
        \item The answer NA means that the abstract and introduction do not include the claims made in the paper.
        \item The abstract and/or introduction should clearly state the claims made, including the contributions made in the paper and important assumptions and limitations. A No or NA answer to this question will not be perceived well by the reviewers. 
        \item The claims made should match theoretical and experimental results, and reflect how much the results can be expected to generalize to other settings. 
        \item It is fine to include aspirational goals as motivation as long as it is clear that these goals are not attained by the paper. 
    \end{itemize}

\item {\bf Limitations}
    \item[] Question: Does the paper discuss the limitations of the work performed by the authors?
    \item[] Answer: \answerYes{} 
    \item[] Justification: Limitations are discussed in the conclusion.
    \item[] Guidelines:
    \begin{itemize}
        \item The answer NA means that the paper has no limitation while the answer No means that the paper has limitations, but those are not discussed in the paper. 
        \item The authors are encouraged to create a separate "Limitations" section in their paper.
        \item The paper should point out any strong assumptions and how robust the results are to violations of these assumptions (e.g., independence assumptions, noiseless settings, model well-specification, asymptotic approximations only holding locally). The authors should reflect on how these assumptions might be violated in practice and what the implications would be.
        \item The authors should reflect on the scope of the claims made, e.g., if the approach was only tested on a few datasets or with a few runs. In general, empirical results often depend on implicit assumptions, which should be articulated.
        \item The authors should reflect on the factors that influence the performance of the approach. For example, a facial recog nition algorithm may perform poorly when image resolution is low or images are taken in low lighting. Or a speech-to-text system might not be used reliably to provide closed captions for online lectures because it fails to handle technical jargon.
        \item The authors should discuss the computational efficiency of the proposed algorithms and how they scale with dataset size.
        \item If applicable, the authors should discuss possible limitations of their approach to address problems of privacy and fairness.
        \item While the authors might fear that complete honesty about limitations might be used by reviewers as grounds for rejection, a worse outcome might be that reviewers discover limitations that aren't acknowledged in the paper. The authors should use their best judgment and recognize that individual actions in favor of transparency play an important role in developing norms that preserve the integrity of the community. Reviewers will be specifically instructed to not penalize honesty concerning limitations.
    \end{itemize}

\item {\bf Theory assumptions and proofs}
    \item[] Question: For each theoretical result, does the paper provide the full set of assumptions and a complete (and correct) proof?
    \item[] Answer: \answerYes{} 
    \item[] Justification: All assumptions are described and all proofs are given in the Appendix.
    \item[] Guidelines:
    \begin{itemize}
        \item The answer NA means that the paper does not include theoretical results. 
        \item All the theorems, formulas, and proofs in the paper should be numbered and cross-referenced.
        \item All assumptions should be clearly stated or referenced in the statement of any theorems.
        \item The proofs can either appear in the main paper or the supplemental material, but if they appear in the supplemental material, the authors are encouraged to provide a short Sketch of proof to provide intuition. 
        \item Inversely, any informal proof provided in the core of the paper should be complemented by formal proofs provided in appendix or supplemental material.
        \item Theorems and Lemmas that the proof relies upon should be properly referenced. 
    \end{itemize}

    \item {\bf Experimental result reproducibility}
    \item[] Question: Does the paper fully disclose all the information needed to reproduce the main experimental results of the paper to the extent that it affects the main claims and/or conclusions of the paper (regardless of whether the code and data are provided or not)?
    \item[] Answer: \answerNA{} 
    \item[] Justification: The paper does not include experiments.
    \item[] Guidelines:
    \begin{itemize}
        \item The answer NA means that the paper does not include experiments.
        \item If the paper includes experiments, a No answer to this question will not be perceived well by the reviewers: Making the paper reproducible is important, regardless of whether the code and data are provided or not.
        \item If the contribution is a dataset and/or model, the authors should describe the steps taken to make their results reproducible or verifiable. 
        \item Depending on the contribution, reproducibility can be accomplished in various ways. For example, if the contribution is a novel architecture, describing the architecture fully might suffice, or if the contribution is a specific model and empirical evaluation, it may be necessary to either make it possible for others to replicate the model with the same dataset, or provide access to the model. In general. releasing code and data is often one good way to accomplish this, but reproducibility can also be provided via detailed instructions for how to replicate the results, access to a hosted model (e.g., in the case of a large language model), releasing of a model checkpoint, or other means that are appropriate to the research performed.
        \item While NeurIPS does not require releasing code, the conference does require all submissions to provide some reasonable avenue for reproducibility, which may depend on the nature of the contribution. For example
        \begin{enumerate}
            \item If the contribution is primarily a new algorithm, the paper should make it clear how to reproduce that algorithm.
            \item If the contribution is primarily a new model architecture, the paper should describe the architecture clearly and fully.
            \item If the contribution is a new model (e.g., a large language model), then there should either be a way to access this model for reproducing the results or a way to reproduce the model (e.g., with an open-source dataset or instructions for how to construct the dataset).
            \item We recognize that reproducibility may be tricky in some cases, in which case authors are welcome to describe the particular way they provide for reproducibility. In the case of closed-source models, it may be that access to the model is limited in some way (e.g., to registered users), but it should be possible for other researchers to have some path to reproducing or verifying the results.
        \end{enumerate}
    \end{itemize}

\item {\bf Open access to data and code}
    \item[] Question: Does the paper provide open access to the data and code, with sufficient instructions to faithfully reproduce the main experimental results, as described in supplemental material?
    \item[] Answer: \answerNA{} 
    \item[] Justification: The paper does not include experiments requiring code.
    \item[] Guidelines:
    \begin{itemize}
        \item The answer NA means that paper does not include experiments requiring code.
        \item Please see the NeurIPS code and data submission guidelines (\url{https://nips.cc/public/guides/CodeSubmissionPolicy}) for more details.
        \item While we encourage the release of code and data, we understand that this might not be possible, so “No” is an acceptable answer. Papers cannot be rejected simply for not including code, unless this is central to the contribution (e.g., for a new open-source benchmark).
        \item The instructions should contain the exact command and environment needed to run to reproduce the results. See the NeurIPS code and data submission guidelines (\url{https://nips.cc/public/guides/CodeSubmissionPolicy}) for more details.
        \item The authors should provide instructions on data access and preparation, including how to access the raw data, preprocessed data, intermediate data, and generated data, etc.
        \item The authors should provide scripts to reproduce all experimental results for the new proposed method and baselines. If only a subset of experiments are reproducible, they should state which ones are omitted from the script and why.
        \item At submission time, to preserve anonymity, the authors should release anonymized versions (if applicable).
        \item Providing as much information as possible in supplemental material (appended to the paper) is recommended, but including URLs to data and code is permitted.
    \end{itemize}

\item {\bf Experimental setting/details}
    \item[] Question: Does the paper specify all the training and test details (e.g., data splits, hyperparameters, how they were chosen, type of optimizer, etc.) necessary to understand the results?
    \item[] Answer: \answerNA{} 
    \item[] Justification: The paper does not include experiments.
    \item[] Guidelines:
    \begin{itemize}
        \item The answer NA means that the paper does not include experiments.
        \item The experimental setting should be presented in the core of the paper to a level of detail that is necessary to appreciate the results and make sense of them.
        \item The full details can be provided either with the code, in appendix, or as supplemental material.
    \end{itemize}

\item {\bf Experiment statistical significance}
    \item[] Question: Does the paper report error bars suitably and correctly defined or other appropriate information about the statistical significance of the experiments?
    \item[] Answer: \answerNA{} 
    \item[] Justification: The paper does not include experiments.
    \item[] Guidelines:
    \begin{itemize}
        \item The answer NA means that the paper does not include experiments.
        \item The authors should answer "Yes" if the results are accompanied by error bars, confidence intervals, or statistical significance tests, at least for the experiments that support the main claims of the paper.
        \item The factors of variability that the error bars are capturing should be clearly stated (for example, train/test split, initialization, random drawing of some parameter, or overall run with given experimental conditions).
        \item The method for calculating the error bars should be explained (closed form formula, call to a library function, bootstrap, etc.)
        \item The assumptions made should be given (e.g., Normally distributed errors).
        \item It should be clear whether the error bar is the standard deviation or the standard error of the mean.
        \item It is OK to report 1-sigma error bars, but one should state it. The authors should preferably report a 2-sigma error bar than state that they have a 96\% CI, if the hypothesis of Normality of errors is not verified.
        \item For asymmetric distributions, the authors should be careful not to show in tables or figures symmetric error bars that would yield results that are out of range (e.g. negative error rates).
        \item If error bars are reported in tables or plots, The authors should explain in the text how they were calculated and reference the corresponding figures or tables in the text.
    \end{itemize}

\item {\bf Experiments compute resources}
    \item[] Question: For each experiment, does the paper provide sufficient information on the computer resources (type of compute workers, memory, time of execution) needed to reproduce the experiments?
    \item[] Answer: \answerNA{} 
    \item[] Justification: The paper does not include experiments.
    \item[] Guidelines:
    \begin{itemize}
        \item The answer NA means that the paper does not include experiments.
        \item The paper should indicate the type of compute workers CPU or GPU, internal cluster, or cloud provider, including relevant memory and storage.
        \item The paper should provide the amount of compute required for each of the individual experimental runs as well as estimate the total compute. 
        \item The paper should disclose whether the full research project required more compute than the experiments reported in the paper (e.g., preliminary or failed experiments that didn't make it into the paper). 
    \end{itemize}
    
\item {\bf Code of ethics}
    \item[] Question: Does the research conducted in the paper conform, in every respect, with the NeurIPS Code of Ethics \url{https://neurips.cc/public/EthicsGuidelines}?
    \item[] Answer: \answerYes{} 
    \item[] Justification: The paper conform, in every respect, with the NeurIPS Code of Ethics
    \item[] Guidelines:
    \begin{itemize}
        \item The answer NA means that the authors have not reviewed the NeurIPS Code of Ethics.
        \item If the authors answer No, they should explain the special circumstances that require a deviation from the Code of Ethics.
        \item The authors should make sure to preserve anonymity (e.g., if there is a special consideration due to laws or regulations in their jurisdiction).
    \end{itemize}

\item {\bf Broader impacts}
    \item[] Question: Does the paper discuss both potential positive societal impacts and negative societal impacts of the work performed?
    \item[] Answer: \answerNA{} 
    \item[] Justification: The paper is a theoretical study of completeness of calculus. 
    \item[] Guidelines:
    \begin{itemize}
        \item The answer NA means that there is no societal impact of the work performed.
        \item If the authors answer NA or No, they should explain why their work has no societal impact or why the paper does not address societal impact.
        \item Examples of negative societal impacts include potential malicious or unintended uses (e.g., disinformation, generating fake profiles, surveillance), fairness considerations (e.g., deployment of technologies that could make decisions that unfairly impact specific groups), privacy considerations, and security considerations.
        \item The conference expects that many papers will be foundational research and not tied to particular applications, let alone deployments. However, if there is a direct path to any negative applications, the authors should point it out. For example, it is legitimate to point out that an improvement in the quality of generative models could be used to generate deepfakes for disinformation. On the other hand, it is not needed to point out that a generic algorithm for optimizing neural networks could enable people to train models that generate Deepfakes faster.
        \item The authors should consider possible harms that could arise when the technology is being used as intended and functioning correctly, harms that could arise when the technology is being used as intended but gives incorrect results, and harms following from (intentional or unintentional) misuse of the technology.
        \item If there are negative societal impacts, the authors could also discuss possible mitigation strategies (e.g., gated release of models, providing defenses in addition to attacks, mechanisms for monitoring misuse, mechanisms to monitor how a system learns from feedback over time, improving the efficiency and accessibility of ML).
    \end{itemize}
    
\item {\bf Safeguards}
    \item[] Question: Does the paper describe safeguards that have been put in place for responsible release of data or models that have a high risk for misuse (e.g., pretrained language models, image generators, or scraped datasets)?
    \item[] Answer: \answerNA{} 
    \item[] Justification: The paper poses no such risks.
    \item[] Guidelines:
    \begin{itemize}
        \item The answer NA means that the paper poses no such risks.
        \item Released models that have a high risk for misuse or dual-use should be released with necessary safeguards to allow for controlled use of the model, for example by requiring that users adhere to usage guidelines or restrictions to access the model or implementing safety filters. 
        \item Datasets that have been scraped from the Internet could pose safety risks. The authors should describe how they avoided releasing unsafe images.
        \item We recognize that providing effective safeguards is challenging, and many papers do not require this, but we encourage authors to take this into account and make a best faith effort.
    \end{itemize}

\item {\bf Licenses for existing assets}
    \item[] Question: Are the creators or original owners of assets (e.g., code, data, models), used in the paper, properly credited and are the license and terms of use explicitly mentioned and properly respected?
    \item[] Answer: \answerNA{} 
    \item[] Justification: The paper does not use existing assets.
    \item[] Guidelines:
    \begin{itemize}
        \item The answer NA means that the paper does not use existing assets.
        \item The authors should cite the original paper that produced the code package or dataset.
        \item The authors should state which version of the asset is used and, if possible, include a URL.
        \item The name of the license (e.g., CC-BY 4.0) should be included for each asset.
        \item For scraped data from a particular source (e.g., website), the copyright and terms of service of that source should be provided.
        \item If assets are released, the license, copyright information, and terms of use in the package should be provided. For popular datasets, \url{paperswithcode.com/datasets} has curated licenses for some datasets. Their licensing guide can help determine the license of a dataset.
        \item For existing datasets that are re-packaged, both the original license and the license of the derived asset (if it has changed) should be provided.
        \item If this information is not available online, the authors are encouraged to reach out to the asset's creators.
    \end{itemize}

\item {\bf New assets}
    \item[] Question: Are new assets introduced in the paper well documented and is the documentation provided alongside the assets?
    \item[] Answer: \answerNA{} 
    \item[] Justification: The paper does not release new assets.
    \item[] Guidelines:
    \begin{itemize}
        \item The answer NA means that the paper does not release new assets.
        \item Researchers should communicate the details of the dataset/code/model as part of their submissions via structured templates. This includes details about training, license, limitations, etc. 
        \item The paper should discuss whether and how consent was obtained from people whose asset is used.
        \item At submission time, remember to anonymize your assets (if applicable). You can either create an anonymized URL or include an anonymized zip file.
    \end{itemize}

\item {\bf Crowdsourcing and research with human subjects}
    \item[] Question: For crowdsourcing experiments and research with human subjects, does the paper include the full text of instructions given to participants and screenshots, if applicable, as well as details about compensation (if any)? 
    \item[] Answer: \answerNA{} 
    \item[] Justification: The paper does not involve crowdsourcing nor research with human subjects.
    \item[] Guidelines:
    \begin{itemize}
        \item The answer NA means that the paper does not involve crowdsourcing nor research with human subjects.
        \item Including this information in the supplemental material is fine, but if the main contribution of the paper involves human subjects, then as much detail as possible should be included in the main paper. 
        \item According to the NeurIPS Code of Ethics, workers involved in data collection, curation, or other labor should be paid at least the minimum wage in the country of the data collector. 
    \end{itemize}

\item {\bf Institutional review board (IRB) approvals or equivalent for research with human subjects}
    \item[] Question: Does the paper describe potential risks incurred by study participants, whether such risks were disclosed to the subjects, and whether Institutional Review Board (IRB) approvals (or an equivalent approval/review based on the requirements of your country or institution) were obtained?
    \item[] Answer: \answerNA{} 
    \item[] Justification: The paper does not involve crowdsourcing nor research with human subjects.
    \item[] Guidelines:
    \begin{itemize}
        \item The answer NA means that the paper does not involve crowdsourcing nor research with human subjects.
        \item Depending on the country in which research is conducted, IRB approval (or equivalent) may be required for any human subjects research. If you obtained IRB approval, you should clearly state this in the paper. 
        \item We recognize that the procedures for this may vary significantly between institutions and locations, and we expect authors to adhere to the NeurIPS Code of Ethics and the guidelines for their institution. 
        \item For initial submissions, do not include any information that would break anonymity (if applicable), such as the institution conducting the review.
    \end{itemize}

\item {\bf Declaration of LLM usage}
    \item[] Question: Does the paper describe the usage of LLMs if it is an important, original, or non-standard component of the core methods in this research? Note that if the LLM is used only for writing, editing, or formatting purposes and does not impact the core methodology, scientific rigorousness, or originality of the research, declaration is not required.
    \item[] Answer: \answerNA{} 
    \item[] Justification: The core method development in this research does not involve LLMs as any important, original, or non-standard components.
    \item[] Guidelines:
    \begin{itemize}
        \item The answer NA means that the core method development in this research does not involve LLMs as any important, original, or non-standard components.
        \item Please refer to our LLM policy (\url{https://neurips.cc/Conferences/2025/LLM}) for what should or should not be described.
    \end{itemize}

\end{enumerate}

\end{document}